\newtheorem{thm}{Theorem}
\newtheorem{cor}{Corollary}
\newtheorem{lem}{Lemma}
\newtheorem{assumption}{Assumption}
\newtheorem{definition}{Definition}
\newtheorem{condition}{Condition}
\newcommand{\rev}[1]{{\color{blue}#1}}
\newcommand{\rev}[1]{#1}
\newcommand{\pluseq}{\mathrel{+}=}
\DeclareMathOperator*{\argmax}{arg\,max}
\title{Neural Combinatorial Clustered Bandits for Recommendation Systems}
\author{
    Baran Atalar\textsuperscript{\rm 1}, Carlee Joe-Wong\textsuperscript{\rm 1}\\
}
\begin{document}

\maketitle

\begin{abstract}
We consider the contextual combinatorial bandit setting where in each round, the learning agent, e.g., a recommender system, selects a subset of ``arms,'' e.g., products, and observes rewards for both the individual base arms, which are a function of known features (called ``context''), and the super arm (the subset of arms), which is a function of the base arm rewards. 
  The agent's goal is to simultaneously learn the unknown reward functions and choose the highest-reward arms. For example, the ``reward'' may represent a user's probability of clicking on one of the recommended products. Conventional bandit models, however, employ restrictive reward function models in order to obtain performance guarantees. We make use of deep neural networks to estimate and learn the unknown reward functions and propose Neural UCB Clustering (NeUClust), which adopts a clustering approach to select the super arm in every round by exploiting underlying structure in the context space. 
  Unlike prior neural bandit works, NeUClust uses a neural network to estimate the super arm reward and select the super arm, thus eliminating the need for a known optimization oracle. We non-trivially extend prior neural combinatorial bandit works to prove that NeUClust achieves $\widetilde{O}\left(\widetilde{d}\sqrt{T}\right)$ regret, where $\widetilde{d}$ is the effective dimension of a neural tangent kernel matrix, $T$ the number of rounds. Experiments on real world recommendation datasets show that NeUClust achieves better regret and reward than other contextual combinatorial and neural bandit algorithms.
\end{abstract}

%

\section{Introduction}
The contextual combinatorial bandit (CC-MAB) setting has been applied to content recommendation such as movies \cite{Qin2014ContextualCB}, where we recommend multiple movies to a user or select multiple users to whom a movie will be recommended. 
In this application, an agent is confronted with a set of choices, or ``arms,'' e.g., a set of movies, each of which will yield an unknown reward that depends on a known context, e.g.,  movie genre. Over the course of several rounds, the agent must learn to select the ``best'' arms.
%
We consider the usual semi-bandit feedback setting where in each round, the user receives the individual (base arm) rewards of the arms selected, which are functions of their contexts, as well as a total (super arm) reward, which is a function of the base arm rewards. 
However, both the base arm and the super arm reward functions are stochastic and unknown to the user, and we aim to learn these reward functions in an online manner in order to select the arms which would yield the highest rewards for a given context. \rev{The main challenge in the CC-MAB and bandits setting in general is to \textit{balance between exploiting arms that we already know to be good and exploring new arms} that might be better. A relatively new approach to the exploration-exploitation tradeoff has been to use neural networks to learn the base arm reward functions}.

\textbf{Drawbacks of existing formulations.} Modeling reward functions with neural networks introduces greater expressibility into the rewards learned, but existing neural and non-neural combinatorial bandit works generally require the existence of an exact or approximation oracle \cite{pmlr-v202-hwang23a,elahi2023contextual,pmlr-v28-chen13a,chen2018ccmab}, which selects the super arm after being given as input the estimated reward/quality of each base arm. This oracle, however, may be difficult to derive a priori, before the super arm reward function is known. Indeed, many combinatorial bandit problems are NP-hard and may not even have known approximation algorithms~\cite{kong2021hardness}. We \textbf{eliminate the need for an oracle} by leveraging the structure of recommendation problems. In particular, we recognize that \textit{the context space of many recommendation problems has a clustered structure}. Thus, we can cluster arms by clustering their contexts and use the resulting groupings, aided by an additional neural network-based estimation of super arm rewards, to guide our search for the optimal super arm. 

\textbf{Challenges of our approach.} To estimate the performance of the clusters, we need an expression for the super arm reward for the base arms in each cluster. We do so by again using a neural network to model the super arm reward and taking advantage of the recommendation problem structure: super arm reward functions in recommendation settings are often \textit{monotonic}, since users are more likely to prefer a group of recommended products if they like each individual product. Thus, we can utilize a monotonic neural network to estimate the super arm reward. One technical challenge is incorporating this structure into a theoretical regret bound: by using a second neural network for the super arm selection, we cannot use existing proof approaches that use an oracle~\cite{pmlr-v202-hwang23a}. Another challenge is accounting for the effect of the error in clustering the context space on the regret bound. 

\subsection{Related Work}
 There have been some works in the bandit literature adaptively cluster users (arms) such as \cite{pmlr-v32-gentile14}, \cite{bui2012clustered} but their problem setting is not of a combinatorial nature and thus cannot be directly applied to our setting. Most commonly, prior works 
 make certain \textbf{parametric assumptions} about the model to be learned such as linearity with respect to the context \cite{10.5555/3045118.3045237,cascading_zong,linucb}. 
%
Towards more general reward models, some \textbf{discretization based approaches} use fixed or adaptive discretization \cite{chen2018ccmab,pmlr-v108-nika20a} of the context space to exploit similarity of context features. However, such methods could scale poorly to large context spaces, and the difficulty of the problem highly depends on the complexity of the super arm reward model. 

More recently, there has been a growing literature of using \textbf{neural networks} to facilitate learning in the bandit setting. These advancements exploit recent results on the generalization of deep neural networks \cite{ntkjacot,arora2019,cao2019} and require no parametric assumptions on the reward as a function of the context other than an upper bound \cite{neuralucbzhou20a,zhang2021neural}. 
Yet while the field of neural contextual bandits is relatively well explored, to our knowledge, there is only one neural contextual combinatorial bandit study in the literature with regret bounds \cite{pmlr-v202-hwang23a}. However, unlike our work they only use one neural network to learn the base arm reward function and rely solely on an exact oracle to do super arm selections. 
\cite{multifacet} chooses multiple arms in each round, but 
their setting is quite different since they assume $K$ bandits and select an arm from each bandit in every round. 

\subsection{Our Contributions}

In this work, we propose a \textbf{provably efficient contextual combinatorial neural bandit algorithm} while making mild assumptions about the reward functions. 
Our proposed algorithm makes use of two neural networks to learn the base arm and super arm reward functions, as shown in Figure~\ref{fig:motivation}.
We formulate our problem in the Problem Formulation Section and then make the following \textbf{contributions:}
%
\begin{itemize}
    \item We propose a \textbf{neural contextual combinatorial clustered bandit algorithm} (NeUClust) in the Proposed NeUClust Algorithm Section. To the best of our knowledge, it is the first to use neural networks to learn the base arm and super arm reward functions while making use of clustering to exploit the underlying structure of the context space to guide super arm selections.
    \item We \textbf{prove} (in Regret Analysis and Proofs) that our proposed algorithm achieves $\widetilde{O}\left(\widetilde{d}\sqrt{T}\right)$ regret over $T$ rounds, where $\widetilde{d}$ is the effective dimension of a neural tangent kernel matrix.
    \item We show through \textbf{experimental results} on real world recommendation datasets (MovieLens, Yelp) that our algorithm outperforms state-of-the-art neural contextual/combinatorial and CC-MAB algorithms (Experiments).
\end{itemize}

\section{Problem Formulation}\label{sec:formulation}

We first introduce some useful mathematical notation before outlining our CC-MAB model, the neural network models we employ for the reward functions, and our clustering-based method for selecting arms in each round.
%
\begin{figure}[t]
\centering
\includegraphics[width=\linewidth]{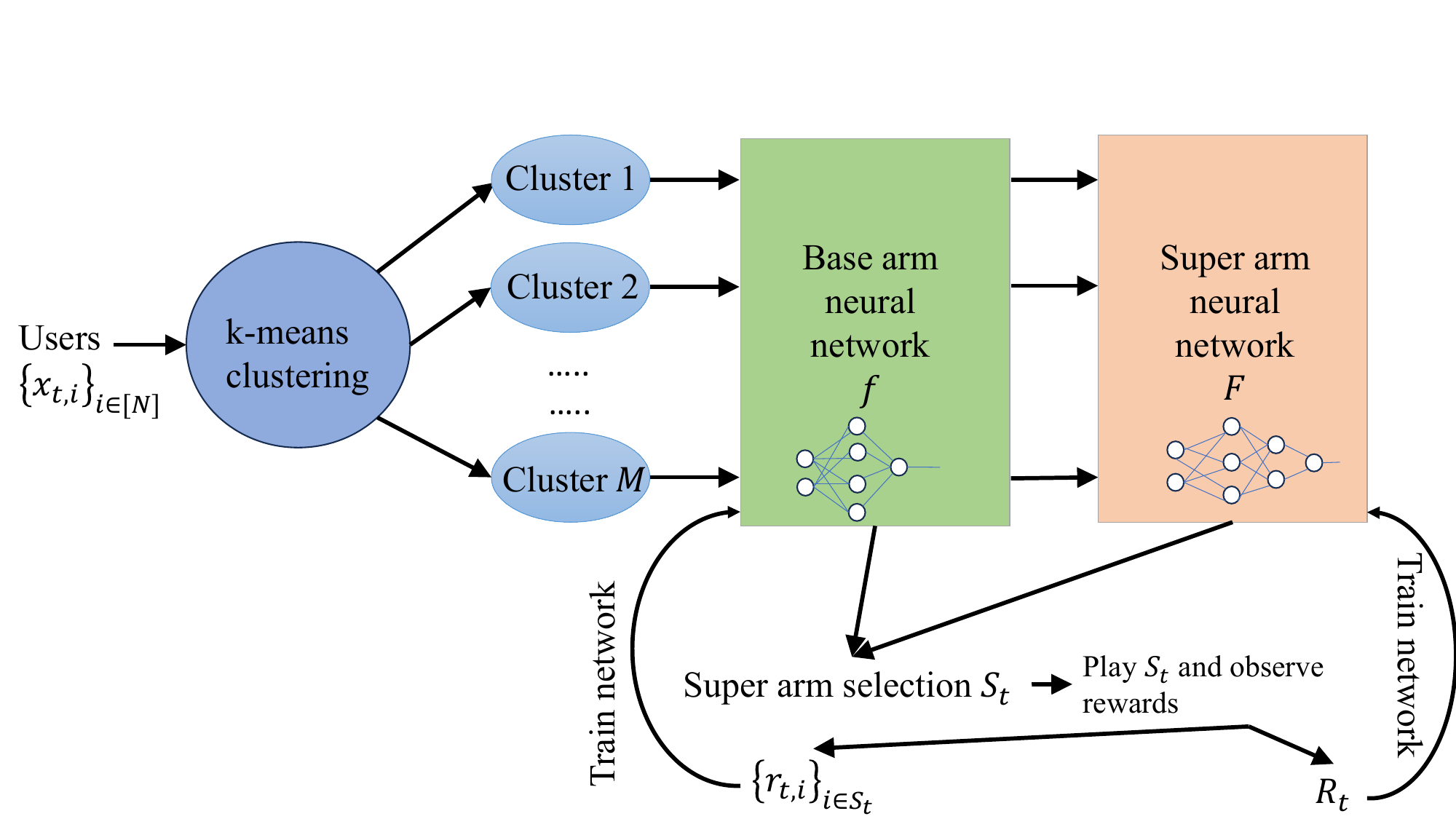}
\caption{Our neural contextual combinatorial bandit formulation, with the online feedback loop for arm selection.}
\label{fig:motivation}
\end{figure}
For a vector $\mathbf{x}\in\mathbb{R}^d$, we represent its $\ell_2$ norm by $\|\mathbf{x}\|_2$, $\ell_1$ norm by $\|\mathbf{x}\|_1$, $\ell_0$ norm by $\|\mathbf{x}\|_0$, and transpose by $\mathbf{x}^T$. The $\ell_2$ norm by a positive definite matrix $\mathbf{A}$ is defined by $\|\mathbf{x}\|_\mathbf{A} := \sqrt{\mathbf{x}^T\mathbf{A}\mathbf{x}}$. We denote by $[N]$, $N \in \mathbb{Z}^+$, the set $\left\{1,2,\ldots,N\right\}$. We let $\mathbf{1}^K = (1, 1, ..., 1) \in \mathbb{R}^K$ denote the ones vector.

\subsection{Contextual Combinatorial Bandit Setting}
In this paper, we consider a contextual combinatorial bandit with $N$ being the total number of \textbf{arms} and $T$ the total number of \textbf{rounds}. At the start of every round $t$, the agent observes the \textbf{context} vectors of all arms, denoted by $\{\mathbf{x}_{t,i} \in \mathbb{R}^d$ $ | $ $i \in [N] \}$, and chooses a subset of the available arms, namely a \textbf{super arm} $S_t \subset [N]$ with a fixed size, or budget, $K$, i.e., $|S_t|=K$. We denote by $\mathcal{S}$ the set of all feasible super arms with cardinality $K$, i.e., $\mathcal{S} = \{ S \subset [N] $ $ |  $ $ |S|=K\}$. In the application of movie recommendation, for example, the base arms could correspond to users where there is an incoming movie at every round which we would like to recommend to a subset of users. The context would correspond to past information about the ratings the users gave to movies and the genres of the movies the users rated.

When a super arm $S_t \in \mathcal{S}$ is selected in round $t$, the agent observes the \textbf{base arm rewards} of the chosen super arm, namely $\{ r_{t,i}\}_{i \in S_t}$ and receives a \textbf{total (super arm) reward} of $R(S_t,\mathbf{r}_t)$ where $\mathbf{r}_t = [r_{t,i}]_{i \in S_t}$ hence the super arm reward is a function of the individual base arm rewards. This type of bandit feedback at every round is often referred to as \textbf{semi-bandit feedback}, which is common in combinatorial bandits \cite{audibert2013regret}. In the movie recommendation example setup, the base arm and super arm rewards would correspond to the rating given by one user and the collection of users we selected, which would be information that would be available to us. We use these observed super arm and base arm rewards in every round to train our neural networks as explained in more detail in the Proposed NeUClust Algorithm Section. We assume that the base arm rewards $r_{t,i}$ are generated as follows:
\begin{align}
r_{t,i} = h(\mathbf{x}_{t,i}) + \xi_{t,i}
\label{eq:reward}
\end{align}
$ \forall t \in [T]$ and $i\in[N]$ where $h$ is an unknown function satisfying $0 \leq h(\mathbf{x}) \leq 1$ and $\xi_{t,i}$ is $\zeta$-sub-Gaussian noise satisfying $\mathbb{E}[\xi_{t,i}|\mathbf{x}_{1,S_1},...,\mathbf{x}_{t-1,S_{t-1}}]=0$ where $S_t \in \mathcal{S}$  denotes the selected super arm at round $t$. This is a standard assumption for the stochastic bandit setting. \rev{
In our movie recommendation setting, (\ref{eq:reward}) models the fact that the rating that a user gives to a movie mainly depends on the user's genre preferences, which have some uncertainty associated with it 
 as movie genre is not the sole factor influencing how much a user likes a movie.}

\subsection{Base Arm Reward Function}
To learn the base arm reward function $h$, we use a fully connected neural network with depth $L+1\geq 3$ defined as follows \cite{neuralucbzhou20a,zhang2021neural}:
\begin{align}
f(\mathbf{x};\boldsymbol{\theta}) = \sqrt{m}\mathbf{W}_{L}\sigma(\mathbf{W}_{L-1}\sigma(...\sigma(\mathbf{W}_0\mathbf{x}))
\end{align}
where $\sigma(x) = $ max$\{x,0\}$ is the rectified linear unit (ReLU) activation function and $\boldsymbol{\theta} = [$vec$(\mathbf{W}_0)^T,...,$vec$(\mathbf{W}_{L})^T]^T \in \mathbb{R}^p$ is the weight vector of the neural network where $p=md+m^2(L-1)+m$ where $m$ is the width of the hidden layers and for simplicity of analysis we assume the width is the same for every layer. We assume a depth of $L+1$ to ensure that we have $L$ hidden layers. This will simplify the notation in our later theoretical analysis. 
To denote the gradient of the neural network we use $\mathbf{g}(\mathbf{x};\boldsymbol{\theta}) = \nabla_{\boldsymbol{\theta}}f(\mathbf{x};\boldsymbol{\theta}) \in \mathbb{R}^p$. We make the following assumption about the base arm reward function $h$, which makes sense in many recommendation applications since the base arm reward function takes as input the average preference vector of a user according to the genres and the genre vector of a movie to output a rating. Hence if a user has a higher preference towards the genres of the movie, then the rating should also increase accordingly.

\begin{assumption} \label{continuity_h}
(Lipschitz Continuity) ${h}(\mathbf{x}_{t,i})$ is Lipschitz continuous with respect to the context vector $\mathbf{x}_{t,i}$ of arm $i$, i.e. $\exists B' > 0$ such that for any $\mathbf{x}_{t,i}, \mathbf{x}_{t,i'}$ it holds that $|{h}(\mathbf{x}_{t,i})-{h}(\mathbf{x}_{t,i'}) |\leq B' \left \lVert \mathbf{x}_{t,i}-\mathbf{x}_{t,i'} \right \rVert_1$.
\end{assumption}

\subsection{Super Arm Reward Function}
The super arm reward function $R(S,\mathbf{r})$ is a function of the selected arms' rewards and gives an understanding about how good the selected arm combination is. Let us denote the base arm rewards vector of a super arm $S$ in round $t$ with context vector $\mathbf{x}_{t,S} = [\mathbf{x}_{t,i}]_{i \in S}$ by $\mathbf{r_t}$, 
and we denote the expected super arm reward by $\mathbb{E}[R(S_t,\mathbf{r}_t)] = u(S_t,\mathbf{h}(\mathbf{x}_{t,S_t}))$ where $\mathbf{h}(\mathbf{x}_{t,S_t})=[h(\mathbf{x}_{t,i})]_{i \in S}$. In this work the super arm reward function can be any function which satisfies the following mild assumptions, which are common in the combinatorial bandit setting \cite{pmlr-v48-lif16,Qin2014ContextualCB,pmlr-v108-nika20a}.
\begin{assumption} \label{continuity_u}
(Lipschitz Continuity) $u(S,\mathbf{h})$ is Lipschitz continuous with respect to the expected base arm rewards $\mathbf{h}$, 
i.e. $\exists \ B > 0$ such that for any $\mathbf{h}, \mathbf{h'}$ it holds that $|u(S,\mathbf{h})-u(S,\mathbf{h'})|\leq B {\sum_{i\in S}|h_i-h'_i|}$.
\end{assumption}
\begin{assumption} \label{monotonicity_u}
(Monotonicity)  $u(S,\mathbf{h})$ is monotone non-decreasing with respect to the expected base arm reward vector $\mathbf{h}$ so that for any possible $S$ when $h_i \leq h'_i,  \forall i\in [N]$ then $u(S,\mathbf{h})\leq u(S,\mathbf{h'})$. 
\end{assumption}


It should be noted that \textit{these assumptions do not restrict the generalizability of our work}, since they hold in many real life applications of combinatorial bandits. \rev{For example in the previously mentioned movie recommendation setup, the super arm reward function could be the average rating given by the users to the movie that was recommended, which is then monotonic and Lipschitz continuous in each individual rating.} Moreover, \textit{we do not require an oracle that knows how to select arms which would maximize the super arm reward function}: as explained in Algorithm \ref{Algortihm 1}, we learn this through clustering and using the monotonicity assumption. 

To learn the super arm reward function $R(S,\mathbf{r})$, we use a structurally constrained neural network, inspired by the approach in MonoNet \cite{nguyen2023mononet} which is composed of monotonically connected layers, ensuring a monotonic relationship between the inputs and the output. Let us denote layer $k$ of the network by $\mathbf{o}^{(k)}$ for $k=0,1,...,L_m$ and hence we can write 
$\mathbf{o}^{(k+1)} = \sigma(\boldsymbol{\Theta}^{(k)}\mathbf{o}^{(k)}+\mathbf{b}^{(k)})$,
where $\boldsymbol{\Theta}^{(k)}$ is the weight matrix and $\mathbf{b}^{(k)}$ is the bias. The idea to form a monotonic layer is to transform the weight matrix such that it only has non-negative entries, since this will ensure a monotonic relationship between the input and output of that layer. Hence for a monotonic layer we have: 
\begin{align}
\mathbf{o}^{(k+1)} = \sigma(q(\boldsymbol{\Theta}^{(k)})\mathbf{o}^{(k)}+\mathbf{b}^{(k)})   
\end{align}
 where $q$ is the weight transform function applied element wise to every entry of the weight matrix and could be
 any function with nonnegative outputs. We use a neural network with same width for every layer $n$ and with depth $L_m+1$. By structuring the super arm network so as to guarantee monotonicity, we ensure that the monotonicity assumption holds and \textit{allow our theoretical guarantees to hold} as well. 

\subsection{Regret Formulation}
The main objective of the agent in this setting is to minimize the \textbf{cumulative expected super arm regret}, which is the standard performance metric for CC-MAB problems and measures the gap in reward between the optimal and selected arms. 
The regret is formally defined as follows: 
\begin{align}\label{regr_form}
\mathcal{R}(T) &=  
 \sum_{t=1}^{T} ( u(S_t^*,\mathbf{h}(\mathbf{x}_{t,S_t^*})))  -u(S_t,\mathbf{h}(\mathbf{x}_{t,S_t}))) 
\end{align}

\section{Proposed NeUClust Algorithm} \label{sec:algo}
In this section, we present our algorithm Neural UCB Clustering (NeUClust). 
 NeUClust is a neural network based contextual combinatorial upper confidence bound bandit algorithm that makes use of two neural networks to learn the base arm reward function and the super arm reward function. 
However, since we do not utilize an oracle, \textit{choosing a super arm based on the optimism in the face of uncertainty (OIFU) principle \cite{lai1985asymptotically} with the super arm network is difficult}. We overcome this challenge by exploiting our monotonicity assumption and \textit{clustering} the context space. Intuitively, base arms with similar contexts should yield similar rewards, and thus are similarly likely to be part of the optimal super arm. In some settings, we can cluster the context space offline, e.g., we may have sufficient information about users movie genre preference to cluster users offline, but in other settings we may need to collect context observations while running NeUClust. 
We thus propose online and offline variants of our NeUClust algorithm; we present the online version in this section.



\noindent \textbf{NeUClust description.} Algorithm \ref{Algortihm 1} shows NeUClust's pseudocode (in Appendix). 
We first initialize the parameters of the first neural network, which aims to learn the base arm reward function, by randomly generating $\boldsymbol{\theta}_0 = [$vec$(\mathbf{W}_0)^T,...,$vec$(\mathbf{W}_{L})^T]^T$, where for each $\ell \in [L]$, $\mathbf{W}_\ell = (\mathbf{W},0;0,\mathbf{W})$ where each entry of $\mathbf{W}$ is generated independently from $\mathcal{N}(0,4/m)$ and $\mathbf{W}_{L} = (\mathbf{w}^T,-\mathbf{w}^T)$ with each entry of $\mathbf{w}$ independently sampled from $\mathcal{N}(0,2/m)$. We initialize the parameters of the second neural network, which tries to learn the super arm reward function,  by independently sampling each entry from $\mathcal{N}(1/n,1)$ where $n$ is the width of the network. 

At the start of each round $t \in [T]$, the agent observes the contexts of all arms $\{\mathbf{x}_{t,i}\}_{i \in [N] }$ and clusters the arms based on their contexts. NeUClust then uses the context information, and the output of the base and super arm neural networks and its gradients to construct an upper confidence bound (UCB) $v_{t,c_j}$ of the expected reward for every arm $c_j$ in every cluster $c$. We then select the top $K$ arms of each cluster based on the arm UCBs. The outputs of the base arm network (i.e., estimates of base arm reward) are fed through a ReLU function, to ensure they are nonnegative, and then taken as input to the second network to estimate the super arm reward. Then the upper confidence bound vector $\mathbf{V}_t$ is used to select the cluster which has the highest average UCB of its corresponding arms. 
Then the top $K$ arms of the chosen cluster are played as  $S_t$ and the base arm rewards $\{r_{t,i}\}_{i \in S_t}$ and  super arm reward $R_t$ are observed. 

Once the rewards $\{r_{t,i}\}_{i \in S_t}$ and $R_t$ are observed, the context vector is updated according to the observed base arm rewards of the played arms. The neural network parameters $\boldsymbol{\theta}_t$ and $\boldsymbol{\Theta}_t$ are updated by using gradient descent with step size $\eta_1$ and $\eta_2$ for $J$ iterations to minimize the loss functions: 
\begin{align*}
    &\mathcal{L}(\boldsymbol{\theta}) = \frac{1}{2} \sum_{t'=1}^{t}\left \lVert \mathbf{f}(\mathbf{x}_{t',S_{t'}};\boldsymbol{\theta})-\mathbf{r}_{t'})\right \rVert_2^2 + \frac{m\lambda_1}{2} \left \lVert \boldsymbol{\theta}-\boldsymbol{\theta}_0\right \rVert_2^2\\
    &\mathcal{L}(\boldsymbol{\Theta}) = \sum_{t'=1}^{t}\frac{({F}(\mathbf{f}(\mathbf{x}_{t',S_{t'}});\boldsymbol{\Theta})-{R}_{t'}))^2 + n\lambda_2\left \lVert \boldsymbol{\Theta}-\boldsymbol{\Theta}_0 \right \rVert_2^2}{2} 
\end{align*}
\noindent where $\mathbf{f}(\mathbf{x}_{t',S_{t'}};\boldsymbol{\theta})=[f(\mathbf{x}_{t,i})]_{i\in S_{t'}}$, $\mathcal{L}(\boldsymbol{\theta})$ shows the loss minimized using $l_2$-regularization for the base arm neural network and $\mathcal{L}(\boldsymbol{\Theta})$ is the loss minimized for the super arm neural network. $F$ represents the super arm network here and $R_{t'}$ is the observed super arm reward of the selected super arm $S_{t'}$ and $\mathbf{r}_{t'}$ represents the vector of observed base arm rewards of the selected super arm. The hyperparameters $\lambda_1$ and $\lambda_2$ adjust for the level of regularization which centers at the randomly initialized weight vectors. We define the positive exploration scaling factor $\gamma_t$ similar to \cite{pmlr-v202-hwang23a,neuralucbzhou20a} as
\begin{align}
   \gamma_t = \Gamma_{1,t} \left(\zeta\sqrt{\log\frac{\det \mathbf{Z}_t}{\det \lambda_1 \mathbf{I}} + \Gamma_{2,t} - 2\log \Delta} + \sqrt{\lambda_1 }S\right) + \nonumber\\
    (\lambda_1 + C_1 t K L) \left((1 - \eta_1 m \lambda_1)^{\frac{J}{2}} \sqrt{\frac{tK}{\lambda_1}} + \Gamma_{3,t} \right) 
\label{eq_8}    
\end{align}
for some $\Gamma_{1,t},\Gamma_{2,t},\Gamma_{3,t}>0$ (defined later in Appendix).


\textbf{Super arm selection without an oracle.} As discussed above, our super arm selection first finds the optimal cluster (in terms of the super arm reward) and then selects the $K$ arms with largest base arm rewards from that cluster. By monotonicity of the super arm reward, we thus maximize the super arm reward subject to the constraint that all base arms belong to the chosen cluster. For example, if a user generally likes movies of a particular genre, or more generally prefers a certain kind of product, our clustering should group movies of similar genres (products of similar type) together and then recommend them to the user. While restricting the selected base arms to a specific cluster may lead to a suboptimal reward, experimental results show that NeUClust outperforms neural combinatorial bandit algorithms that do not have this clustering restriction: such algorithms generally impose other restrictions on the super arm reward, e.g., the presence of an oracle. Moreover, we experimentally verify that real-world recommendation datasets naturally have a clustered structure in the Appendix.

For the \textbf{offline variant} of the algorithm, we do clustering only once at round $t = 1$ and do not update the context vector after observing the base arm rewards of the arms that were played. This algorithm would work well when the contexts have been formed with an abundance of prior offline data. For example for movie recommendation, this would mean that the context vector which represents the average rating the user gave to each genre is formed with a sufficient number of already available ratings from that user. 

\begin{algorithm}[b]
\caption{NeUClust (Online)}
\begin{algorithmic}[1]
\STATE\textbf{Input: }{Number of rounds $T$, regularization parameters $\lambda_1$ and $\lambda_2$, step sizes $\eta_1$ and $\eta_2$, number of gradient descent steps $J$, network widths $m$ and $n$, network depths $L$ and $L_m$, number of clusters $M$, maximum number of iterations of clustering $i_c$, size of super arm $K$, norm parameter $S$.}
\STATE Randomly initialize $\boldsymbol{\theta}_0$ and $\boldsymbol{\Theta}_0$ as described previously and $\mathbf{Z}_0 = \lambda_1 \mathbf{I}$ \label{lst:line:initialization}
\FOR{$t = 1, \ldots, T$}
\STATE Observe $\{\mathbf{x}_{t,i}\}_{i \in [N] }$
    \STATE Run $k$-means clustering on the observed contexts $\{\mathbf{x}_{t,i}\}_{i \in [N] }$ with $k=M$ and maximum number of iterations $i_c$
    \FOR{$c = 1, \ldots, M$}
    \FOR{$j \in c $}

    \STATE $v_{t,c_j} = f(\mathbf{x}_{t,c_j};\boldsymbol{\theta}_{t-1}) + \gamma_t || \mathbf{g}(\mathbf{x}_{t,c_j};\boldsymbol{\theta}_{t-1})/ \sqrt{m} || _{\mathbf{Z}_{t-1}^{-1}}$ 
    \ENDFOR
    
    \STATE Within cluster $c$, find top $K$ cluster elements with the highest $v$ value, $\mathbf{b}_c \in \mathbb{R}^K$ vector includes the indices of these elements 
    \STATE Initialize $\mathbf{V}_t = (0,0,...,0) \in \mathbb{R}^{M}$
    \FOR{$a \in \mathbf{b}_c$}
    \STATE $V_{t,c} \pluseq v_{t,c_a} + F(\mathbf{f}(\mathbf{x}_{t,\mathbf{b}_c};\boldsymbol{\theta}_{t-1}))$
    \ENDFOR
    
    \ENDFOR
    \STATE $c' = \argmax_c(\mathbf{V}_{t})$
    \STATE $S_t = \mathbf{b}_{c'} $

\STATE Play super arm $S_t$ and observe base arm rewards $\{r_{t,i}\}_{i \in S_t}$ and super arm reward $R_t$

    \STATE Update context vector $\{\mathbf{x}_{t,i}\}_{i \in S_t }$ of the arms played by using the observed base arm rewards 
    
\STATE Update $\mathbf{Z}_t = \mathbf{Z}_{t-1} + \sum_{i \in S_t}\mathbf{g}(\mathbf{x}_{t,i};\boldsymbol{\theta}_{t-1})\mathbf{g}(\mathbf{x}_{t,i};\boldsymbol{\theta}_{t-1})^T/m$
    \STATE Update weights $\boldsymbol{\theta}_t$ by minimizing the loss $\mathcal{L}(\boldsymbol{\theta})$ using gradient descent with step size $\eta_1$ for $J$ iterations.
    \STATE Update weights $\boldsymbol{\Theta}_t$ by minimizing the loss $\mathcal{L}(\boldsymbol{\Theta})$ using gradient descent with step size $\eta_2$ for $J$ iterations.
    \STATE Update $\gamma_t$ as explained in Eq. \ref{eq_8} 
\ENDFOR

\end{algorithmic}
\label{Algortihm 1}
\end{algorithm}

\section{Regret Analysis} \label{sec:regret_analysis}
In this section we analyze and derive the regret of the presented algorithm NeUClust in Algorithm 1. We denote by $\{\mathbf{x}^{i}\}_{i = 1}^{TN}$ the collection of all the contexts $\{\mathbf{x}_{1,1},\ldots,\mathbf{x}_{T,N}\}$. 

\begin{definition}
\cite{ntkjacot,cao2019} For a given set of contexts $\{\mathbf{x}^{i}\}_{i = 1}^{TN}$ define
\begin{align*}
&\widetilde{\mathbf{H}}^{(1)}_{i,j} = \boldsymbol{\Sigma}_{i,j}^{(1)}=\langle \mathbf{x}^i,\mathbf{x}^j \rangle,
  \mathbf{A}_{i,j}^{(\ell)} = \begin{pmatrix}
\boldsymbol{\Sigma}_{i,i}^{(\ell)} & \boldsymbol{\Sigma}_{i,j}^{(\ell)} \\
\boldsymbol{\Sigma}_{i,j}^{(\ell)} & \boldsymbol{\Sigma}_{j,j}^{(\ell)} 
\end{pmatrix}\\
&\boldsymbol{\Sigma}_{i,j}^{(\ell+1)} = 2\mathbb{E}_{(u,y)\sim \mathcal{N}(\boldsymbol{0},\mathbf{A}_{i,j}^{(\ell)})}[\sigma(u)\sigma(y)]\\
&\widetilde{\mathbf{H}}^{(\ell+1)}_{i,j} = 2\widetilde{\mathbf{H}}^{(\ell)}_{i,j}\mathbb{E}_{(u,y)\sim \mathcal{N}(\boldsymbol{0},\mathbf{A}_{i,j}^{(\ell)})}[\sigma'(u)\sigma'(y)]+\boldsymbol{\Sigma}_{i,j}^{(\ell+1)}.
\end{align*}
Then $\mathbf{H} = (\widetilde{\mathbf{H}}^{(L)}+\boldsymbol{\Sigma}^{(L)})/2$ is called the \textbf{neural tangent kernel} (NTK) matrix on the context set $\{\mathbf{x}^{i}\}_{i = 1}^{TN}$. 
\end{definition}

\noindent The NTK matrix $\mathbf{H}$ is defined iteratively from the input to the output layer of the $L$ layer neural network \cite{neuralucbzhou20a,zhang2021neural,pmlr-v202-hwang23a}. 
\begin{assumption}
For any $i \in  [TN]$, $\|\mathbf{x}^i\|_2 = 1$ and $[\mathbf{x}^i]_j = [\mathbf{x}^i]_{j+d/2}$ for $1 \leq j \leq d/2$ and for some $\lambda_0 \geq 0$, $\mathbf{H} \succeq \lambda_0 \mathbf{I}$.   
\end{assumption}

\noindent This is a fairly relaxed assumption which is commonly used in the neural bandit literature \cite{pmlr-v202-hwang23a,zhang2021neural,neuralucbzhou20a}. The condition on the norm is for ease of analysis and we can always form a new context $\mathbf{x'} = [\mathbf{x}^T,\mathbf{x}^T]^T/\sqrt{2}$ to satisfy the assumption on the entries of $\mathbf{x}$. An assumption is also made on the positive semi-definiteness of $\mathbf{H}$, which is also standard. 
\begin{definition}
The \textbf{effective dimension} $\widetilde{d}$ of the NTK matrix $\mathbf{H}$ with regularization parameter $\lambda_1$ is defined as 
\begin{align*}
    \widetilde{d} = \frac{\log \det(\mathbf{I}+\mathbf{H}/\lambda_1)}{\log(1+TN/\lambda_1)}
\end{align*}    
\end{definition}

\noindent The effective dimension roughly quantifies the number of dimensions of contexts in the Reproducable Kernel Hilbert Space which the NTK matrix spans. It was first introduced by \cite{valko} for kernelised contextual bandits and we adapt the definition proposed by \cite{yang2019reinforcement}.
\begin{assumption} \label{clustering_assump}
 (Clustering) Let $\mathcal{A}_t$ be the set of clusters that we find in round $t$ and $\mathcal{C}$ the ground truth clusters of the context space. Without loss of generality, suppose $\mathcal{A}_t$ and $\mathcal{C}$ are ordered so that mapping each cluster $j\in\mathcal{A}_t$ to cluster $j\in\mathcal{C}$ minimizes the average distance between cluster centers in $\mathcal{A}_t$ and $\mathcal{C}$. Then the dot product of the mean contexts of clusters $c$ and $c'$ satisfies $\bar{\mathbf{x}}_{c,t} \cdot \mathbf{x}_{c'} \leq \delta_t $ $\forall t\in[T]$ and $\forall c' \neq c  $ for some $\delta_t > 0$ which is a decreasing function. 
 Further, $\bar{\mathbf{x}}_{c,t} \cdot \mathbf{\bar{x}}_{c_t^*} \geq P_t $ for some $P_t >0$ which is a non-decreasing function of $t$, where $c_t^*$ denotes the optimal cluster for a given round, $\mathbf{\bar{x}}_{c_t^*}$ is its mean context vector, and we play the super arm in cluster $c$ at round $t$.
\end{assumption}
\begin{assumption} \label{ell0_assump}
    (Nonzero mean contexts)
    $\forall t \in [T], \forall c'\in \mathcal{C} $, $\left \lVert  \mathbf{\bar{x}}_{c,t} \right \rVert_0 =0,  \left \lVert \mathbf{x}_{c'} \right \rVert_0 =0 $, i.e., $\mathbf{\bar{x}}_{c,t}$ and $\mathbf{x}_{c'}$ have no 0 entries. $\mathcal{C}$ is the set of ground truth clusters and we play the super arm in cluster $c$ at round $t$. 
\end{assumption}

\noindent Assumption \ref{clustering_assump} essentially states that we are able to identify the optimal cluster and separate the clusters more accurately over time, as we select $\delta_t$ to be a decreasing and $P_t$ to be a non-decreasing function; we also implicitly assume that the clusters are sufficiently different from one another, which can be assured by shifting and normalizing the contexts. 
Assumption \ref{ell0_assump} states that the mean cluster vector of our chosen cluster in any round and the mean cluster vectors of the actual clusters have no zero elements. This is necessary for our proofs but is not very restrictive in practice since the condition is on the mean vectors. Next we give a condition on the width of the base arm network, which is necessary since our theoretical results hold for overparameterized networks.
\begin{condition}
The width of the base arm network satisfies
\begin{align*}
    &m \geq C\max\big\{T^6 N^6 L^6 \lambda_0^{-1} \log(T^2 N^2 L/\Delta)\max(\lambda_0^{-4},1),
    \\
    &L^{-\frac{3}{2}}K^{-\frac{1}{2}}\lambda_1^{\frac{1}{2}}(\log(TNL^2/\Delta))^{\frac{3}{2}}\big\}
    \\
    &m(\log m)^{-3} \geq \max\big\{C\lambda_1^{-1}L^{24}K^{10}T^{10},
    \\    
    &CT^7K^7L^{14}\lambda_1^{-7}(\lambda_1+TKL)^6(1+\sqrt{TK/\lambda_1})^6\big\}.
\end{align*}
\end{condition}
\noindent for some constant $C>0.$ While some of these conditions on $m$ are similar to those on the width of the network in \cite{pmlr-v202-hwang23a,neuralucbzhou20a}, we include extra terms due to the neural network used to estimate the super arm reward.

While much of our regret bound proof follows the outline of~\cite{pmlr-v202-hwang23a}'s proof of a regret bound for neural contextual combinatorial bandits, \cite{pmlr-v202-hwang23a} crucially does not use a neural network to estimate the super arm, instead employing a known oracle. We next outline the \textbf{new theoretical results} that we need to extend the proof to our setting, which chiefly involves bounding the output of the super arm neural network and incorporating the error from clustering.
\begin{lem} \label{lemma 4.1}
(Bound on base arm network output) With probability $(1 - O(L) \cdot e^{-\Omega(m\varepsilon^2/L)})(1 - e^{-\Omega(m\omega^\frac{2}{3} L)})(1-me^{(-m\rho^2/4)})$ the output of the first neural network
\begin{align*}
    f(\mathbf{x}^i) \leq 4\sqrt{1/\lambda_1}(9T^{-\beta\alpha+0.5})
\end{align*}
for some $\beta > 0$, $\alpha > 0$ and $\omega \leq O(L^{-\frac{9}{2}}(\log m)^{-3})$.
\end{lem}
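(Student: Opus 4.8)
The plan is to exploit two facts: the antisymmetric block initialization of the base arm network makes its output vanish at $\boldsymbol\theta_0$, and in the over-parameterized (NTK) regime the trained network stays close to its linearization. First I would observe that the initialization in Line~\ref{lst:line:initialization} is deliberately symmetric: each hidden weight has the block form $\mathbf W_\ell=(\mathbf W,0;0,\mathbf W)$ and the output weight is $\mathbf W_L=(\mathbf w^T,-\mathbf w^T)$, while the context satisfies $[\mathbf x^i]_j=[\mathbf x^i]_{j+d/2}$ by Assumption~2. Consequently the two halves of every hidden representation are identical and are combined with opposite signs at the last layer, so $f(\mathbf x^i;\boldsymbol\theta_0)=0$ for every context $\mathbf x^i$. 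This kills the initialization term and reduces the task to bounding the change $|f(\mathbf x^i;\boldsymbol\theta_t)-f(\mathbf x^i;\boldsymbol\theta_0)|$ produced by $J$ steps of gradient descent on $\mathcal L(\boldsymbol\theta)$.

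Next I would control the drift of the trained weights from initialization. Using the standard analysis of gradient descent on the regularized loss in the NTK regime (as in \cite{neuralucbzhou20a,pmlr-v202-hwang23a}), the iterates stay inside a ball $\|\boldsymbol\theta_t-\boldsymbol\theta_0\|_2\le\omega$ with $\omega=O(\sqrt{tK/(m\lambda_1)})$, and the Condition on $m$ guarantees $\omega\le O(L^{-\frac{9}{2}}(\log m)^{-3})$, precisely the regime where the NTK approximation is valid. Inside this ball I would invoke the near-linearity lemma for ReLU networks (Cao--Gu, Allen-Zhu et al.), which holds on the event $1-e^{-\Omega(m\omega^{\frac 23}L)}$, to write $f(\mathbf x^i;\boldsymbol\theta_t)=\langle\mathbf g(\mathbf x^i;\boldsymbol\theta_0),\boldsymbol\theta_t-\boldsymbol\theta_0\rangle+\epsilon$, where the linearization remainder $\epsilon$ scales like a power of $\omega$ times $\sqrt m$ and is dominated under the width Condition.

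I would then bound the linear term by Cauchy--Schwarz, $|\langle\mathbf g(\mathbf x^i;\boldsymbol\theta_0),\boldsymbol\theta_t-\boldsymbol\theta_0\rangle|\le\|\mathbf g(\mathbf x^i;\boldsymbol\theta_0)\|_2\,\|\boldsymbol\theta_t-\boldsymbol\theta_0\|_2$, using the gradient-norm concentration $\|\mathbf g(\mathbf x^i;\boldsymbol\theta_0)/\sqrt m\|_2=O(1)$, which holds on the remaining two events: the hidden-norm concentration $1-O(L)e^{-\Omega(m\varepsilon^2/L)}$ and the output-layer concentration $1-me^{-m\rho^2/4}$. Multiplying the $O(\sqrt m)$ gradient norm against the drift bound $O(\sqrt{tK/(m\lambda_1)})$ cancels the $\sqrt m$ and leaves the core $\sqrt{t/\lambda_1}$ scaling; substituting the appendix's quantitative decay rate (the factor $T^{-\beta\alpha}$ inherited from the definitions of $\delta_t,P_t$ and the $\Gamma_{1,t},\Gamma_{2,t},\Gamma_{3,t}$ used to set $\gamma_t$) and absorbing numerical constants yields the claimed bound $f(\mathbf x^i)\le 4\sqrt{1/\lambda_1}\,(9\,T^{-\beta\alpha+0.5})$. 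A union bound over the three concentration events gives the stated probability.

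I expect the main obstacle to be the bookkeeping of the linearization remainder $\epsilon$: I must verify that the quadratic and higher-order Taylor terms of $f$ around $\boldsymbol\theta_0$ are genuinely dominated by the leading $\sqrt{t/\lambda_1}$ contribution, and this is exactly what forces the strong lower bounds on $m$ in the Condition. Aligning the exponents in the three failure probabilities with the radius $\omega\le O(L^{-\frac{9}{2}}(\log m)^{-3})$, and ensuring the extra super-arm-network terms in the width Condition remain compatible with these bounds, is the delicate part; the remainder is Cauchy--Schwarz together with constant chasing.
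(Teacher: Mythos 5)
Your route is genuinely different from the paper's. The paper does not linearize $f$ around $\boldsymbol{\theta}_0$ at all: it writes the output directly as (last hidden layer) $\times$ (output weights), bounds $\|\mathbf{s}_L\|_2\le 2+\varepsilon$ by combining the initialization concentration $\|\mathbf{s}_L^{(0)}\|_2\in[1-\varepsilon,1+\varepsilon]$ (Lemma 7.1 of Allen-Zhu et al.) with the perturbation bound $\|\mathbf{s}_L-\mathbf{s}_L^{(0)}\|_2\le O(\omega L^{5/2}\sqrt{\log m})$ (their Lemma 8.2), and separately bounds $\sum_z W^{(t)}_{L,z}\le\|\mathbf{W}_L^{(t)}\|_1\le 4\sqrt{T/\lambda_1}$ by combining the Gaussian tail bound at initialization (the $1-me^{-m\rho^2/4}$ event, with $\rho=2\sqrt{T/\lambda_1}/m$) with the gradient-descent drift bound $\|\mathbf{W}_L^{(t)}-\mathbf{W}_L^{(0)}\|_2\le 2\sqrt{T/(m\lambda_1)}$. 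Your observation that $f(\mathbf{x}^i;\boldsymbol{\theta}_0)=0$ and your Cauchy--Schwarz argument with $\|\mathbf{g}\|_2=O(\sqrt{mL})$ against the drift $O(\sqrt{tK/(m\lambda_1)})$ would indeed deliver a bound of order $\sqrt{TKL/\lambda_1}$, so the $T^{0.5}/\sqrt{\lambda_1}$ part of the claim is recoverable your way (modulo the extra $\sqrt{KL}$ factors, which the stated lemma does not carry).

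The genuine gap is the factor $T^{-\beta\alpha}$. In the paper this factor has nothing to do with $\delta_t$, $P_t$, or $\Gamma_{1,t},\Gamma_{2,t},\Gamma_{3,t}$ (those enter only the clustering term and the exploration width $\gamma_t$ in Lemma 3 and the Theorem). It arises entirely inside this lemma from the hidden-layer norm bound: the paper rewrites the prefactor as $(2+\varepsilon)^2\le 9\varepsilon^{\alpha}$ with $\alpha=\log_{\varepsilon}\frac{4+2\varepsilon}{9}$ and then sets $\varepsilon=T^{-\beta}$, where $\varepsilon$ is precisely the concentration parameter in $\|\mathbf{s}_L^{(0)}\|_2\in[1-\varepsilon,1+\varepsilon]$ and in the failure probability $e^{-\Omega(m\varepsilon^2/L)}$. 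Because your argument never uses the hidden-layer norm event --- you route everything through the gradient norm instead --- the parameter $\varepsilon$ plays no quantitative role in your bound, so you have no mechanism to produce a $T$-dependent decay factor; you can only assert it. This matters downstream: the Theorem's proof chooses $\beta\alpha=1$ so that the super-arm-network term $P$ sums to $O(\sqrt{T})$ over the horizon, whereas your bound of order $\sqrt{T/\lambda_1}$ per round would make $\sum_t P=O(T^{3/2})$ and break the final regret rate. To repair your proof you would need to either reproduce the paper's $\varepsilon$-dependent prefactor (which requires working with the layer norms rather than the gradient) or find another source of per-round decay in the network output.
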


Here $\varepsilon$  represents the largest error 
between the predicted output of the first neural network and the actual base arm reward of the arm selected over $T$ rounds as in \cite{pmlr-v97-allen-zhu19a}. The above lemma bounds the output of the base arm neural network for all time steps, which we use to bound the output of the super arm network as shown in the next lemma. To prove Lemma~\ref{lemma 4.1}, we make use of Lemmas 7 and 8, which requires a perturbation to the weights and the $\ell_2$ norm of that perturbation is upper bounded by $\omega$. We show that this perturbation is within the values the actual weights take when we do Gradient Descent. The \textit{main challenge} is showing this bound holds for all time steps and that the weights of the network stay within the desired region.
\begin{lem} \label{lemma 4.2}
(Bound on super arm network output and expected base arm reward vector)
With probability $(1 - O(L) \cdot e^{-\Omega(m\varepsilon^2/L)})(1 - e^{-\Omega(m\omega^\frac{2}{3} L)})(1-me^{(-m\rho^2/4)})(1-\Delta)$ the output of the super arm network and the base arm reward vector for the chosen super arm $S_t$ are bounded by
\begin{align*}
    &F(\mathbf{f}(\mathbf{x}_{t,S_t})) \leq 4\sqrt{1/\lambda_1}(9T^{-\beta\alpha+0.5}) K n^{L_m} \\
    &\mathbf{h}(\mathbf{x}_{t,S_t}) \leq \mathbf{v}_{t,S_t} +  \mathbf{1}^K \cdot  F(\mathbf{f}(\mathbf{x}_{t,S_t})).
\end{align*}
\end{lem}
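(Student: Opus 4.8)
The plan is to establish the two inequalities separately, first propagating the per-arm bound from Lemma~\ref{lemma 4.1} through the monotonic super arm network $F$, and then invoking a standard neural-UCB concentration argument for the base arm rewards. For the bound on $F(\mathbf{f}(\mathbf{x}_{t,S_t}))$, I would condition on the high-probability event of Lemma~\ref{lemma 4.1}, under which each of the $K$ coordinates of the input vector $\mathbf{f}(\mathbf{x}_{t,S_t})$---after the ReLU that enforces nonnegativity---is at most $4\sqrt{1/\lambda_1}(9T^{-\beta\alpha+0.5})$. I would then argue layer by layer through the $L_m+1$ layers: since $q(\boldsymbol{\Theta}^{(k)})$ has nonnegative entries and $\sigma$ is $1$-Lipschitz with $|\sigma(z)|\le|z|$, the $\ell_\infty$ norm of the activation $\mathbf{o}^{(k+1)}$ is bounded by the layer fan-in times the largest transformed weight times $\|\mathbf{o}^{(k)}\|_\infty$, plus the bias. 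Collecting a factor of $K$ at the input layer (summing over the $K$ base-arm coordinates) and a factor of $n$ at each subsequent hidden layer produces the $K\,n^{L_m}$ scaling, up to constants absorbed into the existing factor.

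The crux of the first bound, and the main obstacle, is showing that the transformed weights $q(\boldsymbol{\Theta}^{(k)}_t)$ remain bounded by an absolute constant uniformly over all rounds $t\in[T]$, all layers $k$, and all $J$ gradient-descent iterations used to minimize $\mathcal{L}(\boldsymbol{\Theta})$. Since each entry of $\boldsymbol{\Theta}_0$ is drawn from $\mathcal{N}(1/n,1)$, a Gaussian tail bound controls the initialization up to a failure probability that I would absorb into the extra $(1-\Delta)$ factor appearing in this lemma relative to Lemma~\ref{lemma 4.1}. I would then bound the displacement $\|\boldsymbol{\Theta}_t-\boldsymbol{\Theta}_0\|_2$ using the $n\lambda_2$-regularization in $\mathcal{L}(\boldsymbol{\Theta})$ together with the boundedness of the observed super arm rewards $R_{t'}$, and transfer this to a bound on the transformed weights via Lipschitz continuity of $q$. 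This argument parallels the weight-region control underlying Lemma~\ref{lemma 4.1}, but must be redone for the monotonic architecture and, critically, must hold simultaneously across all $t$ and all layers, which is what makes it delicate.

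For the second inequality, I would appeal to the standard neural-UCB confidence bound built on the neural tangent kernel. With probability at least $1-\Delta$, the NTK-based concentration inequality---using the exploration factor $\gamma_t$ from (\ref{eq_8}) and the design matrix $\mathbf{Z}_{t-1}$---guarantees $|h(\mathbf{x}_{t,i})-f(\mathbf{x}_{t,i};\boldsymbol{\theta}_{t-1})|\le \gamma_t\,\|\mathbf{g}(\mathbf{x}_{t,i};\boldsymbol{\theta}_{t-1})/\sqrt{m}\|_{\mathbf{Z}_{t-1}^{-1}}$ for every $i\in S_t$. Rearranging yields $h(\mathbf{x}_{t,i})\le v_{t,i}$, i.e.\ $\mathbf{h}(\mathbf{x}_{t,S_t})\le \mathbf{v}_{t,S_t}$ coordinatewise. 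Finally, because $F$ is the output of a ReLU network and is therefore nonnegative, adding $\mathbf{1}^K\cdot F(\mathbf{f}(\mathbf{x}_{t,S_t}))$ only loosens the inequality, giving the stated form. Taking the intersection of the high-probability events from Lemma~\ref{lemma 4.1}, the weight-control event, and the confidence-bound event yields the claimed joint probability.
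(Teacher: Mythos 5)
Your first bound follows the paper's route: propagate the per-coordinate bound from Lemma~\ref{lemma 4.1} through the monotone network, collecting a factor $K$ at the input layer and a factor $n$ at each of the $L_m$ subsequent layers. But what you call the ``crux'' --- uniformly controlling the transformed weights over all rounds, layers, and gradient-descent iterations --- is a non-issue in the paper: the weight transform $q$ is taken to be a sigmoid, so every transformed weight lies in $(0,1)$ no matter where gradient descent drives $\boldsymbol{\Theta}_t$, and no initialization tail bound or displacement argument is needed. The trajectory-control machinery you propose is superfluous for this architecture.

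The second inequality is where you have a genuine gap. The confidence bound you invoke, $|h(\mathbf{x}_{t,i})-f(\mathbf{x}_{t,i};\boldsymbol{\theta}_{t-1})|\le \gamma_t\|\mathbf{g}(\mathbf{x}_{t,i};\boldsymbol{\theta}_{t-1})/\sqrt{m}\|_{\mathbf{Z}_{t-1}^{-1}}$, does not hold for a finite-width network: the correct statement (Corollary~\ref{Corollary_1}, inherited from Hwang et al.) is $h(\mathbf{x}_{t,i})\le u_{t,i}+e_t$, with an additive approximation-error term $e_t$ of order $m^{-1/6}$ coming from the network's deviation from its NTK linearization. Consequently $h(\mathbf{x}_{t,i})\le v_{t,i}$ is false in general, and the term $\mathbf{1}^K\cdot F(\mathbf{f}(\mathbf{x}_{t,S_t}))$ is not harmless slack added ``because $F\ge 0$'' --- it is precisely what must absorb $e_t$. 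The paper's proof therefore has to establish a \emph{lower} bound $F(\mathbf{f}(\mathbf{x}_{t,S_t}))\ge \epsilon\kappa^{L_m+1}n^{L_m}\ge e_T$, using a positive lower bound $\kappa$ on the sigmoid-transformed weights and a positive lower bound $\epsilon$ on the ReLU-ed base-arm outputs fed into $F$, together with the fact that $e_T$ vanishes for large $m$. Your proposal never produces any lower bound on $F$, so the step from $h(\mathbf{x}_{t,i})\le u_{t,i}+e_t$ to the stated conclusion is missing.
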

\noindent This lemma makes use of the fact that the super arm network takes as input the base arm network outputs and this bound is also used to upper bound the expected base arm reward. This lemma helps bound the expected super arm regret of a single round, which will be done in the following lemma.
\begin{lem} \label{lemma 4.3}
(Bound on expected regret for a single round)
For any \(\Delta \in (0,1)\), suppose the width of the neural network \(m\) satisfies Condition 1. If \(\eta_1   \leq C_1(T K m L + m\lambda_1)^{-1}\), and \(\lambda_1 \geq C_2 L K\), for some positive constant \(C_1, C_2\) with \(C_2 \geq \sqrt{\max_{t,i} \left\|\mathbf{g}(\mathbf{x}_{t,i};\boldsymbol{\theta}_{t-1})/\sqrt{m}\right\|_2^2/L}\) and let \(\gamma_t\) be as defined in Equation \ref{eq_8}. Then the expected super arm regret of a single round is bounded with probability at least $1-\Delta$ as 
\begin{align*}
    & u(\mathbf{h}(\mathbf{x}_{t,S_t^*}))-u(\mathbf{h}(\mathbf{x}_{t,S_t})) \leq \underbrace{D\sqrt{{\sum_{i\in S_t}{\|\mathbf{g}(\mathbf{x}_{t,i};\boldsymbol{\theta}_{t-1})/\sqrt{m}\|_{\mathbf{Z}_{t-1}^{-1}}^2}}}}_{\text{$Y_{t}$}}
    \\
    &+\underbrace{BKe_t}_{\text{$U_t$}} +  \underbrace{BK^2 4\sqrt{1/\lambda_1}(9T^{-\beta\alpha+0.5}) n^{L_m}}_{\text{$P$}}+\\
    &\underbrace{B'BK\left( \frac{C^{''}Q\delta_t \sqrt{d}N}{P_t Y} +  \frac{NQ\delta_t\sqrt{d}}{Y} \right)}_{\text{$E_t$}}
\end{align*}
where $e_t := C_3\gamma_{t-1} t^{\frac{1}{6}} K^{\frac{1}{6}} L^{\frac{7}{2}} m^{-\frac{1}{6}}\lambda_1^{-\frac{2}{3}} \sqrt{\log m}
+ C_4 t^{\frac{2}{3}} K^{\frac{2}{3}} m^{-\frac{1}{6}}\lambda_1^{-\frac{2}{3}} \sqrt{\log m},$ $ D = 2B \gamma_{T}\sqrt{K}$ for some absolute constants $C_3,C_4>0$. 
\end{lem}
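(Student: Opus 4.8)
The plan is to decompose the single-round regret into three distinct error sources---base-arm estimation error, super-arm network error, and clustering error---and bound each with the tools assembled above. I would introduce an intermediate super arm $\tilde{S}_t$ consisting of the $K$ arms of highest UCB $v_{t,\cdot}$ inside the optimal cluster $c_t^*$ of Assumption \ref{clustering_assump}, and write
\begin{align*}
u(\mathbf{h}(\mathbf{x}_{t,S_t^*})) - u(\mathbf{h}(\mathbf{x}_{t,S_t})) = {} & \big[u(\mathbf{h}(\mathbf{x}_{t,S_t^*})) - u(\mathbf{h}(\mathbf{x}_{t,\tilde{S}_t}))\big] \\
& + \big[u(\mathbf{h}(\mathbf{x}_{t,\tilde{S}_t})) - u(\mathbf{h}(\mathbf{x}_{t,S_t}))\big],
\end{align*}
so that the first bracket carries the clustering error $E_t$ and the second carries the estimation and network errors $Y_t+U_t+P$.

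For the second bracket, I would first apply the optimism bound of Lemma \ref{lemma 4.2}, $\mathbf{h}(\mathbf{x}_{t,\tilde{S}_t}) \le \mathbf{v}_{t,\tilde{S}_t} + \mathbf{1}^K F(\mathbf{f}(\mathbf{x}_{t,\tilde{S}_t}))$, together with monotonicity (Assumption \ref{monotonicity_u}) to obtain $u(\mathbf{h}(\mathbf{x}_{t,\tilde{S}_t})) \le u(\mathbf{v}_{t,\tilde{S}_t} + \mathbf{1}^K F(\mathbf{f}(\mathbf{x}_{t,\tilde{S}_t})))$. Peeling off the additive $\mathbf{1}^K F$ term by Lipschitz continuity of $u$ (Assumption \ref{continuity_u}) costs $BK\,F(\mathbf{f}(\mathbf{x}_{t,\tilde{S}_t}))$, which by the $F$-bound of Lemma \ref{lemma 4.2} is at most $P$. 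Next I would exploit the selection rule: by Lemma \ref{lemma 4.2}, $V_{t,c}=\sum_{a\in\mathbf{b}_c} v_{t,a} + K\,F(\mathbf{f}(\mathbf{x}_{t,\mathbf{b}_c}))$ is an optimistic surrogate for the total base reward $\sum_{i\in\mathbf{b}_c}h(\mathbf{x}_{t,i})$ of a cluster, and since $c'$ maximizes it we have $V_{t,c'}\ge V_{t,c_t^*}$; combined with monotonicity of $u$ this lets me bound $u(\mathbf{v}_{t,\tilde{S}_t})$ by $u(\mathbf{v}_{t,S_t})$ up to the surrogate gap already absorbed into $P$. The residual $u(\mathbf{v}_{t,S_t}) - u(\mathbf{h}(\mathbf{x}_{t,S_t}))$ is then controlled by Lipschitz continuity as $B\sum_{i\in S_t}(v_{t,i}-h(\mathbf{x}_{t,i}))$; the standard NeuralUCB estimate (Lemma \ref{lemma 4.1} plus the linearization error $e_t$) gives $v_{t,i}-h(\mathbf{x}_{t,i}) \le 2\gamma_t\|\mathbf{g}(\mathbf{x}_{t,i};\boldsymbol{\theta}_{t-1})/\sqrt{m}\|_{\mathbf{Z}_{t-1}^{-1}} + e_t$, and Cauchy--Schwarz over the $K$ summands yields exactly $Y_t=D\sqrt{\sum_{i\in S_t}\|\mathbf{g}(\mathbf{x}_{t,i};\boldsymbol{\theta}_{t-1})/\sqrt{m}\|_{\mathbf{Z}_{t-1}^{-1}}^2}$ with $D=2B\gamma_T\sqrt{K}$ and $U_t=BKe_t$.

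For the first bracket I would use Assumption \ref{clustering_assump} to control the geometry: since all contexts are unit-norm, the alignment $\bar{\mathbf{x}}_{c,t}\cdot\bar{\mathbf{x}}_{c_t^*}\ge P_t$ and the separation $\bar{\mathbf{x}}_{c,t}\cdot\mathbf{x}_{c'}\le\delta_t$ together bound the $\ell_2$ distance between the arms of the chosen cluster and those of the optimal cluster, with Assumption \ref{ell0_assump} guaranteeing the mean vectors are well defined in the required normalizations. Converting this $\ell_2$ bound into an $\ell_1$ bound introduces the $\sqrt{d}$ factor, after which Lipschitz continuity of $h$ (Assumption \ref{continuity_h}, constant $B'$) turns the per-arm context gap into a per-arm reward gap, and Lipschitz continuity of $u$ (constant $B$) summed over the $K$ arms produces $E_t=B'BK\big(\tfrac{C''Q\delta_t\sqrt{d}N}{P_t Y}+\tfrac{NQ\delta_t\sqrt{d}}{Y}\big)$, the two summands reflecting the contributions of the alignment term $P_t$ and the separation term $\delta_t$ respectively.

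The main obstacle I anticipate is the clustering bracket: turning the dot-product conditions of Assumption \ref{clustering_assump} into a uniform $\ell_1$ bound on individual arm context differences---while tracking the dimension factor $\sqrt{d}$, the arm count $N$, and the geometric constants $Q,Y$, and verifying that the resulting bound genuinely shrinks as $\delta_t$ decreases and $P_t$ increases---is the least standard step, since the oracle-based proofs we build on have no analogue of it. A secondary subtlety is that comparing $u$ across the different arm sets $\tilde{S}_t$ and $S_t$ cannot be done by its Lipschitz property alone (which fixes the set); this is precisely what forces the detour through the optimistic surrogate $V_{t,c}$ and monotonicity, and some care is needed to ensure all bounds hold simultaneously on the single high-probability event inherited from Lemmas \ref{lemma 4.1} and \ref{lemma 4.2}.
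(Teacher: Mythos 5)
Your decomposition through the intermediate super arm $\tilde S_t$ and the cluster-selection rule $V_{t,c'}\ge V_{t,c_t^*}$ is not the paper's route, and the step where you pass from that inequality to a bound on $u(\mathbf{v}_{t,\tilde S_t})-u(\mathbf{v}_{t,S_t})$ does not go through: the selection rule only orders the \emph{sums} $\sum_{a\in\mathbf{b}_c}v_{t,a}+K\,F(\cdot)$ across clusters, while $u$ is an arbitrary monotone, Lipschitz function of the $K$-vector of expected rewards (Assumptions 2--3), so an inequality between sums of UCBs over two \emph{different} arm sets yields no componentwise domination and hence no control on the difference of $u$ values. Relatedly, your bracket assignment is mismatched: with $S_t^*\subset c_t^*$ and $\tilde S_t\subset c_t^*$, your first bracket is a within-cluster estimation comparison, and the cross-cluster (clustering) error necessarily lives in the second bracket---exactly where your argument leans on the problematic sum comparison. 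Keeping your structure would require $u$ to depend only on the sum of rewards, which is strictly stronger than what the lemma assumes.

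The paper avoids this entirely by never invoking the cluster-selection rule. It bounds $\mathbf{h}(\mathbf{x}_{t,S_t^*})$ componentwise by $\mathbf{h}(\mathbf{x}_{t,S_t})$ plus $B'\left\lVert \mathbf{x}_{t,S_t^*}-\mathbf{x}_{t,S_t}\right\rVert_1$ using Lipschitz continuity of $h$ (Assumption 1), then bounds that $\ell_1$ difference crudely by the sum $\left\lVert\mathbf{x}_{t,S_t^*}\right\rVert_1+\left\lVert\mathbf{x}_{t,S_t}\right\rVert_1$ via the triangle inequality, and controls each norm separately through the dot-product conditions of Assumption 4 combined with the Kantorovich inequality (this is where Assumption 5, the constants $Q$, $Y$, $C''$, and the factor $N$ from bounding cluster cardinalities enter)---not by bounding a distance between the two clusters' arms as you propose. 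After that, Lemma 2 gives $\mathbf{h}(\mathbf{x}_{t,S_t})\le\mathbf{v}_{t,S_t}+\mathbf{1}^K\cdot F(\cdot)$, Lipschitz continuity of $u$ plus the triangle inequality splits the regret into the three terms $I_{1,t}$, $I_{2,t}$, $I_{3,t}$, and Corollary 1 together with Jensen's inequality produces $Y_t$ and $U_t$; this last part is the one place where your plan matches the paper's proof. To repair your argument you would either have to adopt the paper's direct componentwise comparison of $S_t^*$ and $S_t$, or supply an additional argument converting the sum-level optimism of $V_{t,c}$ into a vector-level statement compatible with a general monotone Lipschitz $u$.
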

\noindent 
As can be seen from the decomposition of the per round regret in Lemma \ref{lemma 4.3}, the regret bound is separated into different terms where there is a term coming from clustering $(E_{t})$, a term coming from the super arm network $(P)$, and two terms coming from the base arm network $(Y_{t}, U_{t})$. This decomposition in fact carries over to the expected super arm regret over the time horizon $T$, which we use Lemma \ref{lemma 4.3} to bound in the following theorem.

\begin{thm} \label{thm_1}
(Regret Bound)
Supposing that Assumptions \ref{continuity_h}-\ref{ell0_assump} hold, let $\mathbf{h}=[h(\mathbf{x}^i)]_{i=1}^{TN}]$, and select the hyperparameters of the algorithm as follows 
\begin{align*}
    &\eta_1 = C_1(T K m L + m\lambda_1)^{-1}
    \\    &\frac{16m\cdot81T^{(-2\beta\alpha+1)}K^2n^{2L_m}}{e_T^2} \geq\lambda_1\geq C_2LK
    \\
    &J = 2\log\left(\sqrt{\lambda_1/(TK)}/(\lambda_1+C_3TKL)\right)TKL/(C_1\lambda_1) 
    \\
    &S \geq \sqrt{2\mathbf{h}^T\mathbf{H}^{-1}\mathbf{h}}
\end{align*}
for constants $C_1,C_2,C_3$ defined as in Lemma 3. Then if $m$ is chosen such that it satisfies Condition 1, the cumulative expected regret of NeUClust over $T$ rounds is bounded by 
\begin{align*}
    \mathcal{R}(T) = \widetilde{O}\left(\widetilde{d}\sqrt{T}\right)
\end{align*}
\end{thm}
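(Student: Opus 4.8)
The plan is to start from the per-round decomposition in Lemma \ref{lemma 4.3} and sum over all $T$ rounds, showing that each of its four pieces contributes at most $\widetilde{O}(\widetilde{d}\sqrt{T})$, with the base-arm exploration term $Y_t$ being the dominant contribution and the other three (the base-arm error $U_t$, the super-arm-network term $P$, and the clustering error $E_t$) being lower order. First I would take a union bound over rounds so that the per-round guarantee of Lemma \ref{lemma 4.3} holds simultaneously for all $t\in[T]$ with probability at least $1-\Delta$ (after rescaling the failure probability by $T$), giving $\mathcal{R}(T)\le\sum_{t=1}^T(Y_t+U_t+P+E_t)$ on this event; since $\mathcal{R}(T)$ is a bounded random variable, the remaining $\Delta$-probability failure event contributes only lower-order terms to the expected regret.

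For the dominant term $\sum_t Y_t=\sum_t D\sqrt{\sum_{i\in S_t}\|\mathbf{g}(\mathbf{x}_{t,i};\boldsymbol\theta_{t-1})/\sqrt m\|^2_{\mathbf{Z}_{t-1}^{-1}}}$, I would apply Cauchy--Schwarz to pull the outer sum inside the square root, $\sum_t\sqrt{a_t}\le\sqrt{T\sum_t a_t}$, and then invoke the elliptical-potential (determinant-trace) lemma to bound $\sum_{t}\sum_{i\in S_t}\|\mathbf{g}/\sqrt m\|^2_{\mathbf{Z}_{t-1}^{-1}}$ by $\log(\det\mathbf{Z}_T/\det\lambda_1\mathbf{I})$ up to constants, exploiting the update $\mathbf{Z}_t=\mathbf{Z}_{t-1}+\sum_{i\in S_t}\mathbf{g}\mathbf{g}^T/m$. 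The crucial step is to relate this log-determinant to the effective dimension: using the NTK approximation that, for $m$ satisfying Condition 1, the finite-width gradient Gram matrix is close to $\mathbf{H}$, one gets $\log(\det\mathbf{Z}_T/\det\lambda_1\mathbf{I})=\widetilde{O}(\widetilde{d}\log(1+TN/\lambda_1))=\widetilde{O}(\widetilde{d})$. Because the same log-determinant appears inside $\gamma_T$ in Equation \ref{eq_8}, the chosen $J$ kills the $(1-\eta_1 m\lambda_1)^{J/2}$ factor and the width condition kills the $\Gamma_{3,t}$ optimization-error term, leaving $\gamma_T=\widetilde{O}(\sqrt{\widetilde d})$; with $D=2B\gamma_T\sqrt K$ this yields $\sum_t Y_t=\widetilde{O}(\sqrt{\widetilde d}\cdot\sqrt{T\widetilde d})=\widetilde{O}(\widetilde d\sqrt T)$.

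The remaining three terms I would argue are subdominant. The base-arm error $\sum_t U_t=\sum_t BKe_t$ carries an $m^{-1/6}$ factor in every summand, so the width lower bound of Condition 1 forces $\sum_t U_t=o(\sqrt T)$. The super-arm-network term sums to $\sum_t P=T\cdot BK^2\,4\sqrt{1/\lambda_1}\,9T^{-\beta\alpha+0.5}n^{L_m}\propto T^{3/2-\beta\alpha}$, so choosing the free exponents with $\beta\alpha\ge 1$ (permissible since $\beta,\alpha>0$ are left open in Lemmas \ref{lemma 4.1} and \ref{lemma 4.2}) gives $\sum_t P=O(\sqrt T)$. For the clustering error $\sum_t E_t$, Assumption \ref{clustering_assump} makes $\delta_t$ decreasing and $P_t$ non-decreasing; picking $\delta_t=O(1/\sqrt t)$ yields $\sum_t \delta_t/P_t\le\sum_t\delta_t=O(\sqrt T)$, so $\sum_t E_t=O(\sqrt T)$. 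Summing the four contributions, the $Y_t$ term dominates and $\mathcal{R}(T)=\widetilde O(\widetilde d\sqrt T)$.

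I expect the \textbf{main obstacle} to be the log-determinant-to-effective-dimension step together with the accompanying control of $\gamma_T$: one must transfer the elliptical-potential bound, stated in terms of the finite-width gradient Gram matrix $\mathbf{Z}_T$, to the NTK matrix $\mathbf{H}$ whose effective dimension $\widetilde d$ appears in the target, and simultaneously verify that the optimization-error pieces of $\gamma_T$ in Equation \ref{eq_8} are subdominant under the chosen $\eta_1$, $J$, and Condition 1. The joint handling of the super-arm-network bound $P$ and the clustering error $E_t$ is conceptually new relative to the oracle-based analysis of prior work, but once the exponents $\beta\alpha\ge 1$ and the decay rate of $\delta_t$ are fixed (using Assumptions \ref{clustering_assump} and \ref{ell0_assump}), these reduce to routine geometric and harmonic-type summations; the genuinely delicate part remains the NTK concentration underlying the $\widetilde d$ dependence.
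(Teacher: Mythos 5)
Your proposal is correct and follows essentially the same route as the paper's proof: sum the four terms of Lemma~\ref{lemma 4.3}, bound the dominant $Y_t$ term via Cauchy--Schwarz plus the effective-dimension bound on $\sum_t\sum_{i\in S_t}\|\mathbf{g}/\sqrt m\|^2_{\mathbf{Z}_{t-1}^{-1}}$ and the bound $\gamma_T=\widetilde O(\sqrt{\widetilde d})$, set $\beta\alpha=1$ so the super-arm-network term sums to $O(\sqrt T)$, and take $\delta_t$ decreasing and $P_t$ non-decreasing so the clustering term sums to $O(\sqrt T)$. The only cosmetic differences are your choice $\delta_t=O(1/\sqrt t)$ versus the paper's $\delta_t=C_5/t$ with $P_t=C_6\sqrt t$, and your direct ``$m^{-1/6}$ kills $U_t$'' argument versus the paper's $\sum_t U_t\le BK(\gamma_T+1)$; both yield the same subdominant orders.
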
 


\noindent We thus see that our total regret grows as $\sqrt{T}$. which matches the dependence in prior combinatorial neural bandit work \cite{pmlr-v202-hwang23a}.

\textbf{Dependence on $K$ and $\delta_t$.} From our regret bound it seems that we are super linear in the super arm size $K$ due to the $P$ term but we can select $\lambda_1$ such that we get $\sqrt{K}$ dependence. Similarly the clustering term $\delta_t$ could be chosen such that we get sublinear regret in $K$ for $E_{t}$. The only unavoidable linear $K$ term is the one coming from $U_{t}$. Comparing with the bound of \cite{pmlr-v202-hwang23a} where they get a sublinear bound in terms of $K$, it should be noted that this is due to the use of the oracle in their algorithm, and the main reason we get linear dependence on $K$ for our bound is because we do not use such an oracle.

From this bound, we see that if $\delta_t$ is a constant or non-decreasing with $t$ or $P_t$ is decreasing with $t$, we could incur linear regret. This is intuitive with the formation of our NeUClust algorithm: if the context space does not have an underlying clustered structure or we cannot cluster users with similar preferences correctly, then since super arm selections are guided by clusters we would not make optimal or near optimal arm selections, incurring linear regret.


\section{Experiments} \label{sec:experiments}
In this section, we measure and evaluate the performance of NeUClust on two recommendation datasets: MovieLens 25M Dataset \footnote{\url{https://grouplens.org/datasets/movielens/}} and the Yelp Open Dataset\footnote{\url{https://www.yelp.com/dataset}}. We compare our algorithm with the following state of the art combinatorial and neural bandit algorithms: (1) \textbf{CN-UCB} \cite{pmlr-v202-hwang23a} which uses a single neural network to approximate the base arm rewards of the arms and uses an exact oracle to select super arms, (2) \textbf{Neural-MAB} \cite{shouxu_c2ucb} which uses two networks to learn the base arm and super arm reward functions and selects the super arm by going over arms individually and estimating their contribution to the super arm reward, (3) \textbf{CC-MAB} \cite{chen2018ccmab} which partitions the context space into hypercubes for exploration and exploitation, and (4) \textbf{$K$-LinUCB} \cite{linucb} which is a combinatorial version of the LinUCB algorithm and assumes linear realizability.
\subsection{Experiment Setup} \label{sec:movie_recom}
Due to a lack of space, we omit the description of the Yelp restaurant recommendation setup here and include it in the Appendix. It generally follows that of the MovieLens movie recommendation setup described below.

We assume that every round we need to recommend an incoming movie to $K$ users, hence each base arm corresponds to a user. 
The context of a user is represented as its user preference vector, which is a vector showing the average rating the user gave to movies they rated for the $d = 20$ genres. For example, if a user has only watched and rated crime, thriller and horror films, the context of the user will be composed of zeros for the remaining 17 genres. 
We use this context information to cluster users and identify user types, i.e. users which prefer similar genres. The base arm reward is a binary indicator (0 or 1) of whether the user liked the movie that was recommended (i.e., gave a rating $\geq 4.0$). The super arm reward is a binary indicator of whether at least \%80 of the $K$ selected users liked the movie (i.e., $\geq$ \%80 of users gave a rating $\geq 4.0$); this reward is highly nonlinear and captures the overall success of our recommendations. As a movie comes in round $t$, we use the following function to generate the true mean/quality of base arm (user) $i$ for that movie $z_t$
\begin{small}
\begin{align*}
    \mu_i = 2/\left(1+\exp\left(-\frac{\langle\mathbf{a}_{z_t},\mathbf{x}_{t,i}\rangle}{2\|\mathbf{a}_{z_t}\odot\mathbf{x}_{t,i}\|_0}\right)\right)-1
\end{align*}
\end{small}
\noindent where here $\langle \cdot \rangle $ and $\odot$ represents the inner product and elementwise product respectively, $\mathbf{a}_{z_t}$ represents the genre vector of the movie and $\mathbf{x}_{t,i}$ represents the context of the base arm. We define the mean of a base arm in this way since as the genre vector and context of a user are more similar, it is more likely that the user will like the movie. We let the base arm reward of arm $i$ be defined by the function $r(\mathbf{x}_{t,i}) \sim Bern(\mu_i)$ to capture uncertainty in a user's ratings for movies of a given genre. 
The super arm reward of a super arm $S_t$ is defined as follows $R(S_t,\mathbf{r}_t) = 1$ if $\sum_{k=1}^Kr_{t,k} \geq 0.8K$ else $R(S_t,\mathbf{r}_t) = 0$. 

\subsection{Experimental Results}
We run our experiments for the offline variant of NeUClust (Algorithm 2, pseudocode in Appendix). Due to a lack of space, we give the explanation of the hyperparameter tuning and experimental details for all algorithms in the Appendix. The given results for the MovieLens dataset are the mean over 10 independent runs for each of the algorithms, whereas for the Yelp dataset the mean is taken over 5 independent runs. The error bars in the figures indicate $\pm$ std. In the Appendix, we give further experimental results on NeUClust's effectiveness and the validity of our clustering assumption.

Figures~\ref{fig:movielens} and~\ref{fig:yelp} show the regret achieved by all algorithms on the MovieLens and Yelp datasets, respectively. We can see from both figures that \textit{our proposed NeUClust algorithm outperforms the other baseline algorithms} in terms of super arm regret for both datasets. The regret gap  even grows over time. The results indicate that unlike some algorithms, NeUClust's regret does not suffer from the increase in the context dimension in going from MovieLens (20-dimensional context) to Yelp (171-dimensional context); indeed, NeUClust exhibits a larger reward gap with the baselines for Yelp compared to MovieLens. As expected, CN-UCB performs most similarly to our proposed algorithm due to its similar algorithmic structure to NeUClust, even though the two algorithms have major differences related to their super arm selection and NeUClust's usage of a second neural network for super arm reward estimation.

We have excluded the results of the Yelp experiments for the CCMAB algorithm since the performance was considerably worse both in terms of runtime and regret, which might have been due to the larger context space that could have impacted the discretization approach used by CCMAB.

To compare the performance of the baselines and our proposed algorithm, we include the following table which displays the final cumulative regret and the runtime in seconds of the algorithms. These results illustrate that despite the use of a second neural network, NeUClust performs comparably in terms of runtime while achieving lower regret.
\begin{table}[h]
    \centering
    \resizebox{\columnwidth}{!}
    {
    \begin{tabular}{|c|c|c|c|c|c|}
        \hline
        & Regret (ML) & Runtime (ML) & Regret (Y) & Runtime (Y) \\ 
        \hline
        NeUClust & \textbf{152} & 486 &\textbf{157} & 1976\\ 
        \hline
         CN-UCB & 168 & 332 & 182&1562  \\ 
        \hline
        CCMAB&  403&\textbf{141} &-  &-    \\
        \hline
        K-LinUCB & 207 & 1352  & 282&81293 \\ 
        \hline
         NeuralMAB & 227 & 187 &185 &\textbf{1317} \\
         \hline
    \end{tabular}
    }
    \caption{
    Cumulative regret and runtime (rounded in seconds) comparison of the algorithms for the MovieLens (ML) and Yelp (Y) datasets. NeUClust's runtime is significantly below K-LinUCB's but slightly higher than CN-UCB and NeuralMAB due to using a second neural network to estimate super-arm rewards.}
    \label{tab:runtime}
\end{table}
\begin{figure}[t]
\centering
\includegraphics[width=\linewidth,height=5cm,keepaspectratio]{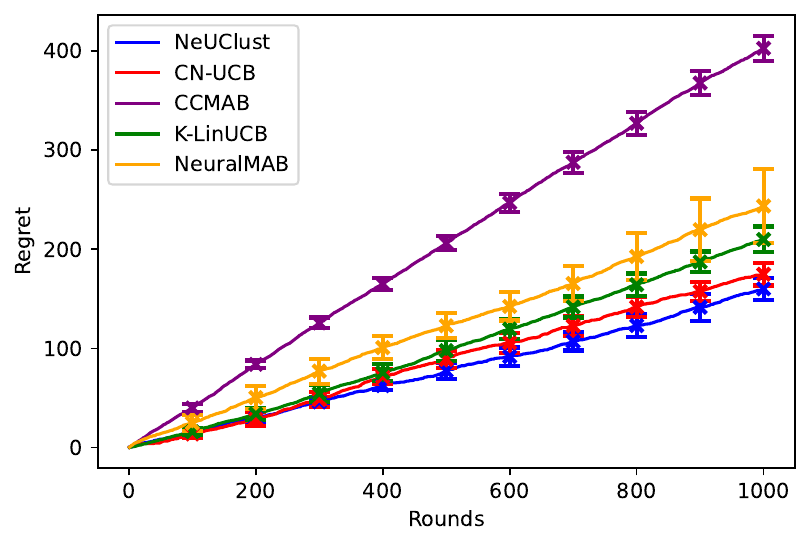}
\caption{Super arm regret plot for the MovieLens dataset. Our NeUClust algorithm has lower regret than all baselines.} 
\label{fig:movielens}
\end{figure}

\begin{figure}[t]
\centering
\includegraphics[width=\linewidth,height=5cm,keepaspectratio]{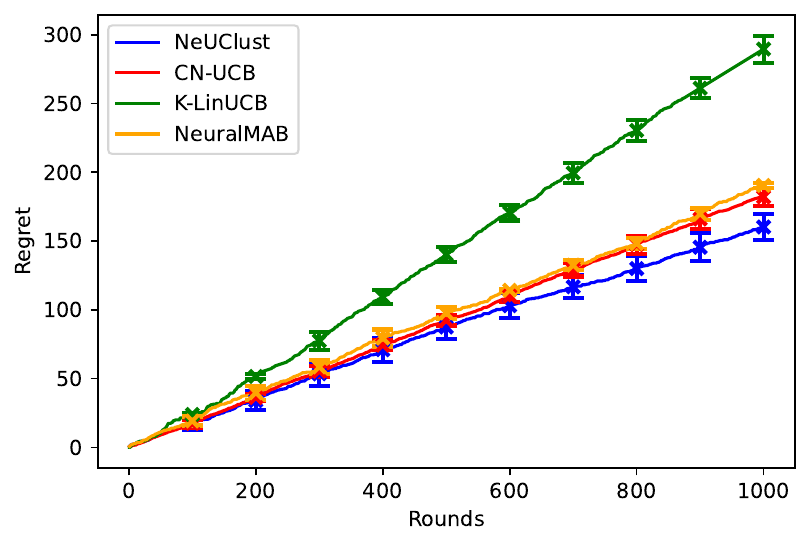}
\caption{Super arm regret plot for the Yelp dataset. Our NeUClust algorithm has lower regret than all baselines.}
\label{fig:yelp}
\end{figure}
\section{Conclusion}\label{sec:conclusion}
In this paper, we focus on the contextual combinatorial bandit problem and its application to recommender systems. We propose a novel solution algorithm NeUClust, which uses two neural networks to estimate the reward functions and exploits the underlying structure in the context space by doing clustering to guide super arm selections. Using theoretical results of previous works for overparameterized networks and proposing new lemmas to handle clustering and neural network contributions, we prove that our algorithm achieves $\widetilde{O}\left(\widetilde{d}\sqrt{T}\right)$ cumulative regret. We also run experiments on recommendation datasets which illustrate that our algorithm competes well with other algorithms. Future work includes further investigation of the use of different clustering methods within NeUClust, especially in other applications.
\bibliography{aaai25}

\section{Reproducibility Checklist}
This paper
\begin{itemize}
    \item Includes a conceptual outline and/or pseudocode description of AI methods introduced: Yes
    \item Clearly delineates statements that are opinions, hypothesis, and speculation from objective facts and results: Yes
    \item Provides well marked pedagogical references for less-familiare readers to gain background necessary to replicate the paper: Yes
\end{itemize}
Does this paper make theoretical contributions? Yes
\begin{itemize}
    \item All assumptions and restrictions are stated clearly and formally: Yes
    \item All novel claims are stated formally (e.g., in theorem statements): Yes
    \item Proofs of all novel claims are included: Yes
    \item Proof sketches or intuitions are given for complex and/or novel results: Yes
    \item Appropriate citations to theoretical tools used are given: Yes
    \item All theoretical claims are demonstrated empirically to hold: Yes
    \item All experimental code used to eliminate or disprove claims is included: Yes
\end{itemize}
Does this paper rely on one or more datasets? Yes
\begin{itemize}
    \item A motivation is given for why the experiments are conducted on the selected datasets: Yes
    \item All novel datasets introduced in this paper are included in a data appendix: Yes
    \item All novel datasets introduced in this paper will be made publicly available upon publication of the paper with a license that allows free usage for research purposes: Yes
    \item All datasets drawn from the existing literature (potentially including authors’ own previously published work) are accompanied by appropriate citations: Yes
    \item All datasets drawn from the existing literature (potentially including authors’ own previously published work) are publicly available: Yes
    \item All datasets that are not publicly available are described in detail, with explanation why publicly available alternatives are not scientifically satisficing: Yes
\end{itemize}
Does this paper include computational experiments? Yes
\begin{itemize}
    \item Any code required for pre-processing data is included in the appendix: Yes
    \item All source code required for conducting and analyzing the experiments is included in a code appendix: Yes
    \item All source code required for conducting and analyzing the experiments will be made publicly available upon publication of the paper with a license that allows free usage for research purposes: Yes
    \item All source code implementing new methods have comments detailing the implementation, with references to the paper where each step comes from: Yes
    \item If an algorithm depends on randomness, then the method used for setting seeds is described in a way sufficient to allow replication of results: Yes
    \item This paper specifies the computing infrastructure used for running experiments (hardware and software), including GPU/CPU models; amount of memory; operating system; names and versions of relevant software libraries and frameworks: Yes
    \item This paper formally describes evaluation metrics used and explains the motivation for choosing these metrics: Yes
    \item This paper states the number of algorithm runs used to compute each reported result: Yes
    \item Analysis of experiments goes beyond single-dimensional summaries of performance (e.g., average; median) to include measures of variation, confidence, or other distributional information: Yes
    \item The significance of any improvement or decrease in performance is judged using appropriate statistical tests (e.g., Wilcoxon signed-rank): No since this is not applicable in this line of research.
    \item This paper lists all final (hyper-)parameters used for each model/algorithm in the paper’s experiments: Yes
    \item This paper states the number and range of values tried per (hyper-) parameter during development of the paper, along with the criterion used for selecting the final parameter setting: Yes
\end{itemize}
\appendix
\pagebreak
\section{Proofs} \label{sec:proofs}

\textbf{Lemma 1.} With probability $(1 - O(L) \cdot e^{-\Omega(m\varepsilon^2/L)})(1 - e^{-\Omega(m\omega^\frac{2}{3} L)})(1-me^{(-m\rho^2/4)})$ the output of the first neural network 
\begin{align*}
    f(\mathbf{x}^i) \leq 4\sqrt{1/\lambda_1}(9T^{-\beta\alpha+0.5})
\end{align*}
 for some $\beta > 0$, $\alpha > 0$ and $\omega \leq O(L^{-\frac{9}{2}}(\log m)^{-3})$.
\begin{proof}
The last layer of the first neural network has weights $\mathbf{W}_{L}^{(0)}$ initialized and independently sampled from $\mathcal{N}(0,2/m)$. We know for the last layer that $\mathbf{W}_{L}^{(0)} \in \mathbb{R}^m$ and $\mathbf{W}_{L}^{(0)} \sim \mathcal{N}(0,2/m)$. By applying the Gaussian tail bound we get for any $\rho \geq 0$ and $\forall k \in [m]$
\begin{align*}
    &P(W_{L,k}^{(0)} \geq \rho) \leq e^{-\rho^2/2(2/m)}=e^{-m\rho^2/4}\\
    &1-e^{-m\rho^2/4} \leq 1- P(W_{L,k}^{(0)} \geq \rho) = P(W_{L,k}^{(0)} \leq \rho)\\
    &P(\mathbf{W}_L^{(0)} \leq \rho) \geq 1-me^{-m\rho^2/4}
\end{align*}
where the first and second inequality follow by the Gaussian tail bound, and the third inequality follows from union bound. It should be noted that since we have an over parameterized network $m$ is relatively large and hence this event happens with very high probability. 

By Lemma 7, for an $\varepsilon \in (0,1]$ which represents the largest error in the regression over $T$ rounds, i.e. the error between the predicted output of the first neural network and the actual base arm reward of the arm selected, for the last layer at initialization we have that 
\begin{align*}
    \left \lVert \mathbf{s}^{(0)}_{L}\right \rVert_2 \in [1 - \varepsilon, 1 + \varepsilon]
\end{align*}
By Lemma 8 we are required to have $\overrightarrow{\mathbf{W}}'$ which corresponds to $\mathbf{W}_L$ in our case. Hence we have $ \|\mathbf{W}_L-\mathbf{W}_L^{(0)}\|_2 \leq \|\mathbf{W}_L\|_2 + \|\mathbf{W}_L^{(0)}\|_2 \leq \omega + \sqrt{m}\rho$ which happens with probability at least $1-me^{-m\rho^2/4}$. Given that by Lemma 9 we know that with probability at least $1-\Delta$, $\|\mathbf{W}_L^{(t)}-\mathbf{W}_L^{(0)}\|_2 \leq 2\sqrt{t/(m\lambda_1)} \leq 2\sqrt{T/(m\lambda_1)}$ we can select $\rho$ such that $\omega + \sqrt{m}\rho \geq 2\sqrt{T/(m\lambda_1)}$ which would mean that we can choose $\rho = 2\frac{\sqrt{T/\lambda_1}}{m}$. Also the above arguments imply that $\forall k \in [m]$, $|W_{L,k}^{(t)}| \leq 2\sqrt{T/\lambda_1}+m\rho \leq 4\sqrt{T/\lambda_1}$ by $\ell_1$-$\ell_2$ norm inequality. This guarantees that 
\begin{align*}
    \|\mathbf{s}_{L}- \mathbf{s}_{L}^{(0)}\|_2 \leq O(\omega L^{\frac{5}{2}}\sqrt{\log m})
\end{align*}
 and since $\omega \leq O(L^{-\frac{9}{2}}\log^{-3} m)$ and 
\begin{align*}
    \|\mathbf{s}_{L}- \mathbf{s}_{L}^{(0)}\|_2 \geq \|\mathbf{s}_{L}\|_2-\|\mathbf{s}_{L}^{(0)}\|_2
\end{align*}
  it follows that $\|\mathbf{s}_{L}\|_2 \leq 2+\varepsilon$. By previous arguments we have that $\|\mathbf{W}_L^{(t)}-\mathbf{W}_L^{(0)}\|_2 \leq  2\sqrt{T/(m\lambda_1)}$ which implies $\|\mathbf{W}_L^{(t)}\|_2-\|\mathbf{W}_L^{(0)}\|_2 \leq 2\sqrt{T/(m\lambda_1)}$ and $\|\mathbf{W}_L^{(t)}\|_2 \leq 2\sqrt{T/(m\lambda_1)}+\sqrt{m}\rho$. Now by the $\ell_1$-$\ell_2$ norm inequality we have that 
\begin{gather*}
    \|\mathbf{W}^{(t)}_{L}\|_2 \leq \|\mathbf{W}^{(t)}_{L}\|_1 \leq \sqrt{m}\|\mathbf{W}^{(t)}_{L}\|_2 \leq 2\sqrt{T/\lambda_1}+m\rho\\
    \sum_{z=1}^{m}W^{(t)}_{L,z} \leq \|\mathbf{W}^{(t)}_{L}\|_1 \leq 2\sqrt{T/\lambda_1}+m\rho \leq 4\sqrt{T/\lambda_1} 
\end{gather*}
 where $z \in [m]$ and $W^{(t)}_{L,z}$ denotes the $z$-th weight of the hidden layer $L$ at iteration $t$. Given that this event happens with probability $(1 - O(L) \cdot e^{-\Omega(m\varepsilon^2/L)})(1 - e^{-\Omega(m\omega^\frac{2}{3} L)})$ due to Lemma 7 and the fact that the activation function is ReLU, the following holds with probability at least $(1 - O(L) \cdot e^{-\Omega(m\varepsilon^2/L)})(1 - e^{-\Omega(m\omega^\frac{2}{3} L)})(1-me^{(-m\rho^2/4)})(1-\Delta):$
\begin{align*}
    &f(\mathbf{x}^i) \leq (2+\varepsilon) \sum_{z=1}^{m}W^{(t)}_{L,z} \\
    &f(\mathbf{x}^i) \leq (2+\varepsilon)4\sqrt{T/\lambda_1}\\
    &f(\mathbf{x}^i) \leq 4(2+\varepsilon)^2 \sqrt{T/\lambda_1}\\ 
    &f(\mathbf{x}^i) \leq 4(9\varepsilon^{\alpha}) \sqrt{T/\lambda_1}   
\end{align*}
where $\alpha = \log_{\varepsilon}\frac{4+2\varepsilon}{9}$ for some $\varepsilon$.  Letting $\varepsilon = T^{-\beta}$ for some $\beta > 0$ 
\begin{align*}
    f(\mathbf{x}^i) \leq 4\sqrt{1/\lambda_1}(9T^{-\beta\alpha+0.5})
\end{align*}
\end{proof}
\textbf{Lemma 2.} With probability $(1 - O(L) \cdot e^{-\Omega(m\varepsilon^2/L)})(1 - e^{-\Omega(m\omega^\frac{2}{3} L)})(1-me^{(-m\rho^2/4)})(1-\Delta)$ the output of the super arm network and the base arm reward vector for the chosen super arm $S_t$ are bounded by
\begin{align*}
    &F(\mathbf{f}(\mathbf{x}_{t,S_t})) \leq 4\sqrt{1/\lambda_1}(9T^{-\beta\alpha+0.5}) K n^{L_m} \\
    &\mathbf{h}(\mathbf{x}_{t,S_t}) \leq \mathbf{v}_{t,S_t} + \mathbf{1}^K\cdot   F(\mathbf{f}(\mathbf{x}_{t,S_t}))
\end{align*}
\begin{proof}
Given that by Lemma 1 we know that the output of the first neural network is bounded for all time steps by $f(\mathbf{x}^i) \leq 4\sqrt{1/\lambda_1}(9T^{-\beta\alpha+0.5})$ we can now bound the output of the second neural network by exploiting its structure. As the weight transformation function $\mathbf{q}$ we use a sigmoid function which means that the weights of the network are bounded $(0,1)$. We denote by $\mathbf{o}_\ell$ the output of the $\ell$-th hidden layer of the second network where $\ell \in [L_m]$. $o_{\ell,z}$ denotes the $z$-th output of the $\ell$-th hidden layer where $z \in [K]$. Hence it follows that $\forall z \in [K]$
\begin{align*}
    &{o}_{1,z} \leq 4\sqrt{1/\lambda_1}(9T^{-\beta\alpha+0.5}) K \\
    &{o}_{2,z} \leq 4\sqrt{1/\lambda_1}(9T^{-\beta\alpha+0.5}) K n \\
    &{o}_{L_m,z} \leq 4\sqrt{1/\lambda_1}(9T^{-\beta\alpha+0.5}) K n^{L_m-1} \\
    &F(\mathbf{f}(\mathbf{x}_{t,S_t})) \leq 4\sqrt{1/\lambda_1}(9T^{-\beta\alpha+0.5}) K n^{L_m} 
\end{align*}
where the first inequality follows due to the fact that the first hidden layer has $K$ inputs. The second, third and fourth inequalities follow similarly due to the other hidden layers having input dimension $n$ which is also the width of the network.

By Corollary 1 we know that the base arm reward function $h$ is upper bounded by the sum of their UCB term $u_{t,i}$ and offset term $e_t$ hence to prove the lemma we need 
\begin{align*}
    \mathbf{u}_{t,S_t} +\mathbf{1}^K\cdot e_t \leq \mathbf{v}_{t,S_t} + \mathbf{1}^K \cdot F(\mathbf{f}(\mathbf{x}_{t,S_t})) 
\end{align*}
where $\mathbf{u}_{t,S_t}$ is a vector of the UCB terms of the selected super arm $S_t$ and similarly $\mathbf{v}_{t,S_t}$ is the vector of our UCB term for the selected super arm. Since $\mathbf{Z}_t$ and $\gamma_t$ are the same in our algorithm and that proposed by \cite{pmlr-v202-hwang23a}, for the same $S_t$ (implying same contexts) we have $\mathbf{u}_{t,S_t} = \mathbf{v}_{t,S_t}$. Hence we only need to show
\begin{align*}
    e_t \leq e_T \leq F(\mathbf{f}(\mathbf{x}_{t,S_t})) \leq 4\sqrt{1/\lambda_1}(9T^{-\beta\alpha+0.5}) K n^{L_m}
\end{align*}
To show that $F(\mathbf{f}(\mathbf{x}_{t,S_t})) \geq e_T$ we let $\kappa > 0$ represent the lower bound on all the weights and $\epsilon>0$ is a lower bound on the sum of the base arm network outputs (the lower bound $\epsilon>0$ since as mentioned in the Proposed NeUClust Algorithm section we pass the base arm network outputs through a ReLU function before feeding them as input to the super arm network. This implies that $F(\mathbf{f}(\mathbf{x}_{t,S_t})) \geq \epsilon \kappa^{L_m+1}n^{L_m} \geq e_T$ for large $m$ which means that 
\begin{align*}
    \mathbf{h}(\mathbf{x}_{t,S_t}) \leq \mathbf{v}_{t,S_t} + \mathbf{1}^K\cdot F(\mathbf{f}(\mathbf{x}_{t,S_t}))
\end{align*}
\end{proof}
\textbf{Lemma 3.}
For any \(\Delta \in (0,1)\), suppose the width of the neural network \(m\) satisfies Condition 1. If \(\eta_1   \leq C_1(T K m L + m\lambda_1)^{-1}\), and \(\lambda_1 \geq C_2 L K\), for some positive constant \(C_1, C_2\) with \(C_2 \geq \sqrt{\max_{t,i} \left\|\mathbf{g}(\mathbf{x}_{t,i};\boldsymbol{\theta}_{t-1})/\sqrt{m}\right\|_2^2/L}\) and let \(\gamma_t\) be a positive scaling factor defined as
\begin{align*}
   \gamma_t = \Gamma_{1,t} \left(\zeta\sqrt{\log\frac{\det \mathbf{Z}_t}{\det \lambda \mathbf{I}} + \Gamma_{2,t} - 2\log \Delta} + \sqrt{\lambda_1}S\right) + \\
    (\lambda_1 + C_1 t K L) \left((1 - \eta_1 m \lambda_1)^{\frac{J}{2}} \sqrt{\frac{tK}{\lambda_1}} + \Gamma_{3,t} \right), 
\end{align*}
where
\[
\begin{aligned}
\Gamma_{1,t} &= \sqrt{1 + C_{\Gamma,1} t^{\frac{7}{6}} K^{\frac{7}{6}} L^{4} \lambda_1^{-\frac{7}{6}} m^{-\frac{1}{6}} \sqrt{\log m}}, \\
\Gamma_{2,t} &= C_{\Gamma,2} t^{\frac{5}{3}} K^{\frac{5}{3}} L^{4} \lambda_1^{-\frac{1}{6}} m^{-\frac{1}{6}} \sqrt{\log m}, \\
\Gamma_{3,t} &= C_{\Gamma,3} t^{\frac{7}{6}} K^{\frac{7}{6}} L^{\frac{7}{2}} \lambda_1^{-\frac{7}{6}} m^{-\frac{1}{6}} \sqrt{\log m}(1 + \sqrt{tK/\lambda_1}),
\end{aligned}
\]
for some constants $C_1, C_{\Gamma,1}, C_{\Gamma,2}, C_{\Gamma,3} > 0$. Then the expected super arm regret of a single round is bounded with probability at least $1-\Delta$ as 
\begin{align*}
    & u(\mathbf{h}(\mathbf{x}_{t,S_t^*}))-u(\mathbf{h}(\mathbf{x}_{t,S_t})) \leq D\sqrt{{\sum_{i\in S_t}{\|\mathbf{g}(\mathbf{x}_{t,i};\boldsymbol{\theta}_{t-1})/\sqrt{m}\|_{\mathbf{Z}_{t-1}^{-1}}^2}}}
    \\
    &+BKe_t +  BK^2 4\sqrt{1/\lambda_1}(9T^{-\beta\alpha+0.5}) n^{L_m}+\\
    &B'BK\left( \frac{C^{''}Q\delta_t \sqrt{d}N}{P_t Y} +  \frac{NQ\delta_t\sqrt{d}}{Y} \right)
\end{align*}
where $e_t := C_3\gamma_{t-1} t^{\frac{1}{6}} K^{\frac{1}{6}} L^{\frac{7}{2}} m^{-\frac{1}{6}}\lambda_1^{-\frac{2}{3}} \sqrt{\log m}
+ C_4 t^{\frac{2}{3}} K^{\frac{2}{3}} m^{-\frac{1}{6}}\lambda_1^{-\frac{2}{3}} \sqrt{\log m},$ $ D = 2B \gamma_{T}\sqrt{K}$ for some absolute constants $C_3,C_4>0$. 

\begin{proof}
As seen in equation (5), the regret equation shows that the expected regret of a single round is 
\begin{align*}
    u(S_t^*,\mathbf{h}(\mathbf{x}_{t,S_t^*}))  -u(S_t,\mathbf{h}(\mathbf{x}_{t,S_t}))
\end{align*}
where we will instead use the notation $u(\mathbf{h}(\mathbf{x}_{t,S_t}))$ for simplicity. By our clustering  assumption (Assumption \ref{clustering_assump}) we have 
\begin{align*}
    & \bar{\mathbf{x}}_{c,t}\cdot\mathbf{x}_{c'}  \leq \delta_t \\
    & \left \lVert\bar{\mathbf{x}}_{c,t}\right \rVert_2 \left \lVert\mathbf{x}_{c'}\right \rVert_2  \leq Q\bar{\mathbf{x}}_{c,t}\cdot\mathbf{x}_{c'}  \leq Q\delta_t\\
    &\left \lVert\bar{\mathbf{x}}_{c,t}\right \rVert_2 \leq \ \frac{Q\delta_t}{\left \lVert\mathbf{x}_{c'}\right \rVert_2}\\
    &\left \lVert \mathbf{x}_{c'}\right \rVert_2 \geq Y \\
    &\left \lVert\bar{\mathbf{x}}_{c,t}\right \rVert_2 \leq \ \frac{Q\delta_t}{Y}\\
    &\left \lVert\bar{\mathbf{x}}_{c,t}\right \rVert_1 \leq \sqrt{d}\left \lVert\bar{\mathbf{x}}_{c,t}\right \rVert_2 \leq \frac{Q\delta_t\sqrt{d}}{Y}
\end{align*}
where the second inequality follows by the Kantorovich inequality where $Q= \frac{A}{G}>0$ and $A=(V+Z)/2, G=\sqrt{VZ}$ and $0< V\leq\frac{\bar{x}_{i,t}}{x_{j'}}\leq Z<\infty$ which holds $\forall i\in c$ and $\forall j'\in c'$. It should be noted that for the Kantorovich inequality to hold we require $\bar{\mathbf{x}}_{c,t}$ and $\mathbf{x}_{c'}$ to have non-zero elements (Assumption \ref{ell0_assump}) which is not a very restrictive constraint since the non-zero constraint is on the average context vector of the empirical cluster and the actual cluster. A violation of this constraint would only occur if none of the users belonging to a cluster have rated a specific movie genre which would result in the average rating of the context corresponding to that movie genre to be zero. The third inequality follows trivially. The fourth inequality lower bounds the $\ell_2$ norm of all the actual clusters $c'$ and $Y>0$ here is an absolute constant. The fifth inequality follows by previous arguments and the last inequality follows by the $\ell_1-\ell_2$ norm inequality. By definition $\bar{\mathbf{x}}_{c,t}$ is the average cluster vector of arms assigned to cluster $c$ in round $t$, hence
\begin{align*}
    & \bar{\mathbf{x}}_{c,t} = \frac{\sum_{i\in c}x_{i,t}}{|c|}
\end{align*}
which implies that since all context elements are non-negative and the chosen super arm at round $t$ is a subset of the cluster that was chosen
\begin{align*}
    &\left \lVert {\mathbf{x}}_{t,S_t} \right \rVert_1 \leq |c_t| \left \lVert \bar{\mathbf{x}}_{c_t,t} \right \rVert_1\\
    &\left \lVert {\mathbf{x}}_{t,S_t} \right \rVert_1 \leq |c_t| \frac{Q\delta_t\sqrt{d}}{Y}\\
\end{align*}
where $c_t$ represents the cluster chosen at round $t$. The second inequality follows from previous arguments. It should be noted that $\mathbf{\bar{x}}_{c_t}$ is the average context vector of the arms in cluster $c_t$. The arms in the cluster is fixed throughout the time horizon as we assume the existence of a clustered structure between the arms which does not change over time. However, the context of the arms can change over time, as for example in the case of movie recommendation the context of an arm becomes average ratings for different genres which is possible to change over time. 
By Assumption \ref{continuity_h} we have that 
\begin{align*}
    &|{h}(\mathbf{x}_{t,i^*})-{h}(\mathbf{x}_{t,i}) |\leq B' \left \lVert \mathbf{x}_{t,i^*}-\mathbf{x}_{t,i} \right \rVert_1\\
    & {h}(\mathbf{x}_{t,i^*})-{h}(\mathbf{x}_{t,i}) \leq B' \left \lVert \mathbf{x}_{t,i^*}-\mathbf{x}_{t,i} \right \rVert_1\\
    & {h}(\mathbf{x}_{t,i^*})  \leq B' \left \lVert \mathbf{x}_{t,i^*}-\mathbf{x}_{t,i} \right \rVert_1 +{h}(\mathbf{x}_{t,i})\\
    & \mathbf{h}(\mathbf{x}_{t,S_t^*})  \leq  \mathbf{1}^K \cdot B'\left \lVert \mathbf{x}_{t,S_t^*}-\mathbf{x}_{t,S_t} \right \rVert_1 +\mathbf{h}(\mathbf{x}_{t,S_t})\\
\end{align*}
where the first three inequalities follow through Assumption \ref{continuity_h} and the last inequality follows since the third inequality holds $\forall i\in S_t \And \forall i^* \in S_t^*$. Now bounding $B' \left \lVert \mathbf{x}_{t,S_t^*}-\mathbf{x}_{t,S_t} \right \rVert_1$ we have
\begin{align*}
   & B' \left \lVert \mathbf{x}_{t,S_t^*}-\mathbf{x}_{t,S_t} \right \rVert_1 \leq B'\left( \left \lVert \mathbf{x}_{t,S_t^*}\right \rVert_1 + \left \lVert \mathbf{x}_{t,S_t}\right \rVert_1 \right)\\
   &P_t \leq \mathbf{\bar{x}}_{c,t} \cdot \mathbf{\bar{x}}_{c_t^*} \leq \left \lVert \mathbf{\bar{x}}_{c,t} \right \rVert_2 \left \lVert \mathbf{\bar{x}}_{c_t^*} \right \rVert_2 \\
   &P_t \left \lVert \mathbf{\bar{x}}_{c_t^*} \right \rVert_2 \leq
   \left \lVert \mathbf{\bar{x}}_{c,t} \right \rVert_2
   \left \lVert \mathbf{\bar{x}}_{c_t^*} \right \rVert_2^2 \leq  \left \lVert \mathbf{\bar{x}}_{c,t} \right \rVert_2 C^{''}\\
   & \left \lVert \mathbf{\bar{x}}_{c_t^*} \right \rVert_2 \leq \frac{C^{''}Q\delta_t}{P_t Y}\\
   & \left \lVert \mathbf{\bar{x}}_{c_t^*} \right \rVert_1 \leq \frac{C^{''}Q\delta_t \sqrt{d}}{P_t Y} \\
   & \left \lVert \mathbf{{x}}_{c_t^*} \right \rVert_1 \leq \frac{C^{''}Q\delta_t \sqrt{d}|c_t^*|}{P_t Y} \\
   & \left \lVert \mathbf{{x}}_{t,S_t^*} \right \rVert_1 \leq \left \lVert \mathbf{{x}}_{c_t^*} \right \rVert_1\\
   & B' \left \lVert \mathbf{x}_{t,S_t^*}-\mathbf{x}_{t,S_t} \right \rVert_1 \leq B'\left( \frac{C^{''}Q\delta_t \sqrt{d}|c_t^*|}{P_t Y} + |c_t| \frac{Q\delta_t\sqrt{d}}{Y} \right)\\
   & B' \left \lVert \mathbf{x}_{t,S_t^*}-\mathbf{x}_{t,S_t} \right \rVert_1 \leq B'\left( \frac{C^{''}Q\delta_t \sqrt{d}N}{P_t Y} +  \frac{NQ\delta_t\sqrt{d}}{Y} \right)  
\end{align*}
where the first inequality follows due to triangle inequality, the second line follows by Assumption \ref{clustering_assump} and the Cauchy-Schwarz inequality, the third line follows by multiplying both sides by $\left \lVert \mathbf{\bar{x}}_{c_t^*}\right \rVert_2$ and by letting $\left \lVert \mathbf{\bar{x}}_{c_t^*}\right \rVert_2^2 \leq C^{''}$ for some $C^{''}>0$ which holds for all $c_t^*$ and all $t$. The next inequality follows by the previously derived bound on $\left \lVert \mathbf{\bar{x}}_{c,t}\right \rVert_2$. The next line comes by the $\ell_1-\ell_2$ norm inequality. We let $\mathbf{x}_{c_t^*}=|c_t^*|\mathbf{\bar{x}}_{c_t^*}$ hence the next inequality follows trivially. For the next inequality, we assume that as the cluster sizes are larger than the super arm size $K$ and as we assume that the optimal super arm comes from the optimal cluster for a given round then it follows that $\left \lVert \mathbf{{x}}_{t,S_t^*} \right \rVert_1 \leq \left \lVert \mathbf{{x}}_{c_t^*} \right \rVert_1$ . The final two lines are a result of combining previous findings and bounding the cardinality of any cluster by the arm size $N$. Given these results we have 
\begin{align*}
    &\mathbf{h}(\mathbf{x}_{t,S_t^*})  \leq  B'\left( \frac{C^{''}Q\delta_t \sqrt{d}N}{P_t Y} +  \frac{NQ\delta_t\sqrt{d}}{Y} \right) +\mathbf{h}(\mathbf{x}_{t,S_t})\\
    &\mathbf{h}(\mathbf{x}_{t,S_t^*})\leq  B'\left( \frac{C^{''}Q\delta_t \sqrt{d}N}{P_t Y} +  \frac{NQ\delta_t\sqrt{d}}{Y} \right)+\mathbf{v}_{t,S_t} + \\   &\mathbf{1}^K \cdot F(\mathbf{f}(\mathbf{x}_{t,S_t}))
\end{align*}
\\\\
where the first inequality follows by previous results and the second inequality follows through Lemma \ref{lemma 4.2}. 
\begin{align*}
    &u(\mathbf{h}(\mathbf{x}_{t,S_t^*}))-u(\mathbf{h}(\mathbf{x}_{t,S_t})) \leq B\left \lVert \mathbf{h}(\mathbf{x}_{t,S_t^*})-\mathbf{h}(\mathbf{x}_{t,S_t})\right \rVert_1\\
    &\leq B \bigg\|  B'\left( \frac{C^{''}Q\delta_t \sqrt{d}N}{P_t Y} +  \frac{NQ\delta_t\sqrt{d}}{Y} \right)+\mathbf{v}_{t,S_t} + \mathbf{1}^K \cdot F(\mathbf{f}(\mathbf{x}_{t,S_t}))\\
    &- \mathbf{h}(\mathbf{x}_{t,S_t})\bigg\|_1\\
    &= B\sum_{i\in S_t}\bigg|B'\left( \frac{C^{''}Q\delta_t \sqrt{d}N}{P_t Y} +  \frac{NQ\delta_t\sqrt{d}}{Y} \right)+{v}_{t,i} +  F(\mathbf{f}(\mathbf{x}_{t,S_t}))\\
    &- {h}(\mathbf{x}_{t,i})\bigg|\\
    &\leq B\sum_{i\in S_t}\Bigg(\underbrace{B'\left( \frac{C^{''}Q\delta_t \sqrt{d}N}{P_t Y} +  \frac{NQ\delta_t\sqrt{d}}{Y} \right)}_{\text{$I_{1,t}$}}+\underbrace{|v_{t,i}-h(\mathbf{x}_{t,i})|}_{\text{$I_{2,t}$}} \\
    &+  \underbrace{|F(\mathbf{f}(\mathbf{x}_{t,S_t}))|}_{\text{$I_{3,t}$}}\Bigg) \\
\end{align*}
where the first inequality follows from the Lipschitz continuity assumption (Assumption \ref{continuity_u}) of the expected super arm reward function, the second inequality follows from previous results, the equality follows by definition and the fourth inequality follows from the triangle inequality. 
\\
We let $\delta_t$ be a decreasing function in $t$ which implies that our clustering scheme is able to identify the underlying clusters such that the average context vector between the empirical cluster is growing further apart from the other clusters which do not correspond to it. $P_t$ is non-decreasing function which governs how fast the average empirical context vector of clusters approach the actual clusters average context vector. However, since we only care about the order of the regret bound, the second term in $I_{1,t}$ determines the order coming from this term since it is not divided by the non-decreasing $P_t$ term.
\begin{align*}
    &I_{1,t} = B'\left( \frac{C^{''}Q\delta_t \sqrt{d}N}{P_t Y} +  \frac{NQ\delta_t\sqrt{d}}{Y} \right)\\
\end{align*}
\\
Now we focus on bounding $I_{2,t}$ by expanding $v_{t,i}$ as in Algorithm 1
\begin{align*}
     &I_{2,t} = \bigg|f(\mathbf{x}_{t,i})+\gamma_{t-1}\sqrt{\mathbf{g}(\mathbf{x}_{t,i};\boldsymbol{\theta}_{t-1})^T\mathbf{\mathbf{Z}}_{t-1}^{-1}\mathbf{g}(\mathbf{x}_{t,i};\boldsymbol{\theta}_{t-1})}\\
     &-h(\mathbf{x}_{t,i})\bigg|   
\end{align*}
By Lemma 10 we know that 
\begin{align*}
    I_{2,t} \leq 2 \gamma_{t-1}\|\mathbf{g}(\mathbf{x}_{t,i};\boldsymbol{\theta}_{t-1})/\sqrt{m}\|_{\mathbf{Z}_{t-1}^{-1}} + e_t
\end{align*}
hence we have
\begin{align*}
    &B\sum_{i\in S_t}I_{2,t} \leq B\sum_{i\in S_t}(2 \gamma_{t-1}\|\mathbf{g}(\mathbf{x}_{t,i};\boldsymbol{\theta}_{t-1})/\sqrt{m}\|_{\mathbf{Z}_{t-1}^{-1}} + e_t) 
    \\
    &\ = 2B \gamma_{t-1}\sum_{i\in S_t}(\|\mathbf{g}(\mathbf{x}_{t,i};\boldsymbol{\theta}_{t-1})/\sqrt{m}\|_{\mathbf{Z}_{t-1}^{-1}}) + BKe_t
    \\
    &=  2B \gamma_{t-1}\sum_{i\in S_t}\sqrt{(\|\mathbf{g}(\mathbf{x}_{t,i};\boldsymbol{\theta}_{t-1})/\sqrt{m}\|_{\mathbf{Z}_{t-1}^{-1}})^2} + BKe_t
    \\
    &=  2B \gamma_{t-1}K\frac{1}{K}\sum_{i\in S_t}\sqrt{(\|\mathbf{g}(\mathbf{x}_{t,i};\boldsymbol{\theta}_{t-1})/\sqrt{m}\|_{\mathbf{Z}_{t-1}^{-1}})^2} + BKe_t
    \\
    &\leq 2B \gamma_{T}K\sqrt{\frac{\sum_{i\in S_t}{(\|\mathbf{g}(\mathbf{x}_{t,i};\boldsymbol{\theta}_{t-1})/\sqrt{m}\|_{\mathbf{Z}_{t-1}^{-1}})^2}}{K}} + BKe_t
\end{align*}
where the first inequality follows through Lemma 10, the equalities follow trivially and the final inequality follows through Jensen's inequality since the square root function is a concave function and because $\gamma_{t-1}\leq\gamma_T$. Now focusing on the $I_{3,t}$ term we get 
\begin{align*}
    &B\sum_{i\in S_t}|F(\mathbf{f}(\mathbf{x}_{t,S_t})| \leq BK|F(\mathbf{f}(\mathbf{x}_{t,S_t})|
    \\
    &B\sum_{i\in S_t}|F(\mathbf{f}(\mathbf{x}_{t,S_t})|\leq BK^2 4\sqrt{1/\lambda_1}(9T^{-\beta\alpha+0.5}) n^{L_m}
\end{align*}
Now combining all the results we had, we get the following bound which is a bound on the expected super arm regret of a single round
\begin{align*}
    &u(\mathbf{h}(\mathbf{x}_{t,S_t^*}))-u(\mathbf{h}(\mathbf{x}_{t,S_t})) \leq B\bigg(\sum_{i\in S_t}I_{1,t} + \sum_{i\in S_t}I_{2,t} + \sum_{i\in S_t}I_{3,t}\bigg)
    \\
    &  \leq B'BK\left( \frac{C^{''}Q\delta_t \sqrt{d}N}{P_t Y} +  \frac{NQ\delta_t\sqrt{d}}{Y} \right) +  
    \\
    & 2B \gamma_{T}\sqrt{K}\sqrt{{\sum_{i\in S_t}{\|\mathbf{g}(\mathbf{x}_{t,i};\boldsymbol{\theta}_{t-1})/\sqrt{m}\|_{\mathbf{Z}_{t-1}^{-1}}^2}}} +BKe_t \\
    &+  BK^2 4\sqrt{1/\lambda_1}(9T^{-\beta\alpha+0.5}) n^{L_m}
\end{align*}
\end{proof}


\addtocounter{thm}{-1}
\begin{thm} 
Supposing that all the assumptions \ref{continuity_h}-\ref{ell0_assump} hold, let $\mathbf{h}=[h(\mathbf{x}^i)]_{i=1}^{TN}]$, and select the hyperparameters of the algorithm as follows 
\begin{align*}
    &\eta_1 = C_1(T K m L + m\lambda_1)^{-1}
    \\    &\frac{16m\cdot81T^{(-2\beta\alpha+1)}K^2n^{2L_m}}{e_T^2} \geq\lambda_1\geq C_2LK
    \\
    &J = 2\log\left(\sqrt{\lambda_1/(TK)}/(\lambda_1+C_3TKL)\right)TKL/(C_1\lambda_1) 
    \\
    &S \geq \sqrt{2\mathbf{h}^T\mathbf{H}^{-1}\mathbf{h}}
    \\
\end{align*}
for constants $C_1,C_2,C_3$ defined as in Lemma 3, $C_5 > 0 $ is another constant and $m$ is chosen such that it satisfies Condition 1, then the cumulative expected regret of NeUClust over $T$ rounds is upper bounded by 
\begin{align*}
    \mathcal{R}(T) = \widetilde{O}\left(\widetilde{d}\sqrt{T}\right)
\end{align*}    
\end{thm}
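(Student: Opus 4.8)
The plan is to sum the single-round regret bound of Lemma~3 over $t=1,\dots,T$ and show that the cumulative contribution of each of its four pieces is dominated by $\widetilde{O}(\widetilde{d}\sqrt{T})$. Writing the per-round bound as $Y_t+U_t+P+E_t$ in the notation of Lemma~3, we have $\mathcal{R}(T)\le\sum_{t=1}^T Y_t+\sum_{t=1}^T U_t+TP+\sum_{t=1}^T E_t$ with probability at least $1-\Delta$, so the argument reduces to controlling each sum separately: $\sum_t Y_t$ furnishes the leading $\widetilde{O}(\widetilde{d}\sqrt{T})$ order, and the remaining three are forced to lower order by the stated hyperparameter choices.

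First I would handle the dominant term $\sum_t Y_t=D\sum_t\sqrt{\sum_{i\in S_t}\lVert\mathbf{g}(\mathbf{x}_{t,i};\boldsymbol{\theta}_{t-1})/\sqrt{m}\rVert_{\mathbf{Z}_{t-1}^{-1}}^2}$ with $D=2B\gamma_T\sqrt{K}$. Applying Cauchy--Schwarz across the $T$ rounds gives $\sum_t Y_t\le D\sqrt{T}\big(\sum_{t=1}^T\sum_{i\in S_t}\lVert\mathbf{g}(\mathbf{x}_{t,i};\boldsymbol{\theta}_{t-1})/\sqrt{m}\rVert_{\mathbf{Z}_{t-1}^{-1}}^2\big)^{1/2}$. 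Since $\mathbf{Z}_t=\mathbf{Z}_{t-1}+\sum_{i\in S_t}\mathbf{g}(\mathbf{x}_{t,i};\boldsymbol{\theta}_{t-1})\mathbf{g}(\mathbf{x}_{t,i};\boldsymbol{\theta}_{t-1})^T/m$, a combinatorial version of the elliptical-potential (log-determinant) lemma bounds the inner double sum by $\widetilde{O}\big(\log\tfrac{\det\mathbf{Z}_T}{\det\lambda_1\mathbf{I}}\big)$, which I would then tie to the effective dimension: for $m$ large enough (Condition~1) the gradient Gram matrix concentrates around the NTK matrix $\mathbf{H}$, so $\log\tfrac{\det\mathbf{Z}_T}{\det\lambda_1\mathbf{I}}=\widetilde{O}(\widetilde{d})$ by the definition of $\widetilde d$. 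The same log-determinant appears inside $\gamma_T$, so $\gamma_T=\widetilde{O}(\sqrt{\widetilde{d}})$ once the optimization-error terms are killed, giving $D=\widetilde{O}(\sqrt{\widetilde{d}})$ and hence $\sum_t Y_t=\widetilde{O}(\sqrt{\widetilde d})\cdot\sqrt T\cdot\widetilde{O}(\sqrt{\widetilde d})=\widetilde{O}(\widetilde{d}\sqrt{T})$, exactly the claimed order.

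Next I would argue the three remaining sums are lower order. The choices $\eta_1=C_1(TKmL+m\lambda_1)^{-1}$ and $J=2\log(\sqrt{\lambda_1/(TK)}/(\lambda_1+C_3TKL))TKL/(C_1\lambda_1)$ make the gradient-descent factor $(1-\eta_1 m\lambda_1)^{J/2}\sqrt{TK/\lambda_1}$ negligible, so the $\Gamma_{i,t}$ corrections (each carrying $m^{-1/6}$) vanish as $m\to\infty$; consequently $\sum_t U_t=BK\sum_t e_t=\widetilde{O}(\sqrt{T})$ under Condition~1, since $e_t$ itself carries an $m^{-1/6}$ factor that dominates its $t^{2/3}$ growth once $m\gtrsim T^{10}$. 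The super-arm term is addressed through the admissible window $C_2LK\le\lambda_1\le 16m\cdot 81\,T^{-2\beta\alpha+1}K^2 n^{2L_m}/e_T^2$, which couples $\lambda_1$ to the network-output bound of Lemma~2 and is what the proof leverages to keep $TP$ from dominating. Finally $\sum_t E_t\propto\sum_t\delta_t(1+1/P_t)$, and Assumption~4, under which $\delta_t$ is decreasing and $P_t$ non-decreasing, forces this to be sublinear; taking $\delta_t=\widetilde{O}(t^{-1/2})$ yields $\sum_t E_t=\widetilde{O}(\sqrt T)$. The choice $S\ge\sqrt{2\mathbf{h}^T\mathbf{H}^{-1}\mathbf{h}}$ ensures the $\sqrt{\lambda_1}S$ term of $\gamma_t$ certifies a valid confidence radius, which is what makes Lemma~3 applicable at every round.

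The main obstacle I anticipate is twofold. First, establishing the combinatorial elliptical-potential bound and the concentration $\log\tfrac{\det\mathbf{Z}_T}{\det\lambda_1\mathbf{I}}=\widetilde{O}(\widetilde d)$ requires that the $K$ per-round gradient updates not break the telescoping argument and that the finite-width Gram matrix stay uniformly close to $\mathbf{H}$ for all $t$ --- this is where Condition~1's lower bounds on $m$ are essential. Second, and more delicate to our setting, is verifying that the two non-standard terms are genuinely lower order: the super-arm term $P$ must be absorbed through the tight two-sided constraint on $\lambda_1$ \emph{without} simultaneously inflating $\gamma_T$ (and thus $\sum_t Y_t$), and $\sum_t E_t$ must be controlled purely through the decay of $\delta_t$ guaranteed by Assumption~4. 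Balancing these competing demands on $\lambda_1$ while holding the leading order at $\widetilde{O}(\widetilde d\sqrt T)$ is the crux of extending the oracle-based analysis of prior work to our oracle-free, clustered setting.
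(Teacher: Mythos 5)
Your proposal follows essentially the same route as the paper: sum the Lemma~3 decomposition over $t$, bound $\sum_t Y_t$ by Cauchy--Schwarz plus the elliptical-potential/effective-dimension lemma (with $\gamma_T=\widetilde{O}(\sqrt{\widetilde d})$) to obtain the leading $\widetilde{O}(\widetilde{d}\sqrt{T})$ term, and force the other three sums to lower order via the hyperparameter choices and the decay of $\delta_t$. The one imprecision is the $P$ term: the paper controls $TP$ by choosing $\beta\alpha=1$ in the Lemma~1 output bound, so that $T\cdot T^{-\beta\alpha+0.5}=\sqrt{T}$, whereas the two-sided window on $\lambda_1$ serves only to make Lemma~2's comparison $e_T\le F(\mathbf{f}(\mathbf{x}_{t,S_t}))$ valid, not to tame the $T$-dependence of $TP$.
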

\begin{proof}
From equation \ref{regr_form} we have that
\begin{align*}
    &\mathcal{R}(T) = \sum_{t=1}^{T} ( u(S_t^*,\mathbf{h}(\mathbf{x}_{t,S_t^*})))  -u(S_t,\mathbf{h}(\mathbf{x}_{t,S_t}))) 
    \\
    &\mathcal{R}(T)\leq \sum_{t=1}^{T} \Bigg(\underbrace{B'BK\left( \frac{C^{''}Q\delta_t \sqrt{d}N}{P_t Y} +  \frac{NQ\delta_t\sqrt{d}}{Y} \right)}_{\text{$E_{t}$}}+\\
    &\underbrace{2B \gamma_{T}\sqrt{K}\sqrt{{\sum_{i\in S_t}{\|\mathbf{g}(\mathbf{x}_{t,i};\boldsymbol{\theta}_{t-1})/\sqrt{m}\|_{\mathbf{Z}_{t-1}^{-1}}^2}}}}_{\text{$Y_{t}$}} + 
    \\    &\underbrace{BKe_t}_{\text{$U_{t}$}} +  \underbrace{BK^2 4\sqrt{1/\lambda_1}(9T^{-\beta\alpha+0.5}) n^{L_m}}_{\text{$P$}} \Bigg)
\end{align*}
By Lemma 3, we get the above inequality, now focusing on $E_{t}$ we can see that 
\begin{align*}      &\sum_{t=1}^TE_{t}\leq B'BK\sum_{t=1}^T\left(\frac{C^{''}Q\delta_t\sqrt{d}N}{P_tY}+\frac{NQ\delta_t\sqrt{d}}{Y}\right)\\
& \leq B'BKC_5\left(\frac{C^{''}Q\sqrt{d}N}{Y}\sum_{t=1}^T \frac{1}{C_6t^{1.5}} +\frac{NQ\sqrt{d}}{Y}\sum_{t=1}^T\frac{1}{t}\right)\\
&\leq B'BKC_5\left(\frac{C^{''}Q\sqrt{d}N\left(2\sqrt{T}-1\right)}{C_6Y}+\frac{NQ\sqrt{d}(2\sqrt{T}-1)}{Y}\right) 
\end{align*}
where the first inequality follows by Lemma 3, the second inequality follows by the selection of $\delta_t = \frac{C_5}{t}$ for some $C_5> 0$ and $P_t = C_6\sqrt{t}$. The final inequality is due to the fact that $\forall T\geq 2, \sum_{t=1}^T\frac{1}{t} < 2\sqrt{T}-1$ and since $\sum_{t=1}^T \frac{1}{t^{1.5}} < \sum_{t=1}^T \frac{1}{t}$. Selecting $\delta_t$ in such a way corresponds to assuming that our clustering scheme finds the correct clustering as the round number increases. We want to note that the selection of $P_t$ is not necessary and we could instead let $P_t = C_7$ for some $C_7>0$ and the order of the bound would still be the same. For the $Y_{t}$ term we have
\begin{align*}
    &\sum_{t=1}^{T}Y_{t} = 2B\gamma_T\sqrt{K} \sum_{t=1}^{T}{{\sqrt{{\sum_{i\in S_t}{\|\mathbf{g}(\mathbf{x}_{t,i};\boldsymbol{\theta}_{t-1})/\sqrt{m}\|_{\mathbf{Z}_{t-1}^{-1}}^2}}}}} \\
    & \sum_{t=1}^{T}Y_{t} \leq 2B\gamma_T\sqrt{K} \sqrt{{{T\sum_{t=1}^{T}\sum_{i\in S_t}{(\|\mathbf{g}(\mathbf{x}_{t,i};\boldsymbol{\theta}_{t-1})/\sqrt{m}\|_{\mathbf{Z}_{t-1}^{-1}})^2}}}} \\
    & \leq 2B\gamma_T\sqrt{K}\sqrt{T(2\widetilde{d}\log(1+T N/\lambda_1) + 2 + X)}
    \\
    & \leq 2B\gamma_T\sqrt{K}\sqrt{T(2\widetilde{d}\log(1+T N/\lambda_1) + 3)}
\end{align*}
where the first inequality holds due to the Cauchy-Schwarz inequality, the second inequality holds due to Lemma 11 where $X=C_3 T^{\frac{5}{3}} K^{\frac{5}{3}} L^{4} \lambda_1^{-\frac{1}{6}} m^{-\frac{1}{6}} \sqrt{\log m}$ and the last inequality holds for sufficiently large $m$. By the analysis of \cite{pmlr-v202-hwang23a} in their Theorem 1 and Condition 1, we can bound the scaling factor 
\begin{align*}
    \gamma_T \leq 2\zeta\sqrt{\widetilde{d}\log(1+TN/\lambda_1)+3-2\log\Delta}+2\sqrt{\lambda_1 }S+2
\end{align*}
which happens for the selected $\eta_1$ and $J$ values stated above in the theorem description. Now focusing on the remaining terms, we have 
\begin{align*}
    &\sum_{t=1}^{T}U_{t} = TBKe_T \leq BK(\gamma_T+1) 
    \\
    &\sum_{t=1}^{T}U_{t} \leq BK\bigg(2\zeta\sqrt{\widetilde{d}\log(1+TN/\lambda_1)+3-2\log\Delta}+\\
    &2\sqrt{\lambda_1}S+3\bigg)
\end{align*}
where the first inequality holds because $e_t\leq e_T$ and the rest hold for sufficiently large $m$. For the $P$ term we have 
\begin{align*}
    \sum_{t=1}^TP \leq BK^2 4\sqrt{1/\lambda_1}(9T^{-\beta\alpha+1.5}) n^{L_m}
\end{align*}
where we can choose $\beta\alpha = 1$ so that 
\begin{align*}
    \sum_{t=1}^TP \leq BK^2 4\sqrt{1/\lambda_1}(9\sqrt{T}) n^{L_m}
\end{align*}
 Now combining all the terms together we get the final regret as 
\begin{align*}
    &\mathcal{R}(T) < 2B\sqrt{K}\sqrt{T(2\widetilde{d}\log(1+T N/\lambda_1) + 3)}\cdot
    \\    &\left(2\zeta\sqrt{\widetilde{d}\log(1+TN/\lambda_1)+3-2\log\Delta}+2\sqrt{\lambda_1 }S+2\right)+
    \\    &BK\left(2\zeta\sqrt{\widetilde{d}\log(1+TN/\lambda_1)+3-2\log\Delta}+2\sqrt{\lambda_1}S+3\right)+
    \\
    &BK^2 4\sqrt{1/\lambda_1}(9\sqrt{T}) n^{L_m}+\\
    & B'BKC_5\left(\frac{C^{''}Q\sqrt{d}N\left(2\sqrt{T}-1\right)}{C_6Y}+\frac{NQ\sqrt{d}(2\sqrt{T}-1)}{Y}\right) 
\end{align*}
Thus, we obtain $\widetilde{O}\left(\widetilde{d}\sqrt{T}\right)$ expected regret over $T$ rounds.
\end{proof}
\subsection{Theoretical Results from Other Works}
\begin{lem}
\label{lemma 5.1}(Lemma 5.1 in \cite{neuralucbzhou20a}) For any $\Delta \in (0, 1)$, suppose that there exists a positive constant $C$ such that
\begin{equation*}
    m \geq CT^4 N^4 L^6 \lambda_0^{-4} \log(T^2 N^2 L/\Delta).
\end{equation*}
Then, with probability at least $1 - \Delta$, there exists a $\boldsymbol{\theta}^* \in \mathbb{R}^p$ such that for all $i \in [TN]$,
\begin{equation*}
    h(\mathbf{x}^i) = \mathbf{g}(\mathbf{x}^i; \boldsymbol{\theta}_0)^T (\boldsymbol{\theta}^* - \boldsymbol{\theta}_0) ,
\end{equation*}
\begin{equation*}
    \sqrt{m} \|\boldsymbol{\theta}^* - \boldsymbol{\theta}_0\|_2 \leq \sqrt{\mathbf{h}^T \mathbf{H}^{-1} \mathbf{h}},
\end{equation*}
where $\mathbf{H}$ is the NTK matrix defined in Definition 1 and $\mathbf{h} = [h(\mathbf{x}^i)]_{i=1}^{TN}$.
\end{lem}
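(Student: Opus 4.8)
This is a finite-width neural tangent kernel (NTK) linearization statement: it asserts that the unknown reward $h$, evaluated on the $TN$ observed contexts, lies exactly in the span of the initialization gradient features $\mathbf{g}(\mathbf{x}^i;\boldsymbol{\theta}_0)$, together with a controllable bound on the interpolating weight offset $\boldsymbol{\theta}^*-\boldsymbol{\theta}_0$. The plan is to exhibit the witness $\boldsymbol{\theta}^*$ explicitly as the minimum-$\ell_2$-norm interpolant built from the gradient Gram matrix, and then bound its norm by comparing the finite-width Gram matrix to the population kernel $\mathbf{H}$ of Definition 1.

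First I would stack the gradients into $\mathbf{G} = [\mathbf{g}(\mathbf{x}^1;\boldsymbol{\theta}_0),\dots,\mathbf{g}(\mathbf{x}^{TN};\boldsymbol{\theta}_0)]\in\mathbb{R}^{p\times TN}$ and define the empirical (finite-width) NTK matrix $\widehat{\mathbf{H}} := \mathbf{G}^T\mathbf{G}/m$, which is exactly the object accumulated in $\mathbf{Z}_t$ by the algorithm. The interpolation requirement is the linear system $\mathbf{G}^T(\boldsymbol{\theta}^*-\boldsymbol{\theta}_0)=\mathbf{h}$. Assuming $\widehat{\mathbf{H}}$ is invertible, I would take $\boldsymbol{\theta}^*-\boldsymbol{\theta}_0 = \tfrac{1}{m}\mathbf{G}\widehat{\mathbf{H}}^{-1}\mathbf{h}$; substituting gives $\mathbf{G}^T(\boldsymbol{\theta}^*-\boldsymbol{\theta}_0) = \tfrac{1}{m}\mathbf{G}^T\mathbf{G}\,\widehat{\mathbf{H}}^{-1}\mathbf{h} = \widehat{\mathbf{H}}\widehat{\mathbf{H}}^{-1}\mathbf{h} = \mathbf{h}$, so the claimed exact equality $h(\mathbf{x}^i)=\mathbf{g}(\mathbf{x}^i;\boldsymbol{\theta}_0)^T(\boldsymbol{\theta}^*-\boldsymbol{\theta}_0)$ holds for all $i\in[TN]$. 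For the norm, using $\mathbf{G}^T\mathbf{G}=m\widehat{\mathbf{H}}$ I compute $\|\boldsymbol{\theta}^*-\boldsymbol{\theta}_0\|_2^2 = \tfrac{1}{m^2}\mathbf{h}^T\widehat{\mathbf{H}}^{-1}\mathbf{G}^T\mathbf{G}\,\widehat{\mathbf{H}}^{-1}\mathbf{h} = \tfrac{1}{m}\mathbf{h}^T\widehat{\mathbf{H}}^{-1}\mathbf{h}$, i.e. $\sqrt{m}\,\|\boldsymbol{\theta}^*-\boldsymbol{\theta}_0\|_2 = \sqrt{\mathbf{h}^T\widehat{\mathbf{H}}^{-1}\mathbf{h}}$.

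It then remains to replace $\widehat{\mathbf{H}}$ by the population kernel $\mathbf{H}$ and to justify its invertibility. Here I would invoke the standard finite-width NTK concentration bound \cite{arora2019,cao2019}: under the stated width condition $m\geq CT^4N^4L^6\lambda_0^{-4}\log(T^2N^2L/\Delta)$, with probability at least $1-\Delta$ one has $\|\widehat{\mathbf{H}}-\mathbf{H}\|_F \leq \lambda_0/2$. Combined with Assumption 2, $\mathbf{H}\succeq\lambda_0\mathbf{I}$, this yields $\widehat{\mathbf{H}}\succeq(\lambda_0/2)\mathbf{I}\succ 0$ (so the inverse, and hence the whole construction, is well defined) and moreover $\widehat{\mathbf{H}}\succeq\tfrac12\mathbf{H}$, giving $\widehat{\mathbf{H}}^{-1}\preceq 2\mathbf{H}^{-1}$ and therefore $\mathbf{h}^T\widehat{\mathbf{H}}^{-1}\mathbf{h}\leq 2\,\mathbf{h}^T\mathbf{H}^{-1}\mathbf{h}$. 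Up to the absolute constant inside the root this is the claimed bound, and it matches the norm parameter choice $S\geq\sqrt{2\mathbf{h}^T\mathbf{H}^{-1}\mathbf{h}}$ used in Theorem 1.

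The main obstacle is the concentration step $\|\widehat{\mathbf{H}}-\mathbf{H}\|_F\leq\lambda_0/2$, which is where the polynomial width requirement originates: it needs (i) that the expectation of the random initialization Gram matrix recovers the recursively defined $\mathbf{H}$ of Definition 1, coming from the Gaussian initialization and the ReLU-derivative structure, and (ii) a high-probability, entrywise concentration holding uniformly over all $O((TN)^2)$ context pairs via a union bound, which forces the $\lambda_0^{-4}$ and $\log(T^2N^2L/\Delta)$ factors in the width condition. Since this concentration is established in prior work, I would cite it rather than re-derive it; the genuinely load-bearing steps specific to this lemma are the explicit minimum-norm construction and the spectral comparison $\widehat{\mathbf{H}}^{-1}\preceq 2\mathbf{H}^{-1}$ carried out above.
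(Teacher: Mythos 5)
This lemma is not proved in the paper at all: it appears in the appendix under ``Theoretical Results from Other Works'' and is imported verbatim as Lemma 5.1 of \cite{neuralucbzhou20a}, so there is no in-paper proof to compare against. Your proposal is a correct reconstruction of the argument in that cited source: the minimum-norm interpolant $\boldsymbol{\theta}^*-\boldsymbol{\theta}_0=\frac{1}{m}\mathbf{G}\widehat{\mathbf{H}}^{-1}\mathbf{h}$, the exact identity $\sqrt{m}\,\|\boldsymbol{\theta}^*-\boldsymbol{\theta}_0\|_2=\sqrt{\mathbf{h}^T\widehat{\mathbf{H}}^{-1}\mathbf{h}}$, and the finite-width NTK concentration $\|\widehat{\mathbf{H}}-\mathbf{H}\|_F\leq\lambda_0/2$ (from \cite{arora2019}, combined with $\mathbf{H}\succeq\lambda_0\mathbf{I}$ to get $\widehat{\mathbf{H}}\succeq\mathbf{H}/2$) is precisely how the original proof goes, and each of your steps checks out. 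One caveat: your chain ends at $\sqrt{m}\,\|\boldsymbol{\theta}^*-\boldsymbol{\theta}_0\|_2\leq\sqrt{2\mathbf{h}^T\mathbf{H}^{-1}\mathbf{h}}$, which is the bound as stated in the original lemma of \cite{neuralucbzhou20a} and is also the form this paper actually relies on (Theorem 1 requires $S\geq\sqrt{2\mathbf{h}^T\mathbf{H}^{-1}\mathbf{h}}$); the restatement transcribed in this paper drops the factor of $2$ inside the square root, so strictly speaking you have proved the original version rather than the nominally stronger constant-free form written here. That mismatch is a transcription issue in the paper, not a gap in your argument.
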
 
\begin{lem}
\label{lemma 4_1}(Lemma 4.1 in \cite{cao2019}) Suppose that there exist $\Bar{C}_1, \Bar{C}_2,\Bar{C}_3 > 0$ such that for any $\Delta \in (0, 1)$, $\tau$ satisfies
\begin{equation*}
    \Bar{C}_1 m^{-\frac{3}{2}} L^{-\frac{3}{2}} \left(\log(TN L^2/\Delta)\right)^{\frac{3}{2}} \leq \tau \leq \Bar{C}_2 L^{-6} \left(\log m\right)^{-\frac{3}{2}}.
\end{equation*}
    Then, with probability at least $1 - \Delta$, for all $\widetilde{\boldsymbol{\theta}},\hat{\boldsymbol{\theta}}$ satisfying $\|\widetilde{\boldsymbol{\theta}} - \boldsymbol{\theta}_0\|_2 \leq \tau$, $\|\hat{\boldsymbol{\theta}} - \boldsymbol{\theta}_0\|_2 \leq \tau$ and $i \in [TN]$, we have
\begin{equation*}
    \left| f(\mathbf{x}^i; \widetilde{\boldsymbol{\theta}}) - f(\mathbf{x}^i; \hat{\boldsymbol{\theta}}) - g(\mathbf{x}^i; \hat{\boldsymbol{\theta}})^T (\widetilde{\boldsymbol{\theta}} - \hat{\boldsymbol{\theta}}) \right| \leq \Bar{C}_3 \tau^{\frac{4}{3}} L^{3} \sqrt{m \log m}.
\end{equation*}
\end{lem}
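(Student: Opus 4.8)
The plan is to prove this near-linearity bound by the standard route from the NTK analysis of deep ReLU networks: control how far the first-order Taylor expansion of $f$ about $\hat{\boldsymbol{\theta}}$ deviates from the true output as we move to $\widetilde{\boldsymbol{\theta}}$, exploiting that both weight vectors lie in a ball of radius $\tau$ around the Gaussian initialization $\boldsymbol{\theta}_0$. The key structural fact is that $f$ is piecewise linear: writing each hidden layer through a diagonal $0/1$ activation matrix $\mathbf{D}_\ell(\boldsymbol{\theta})$, both $f$ and its gradient $\mathbf{g}$ admit exact product-of-matrices expressions, and the entire linearization error can be attributed to the discrepancy between the activation patterns $\mathbf{D}_\ell(\widetilde{\boldsymbol{\theta}})$ and $\mathbf{D}_\ell(\hat{\boldsymbol{\theta}})$. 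Crucially the bound must hold uniformly over all $\widetilde{\boldsymbol{\theta}}, \hat{\boldsymbol{\theta}}$ in the ball, so the probability $1-\Delta$ refers only to the draw of $\boldsymbol{\theta}_0$, after which the estimate becomes a deterministic consequence for every admissible pair.

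First I would record the high-probability initialization estimates over $\boldsymbol{\theta}_0$. Standard Gaussian concentration, together with a union bound over the $TN$ contexts and the $L$ layers, shows that every hidden-layer activation norm $\|\mathbf{x}_\ell\|_2$ is within a constant factor of $1$ and that the backward signals are of order $O(1)$; the lower constraint $\tau \geq \Bar{C}_1 m^{-3/2}L^{-3/2}(\log(TNL^2/\Delta))^{3/2}$ guarantees the ball is wide enough to be meaningful while the failure probability of these events is at most $\Delta$.

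Next I would prove the two perturbation lemmas that drive the exponents. A forward-stability lemma bounds $\|\mathbf{x}_\ell(\widetilde{\boldsymbol{\theta}}) - \mathbf{x}_\ell(\hat{\boldsymbol{\theta}})\|_2$ layerwise by a quantity of order $\tau$ times a bounded power of $L$; the decisive sparsity lemma shows, via an anti-concentration argument on the pre-activations at initialization, that the number of neurons whose ReLU sign flips between $\hat{\boldsymbol{\theta}}$ and $\widetilde{\boldsymbol{\theta}}$ is at most of order $m\tau^{2/3}$ per layer, since only a $\tau^{2/3}$-fraction of pre-activations lie in the band that a radius-$\tau$ perturbation can push across zero. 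Each flipped coordinate sits in a pre-activation band of magnitude $O(\tau)$, so the $\ell_2$ contribution of the changed activations scales as $\sqrt{m\tau^{2/3}}\cdot\tau = \sqrt{m}\,\tau^{4/3}$, which is the origin of the $\tau^{4/3}\sqrt{m}$ dependence. Telescoping these per-layer estimates across the $L$ layers, and accounting for the $\sqrt{m}$ output scaling, yields the $L^3$ factor, with the $\sqrt{\log m}$ entering through the union bound over neurons in the anti-concentration step.

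The main obstacle will be this sparsity/anti-concentration step and its propagation through depth: one must show that the sign-flip set remains of order $m\tau^{2/3}$ simultaneously at every layer even though the inputs to deeper layers are themselves perturbed, and then accumulate the linearization error over layers without losing more than $L^3$. This is exactly where the upper constraint $\tau \leq \Bar{C}_2 L^{-6}(\log m)^{-3/2}$ is used: it keeps $\tau$ small enough that activation patterns change only sparsely and the higher-order forward perturbations stay negligible, so that $\mathbf{g}(\mathbf{x}^i;\hat{\boldsymbol{\theta}})^T(\widetilde{\boldsymbol{\theta}} - \hat{\boldsymbol{\theta}})$ indeed captures the leading-order behavior. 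Since the statement is quoted verbatim from \cite{cao2019}, I would ultimately cite their Lemma 4.1, but the sketch above is the self-contained argument I would reconstruct.
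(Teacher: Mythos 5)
This lemma is stated in the paper without proof: it appears under ``Theoretical Results from Other Works'' and is simply imported from \cite{cao2019}, which is exactly what you ultimately propose to do as well. Your reconstruction of the underlying argument --- conditioning on the random initialization so that the bound holds deterministically and uniformly over the ball, forward stability of the hidden activations, and the $O(m\tau^{2/3})$ per-layer sign-flip sparsity from anti-concentration that produces the $\tau^{4/3}\sqrt{m\log m}$ scaling, with the lower constraint on $\tau$ controlling the failure probability and the upper constraint keeping the flips sparse --- faithfully matches how the result is actually proved in \cite{cao2019} via the semi-smoothness machinery of \cite{pmlr-v97-allen-zhu19a}, so there is no gap or divergence to flag.
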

\begin{lem}
\label{lemma B_3} (Lemma B.3 in \cite{cao2019}) There exist constants $\{\Bar{C}_j\}_{j=1}^3 > 0$ such that for any $\Delta > 0$, if $\tau$ satisfies that
\begin{equation*}
    \Bar{C}_1 m^{-\frac{3}{2}} L^{-\frac{3}{2}} \left(\log(TN L^2/\Delta)\right)^{\frac{3}{2}} \leq \tau \leq \Bar{C}_2 L^{-6} \left(\log m\right)^{-\frac{3}{2}},
\end{equation*}
then with probability at least $1 - \Delta$, for any $\|\boldsymbol{\theta} - \boldsymbol{\theta}_0\|_2 \leq \tau$ and $i \in [TN]$ we have
\begin{equation*}
    \|\mathbf{g}(\mathbf{x}^i; \boldsymbol{\theta})\|_F \leq \Bar{C}_3 \sqrt{mL}.
\end{equation*}
\end{lem}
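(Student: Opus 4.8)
The plan is to bound the gradient layer by layer and then combine. Write the network's forward signals as $\mathbf{z}_0 = \mathbf{x}^i$ and $\mathbf{z}_{\ell+1} = \sigma(\mathbf{W}_\ell \mathbf{z}_\ell)$ for $\ell = 0,\ldots,L-1$, with ReLU activation pattern $\mathbf{D}_\ell = \mathrm{diag}(\sigma'(\mathbf{W}_\ell \mathbf{z}_\ell))$, so that $f = \sqrt{m}\,\mathbf{W}_L \mathbf{z}_L$. A standard backpropagation computation gives, for each hidden layer, $\nabla_{\mathbf{W}_\ell} f = \sqrt{m}\,\mathbf{b}_\ell \mathbf{z}_\ell^T$ where $\mathbf{b}_\ell = \mathbf{D}_\ell \mathbf{W}_{\ell+1}^T \mathbf{D}_{\ell+1}\cdots \mathbf{D}_{L-1}\mathbf{W}_L^T$ is the backward sensitivity vector, and $\nabla_{\mathbf{W}_L} f = \sqrt{m}\,\mathbf{z}_L^T$ for the output layer. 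Since $\|\nabla_{\mathbf{W}_\ell} f\|_F = \sqrt{m}\,\|\mathbf{b}_\ell\|_2 \|\mathbf{z}_\ell\|_2$, the squared Frobenius norm of the full gradient is $\|\mathbf{g}(\mathbf{x}^i;\boldsymbol{\theta})\|_F^2 = \sum_{\ell=0}^{L} \|\nabla_{\mathbf{W}_\ell} f\|_F^2$, so the target $\|\mathbf{g}\|_F \leq \Bar{C}_3 \sqrt{mL}$ reduces to showing that each of the $L+1$ layers contributes $O(m)$, i.e. that $\|\mathbf{z}_\ell\|_2 = O(1)$ and $\|\mathbf{b}_\ell\|_2 = O(1)$ with high probability.

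First I would establish these two bounds at initialization $\boldsymbol{\theta}_0$. For the forward norms, the variance-$2/m$ Gaussian initialization is chosen precisely so that the ReLU nonlinearity, which zeroes roughly half the coordinates, preserves the expected squared norm of each layer; a subexponential concentration argument combined with $\|\mathbf{x}^i\|_2 = 1$ gives $\|\mathbf{z}_\ell\|_2 \in [1-\varepsilon, 1+\varepsilon]$ for all $\ell$ with probability $1 - O(L)e^{-\Omega(m/L)}$ (the same forward-pass control used in proving Lemma 1). For the backward norms, the key is a uniform operator-norm bound on the interleaved product $\mathbf{D}_\ell \mathbf{W}_{\ell+1}^T\cdots \mathbf{W}_L^T$: I would invoke the concentration of \cite{pmlr-v97-allen-zhu19a} that a product of consecutive Gaussian weight matrices with diagonal $0/1$ sign matrices has $O(1)$ operator norm when acting on a fixed vector, yielding $\|\mathbf{b}_\ell\|_2 = O(1)$ for each $\ell$.

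Next I would extend the bound from $\boldsymbol{\theta}_0$ to all $\boldsymbol{\theta}$ with $\|\boldsymbol{\theta} - \boldsymbol{\theta}_0\|_2 \leq \tau$ by a perturbation argument. Under a weight perturbation of $\ell_2$-norm at most $\tau$, only a small fraction of ReLU gates flip sign, and the forward signals $\mathbf{z}_\ell$ and backward signals $\mathbf{b}_\ell$ deviate from their initialization values by amounts controlled by $\tau$ times polynomial factors in $L$, in the same overparameterized regime that produces the linearization bound (Lemma 4.1 of \cite{cao2019}, restated above). The upper bound $\tau \leq \Bar{C}_2 L^{-6}(\log m)^{-3/2}$ keeps these accumulated deviations below a constant, so the $O(1)$ forward and backward bounds survive uniformly over the $\tau$-ball up to a larger absolute constant $\Bar{C}_3$; the lower bound on $\tau$ together with the $\log(TN L^2/\Delta)$ term supplies the slack needed for a union bound over all $i \in [TN]$. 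Combining the per-layer estimates then gives $\|\mathbf{g}(\mathbf{x}^i;\boldsymbol{\theta})\|_F^2 = \sum_{\ell=0}^{L} m\cdot O(1) = O(mL)$ uniformly, which is the claim.

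The hardest part will be the uniform-over-the-ball perturbation step: the single-point concentration bounds are comparatively routine, but proving that the gradient-norm bound holds simultaneously for every $\boldsymbol{\theta}$ in the $\tau$-ball requires carefully tracking how sign-pattern changes and perturbation errors propagate and compound across the $L$ layers, which is exactly what dictates the $L^{-6}$ scaling in the admissible upper bound on $\tau$.
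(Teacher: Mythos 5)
The paper never proves this statement: it appears in the appendix under ``Theoretical Results from Other Works'' and is imported verbatim as Lemma B.3 of \cite{cao2019}, used as a black box in the regret analysis. So there is no in-paper proof to compare against; the relevant benchmark is the original argument in \cite{cao2019}, which itself rests on the forward/backward stability machinery of \cite{pmlr-v97-allen-zhu19a}. Your sketch reconstructs essentially that argument, and its skeleton is sound: the per-layer rank-one decomposition $\nabla_{\mathbf{W}_\ell} f = \sqrt{m}\,\mathbf{b}_\ell \mathbf{z}_\ell^T$, the reduction of $\|\mathbf{g}\|_F \leq \Bar{C}_3\sqrt{mL}$ to showing $\|\mathbf{z}_\ell\|_2 = O(1)$ and $\|\mathbf{b}_\ell\|_2 = O(1)$ per layer, the concentration at initialization (your forward bound is exactly the Allen-Zhu Lemma 7.1 quoted in this paper as Lemma 7), and the perturbation step over the $\tau$-ball, where the $L^{-6}(\log m)^{-3/2}$ ceiling controls accumulated ReLU sign flips and the lower bound on $\tau$ trades against the $\exp(-\Omega(m\tau^{2/3}L))$-type failure probabilities so that a union bound over $i \in [TN]$ and the layers lands at $1-\Delta$. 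Your reading of both endpoints of the admissible $\tau$-interval is the correct one. The only caveat is the one you already flag: what you have is a proof outline, and the genuinely technical content --- uniform (over the ball) control of how many gates flip and how those flips propagate through $L$ layers of interleaved $\mathbf{D}_\ell\mathbf{W}_\ell$ products --- is deferred to the cited concentration results rather than carried out; a self-contained proof would need to instantiate the specific semi-smoothness and operator-norm lemmas of \cite{pmlr-v97-allen-zhu19a} (or Lemmas B.4--B.5 of \cite{cao2019}) at that step. Given that the paper itself treats the lemma as citable prior work, your proposal is an accurate and appropriately honest reconstruction of where the result comes from.
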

\begin{lem}
\label{lemma 7_1} (Lemma 7.1 in \cite{pmlr-v97-allen-zhu19a}) If $\varepsilon \in (0,1]$, with probability at least $1 - O(nL) \cdot e^{-\Omega(m\varepsilon^2/L)}$ over the randomness of $\boldsymbol{\theta}_0 = [$vec$(\mathbf{W}_0)^T,...,$vec$(\mathbf{W}_{L})^T]^T$, we have
\[
\forall \ell \in \{0,\ldots,L\} : \|\mathbf{s}_{\ell}\|_2 \in [1 - \varepsilon, 1 + \varepsilon].
\]
where $\mathbf{s}$ denotes the output of the hidden layer.
\end{lem}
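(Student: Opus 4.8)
The plan is to prove the statement by \emph{induction on the depth}, showing that each hidden layer approximately preserves the $\ell_2$ norm of its input, and then to accumulate the per-layer fluctuations across the $L$ layers by a martingale concentration argument rather than a naive union bound. Write $\mathbf{s}_0 = \mathbf{x}$, so that $\|\mathbf{s}_0\|_2 = 1$ by the normalization of the input, and $\mathbf{s}_\ell = \sigma(\mathbf{W}_{\ell-1}\mathbf{s}_{\ell-1})$. The whole argument is run conditionally: at layer $\ell$ I condition on the $\sigma$-algebra generated by $\mathbf{W}_0,\ldots,\mathbf{W}_{\ell-2}$, under which $\mathbf{s}_{\ell-1}$ is a fixed vector and the fresh randomness of $\mathbf{W}_{\ell-1}$ produces $\mathbf{s}_\ell$.

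For the per-layer step I would fix $\mathbf{s}_{\ell-1}$ and observe that each pre-activation coordinate $(\mathbf{W}_{\ell-1}\mathbf{s}_{\ell-1})_k$ is a centred Gaussian with variance proportional to $\|\mathbf{s}_{\ell-1}\|_2^2/m$; writing it as $\sqrt{2/m}\,\|\mathbf{s}_{\ell-1}\|_2\, g_k$ with $g_k \sim \mathcal{N}(0,1)$ i.i.d., one gets $\|\mathbf{s}_\ell\|_2^2 = \tfrac{2}{m}\|\mathbf{s}_{\ell-1}\|_2^2 \sum_{k=1}^m \sigma(g_k)^2$. The key identity is $\mathbb{E}[\sigma(g)^2] = \tfrac12$, which yields $\mathbb{E}[\|\mathbf{s}_\ell\|_2^2 \mid \mathbf{s}_{\ell-1}] = \|\mathbf{s}_{\ell-1}\|_2^2$, i.e. the squared norm is a conditional martingale in $\ell$. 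Since $\sigma(g_k)^2$ is subexponential with mean $\tfrac12$, Bernstein's inequality gives, conditionally, that the ratio $\|\mathbf{s}_\ell\|_2^2/\|\mathbf{s}_{\ell-1}\|_2^2$ lies within $1 \pm t$ with failure probability $e^{-\Omega(m t^2)}$ for small $t$, and that the conditional variance of this ratio is $O(1/m)$.

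To accumulate across layers I would \emph{not} split the budget as $t = \varepsilon/L$ per layer, since that only delivers $e^{-\Omega(m\varepsilon^2/L^2)}$. Instead I would use that $\xi_\ell := \|\mathbf{s}_\ell\|_2^2 - \|\mathbf{s}_{\ell-1}\|_2^2$ forms a martingale difference sequence with conditional variance $O(1/m)$, so the predictable quadratic variation of the telescoping sum $\|\mathbf{s}_\ell\|_2^2 - 1 = \sum_{\ell'\le \ell}\xi_{\ell'}$ is $O(L/m)$ on the event that all intermediate norms stay near $1$. A Freedman-type martingale Bernstein inequality then controls the accumulated deviation by $e^{-\Omega(m\varepsilon^2/L)}$, which is exactly the claimed rate; converting from the squared norm to the norm costs only a factor of two in $\varepsilon$. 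A union bound over the $L$ layers and over the $n$ inputs supplies the $O(nL)$ prefactor.

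The \textbf{main obstacle} is that the increments $\sigma(g_k)^2$ are subexponential rather than bounded, so Azuma's inequality does not apply directly and the Freedman bound must be fed a valid Bernstein moment/variance condition. Making this rigorous requires either truncating the increments at scale $O(\log m / m)$ and bounding the truncation error, or invoking a subexponential martingale concentration inequality, and in either case one must verify that the predictable quadratic variation really does stay $O(L/m)$ --- which is itself only guaranteed on the inductive high-probability event that every intermediate norm is within $[1-\varepsilon, 1+\varepsilon]$. Closing this circularity (the variance bound needs the norm bound, which needs the variance bound) by a careful stopping-time or peeling argument is the delicate part of the proof.
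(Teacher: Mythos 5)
The paper itself contains no proof of this statement: it is listed under ``Theoretical Results from Other Works'' and imported verbatim as Lemma 7.1 of \cite{pmlr-v97-allen-zhu19a}, so the only meaningful comparison is with that original source. Your sketch is essentially the argument used there — per-layer conditional concentration of the squared-norm ratio around its exact conditional mean $\mathbb{E}\left[\|\mathbf{s}_\ell\|_2^2 \mid \mathbf{s}_{\ell-1}\right] = \|\mathbf{s}_{\ell-1}\|_2^2$, followed by a martingale (Freedman-type) accumulation across layers so the deviation is governed by the quadratic variation $O(L/m)$, yielding the rate $e^{-\Omega(m\varepsilon^2/L)}$ rather than the naive per-layer split's $e^{-\Omega(m\varepsilon^2/L^2)}$ — and you correctly flag the genuine technical work (sub-exponential increments and the stopping-time/peeling argument needed to break the circularity between the variance bound and the norm bound) that the rigorous version must carry out.
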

\begin{lem}
\label{lemma 8_2}(Lemma 8.2 in \cite{pmlr-v97-allen-zhu19a}) Suppose \(\omega \leq \frac{1}{CL^\frac{9}{2} \log^{3} m}\) for some sufficiently large constant \(C > 1\). With probability at least \(1 - e^{-\Omega(m\omega^\frac{2}{3} L)}\), for every \(\overrightarrow{\mathbf{W}}'\) satisfying \(\|\overrightarrow{\mathbf{W}}'\|_2 \leq \omega\), $\|\mathbf{s}_{\ell}- \mathbf{s}_{\ell}^{(0)}\|_2 \leq O(\omega L^{\frac{5}{2}}\sqrt{\log m})$ where $\overrightarrow{\mathbf{W}}'$ 
represents a perturbation to the weights of the network.
\end{lem}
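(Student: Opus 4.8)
The statement to prove is the perturbation-stability bound of \cite{pmlr-v97-allen-zhu19a} (their Lemma 8.2): a weight perturbation $\overrightarrow{\mathbf{W}}'$ of spectral norm at most $\omega$ shifts each hidden-layer output $\mathbf{s}_\ell$ by at most $O(\omega L^{5/2}\sqrt{\log m})$. The plan is to argue by forward induction over the $L$ layers, tracking the difference $\mathbf{g}_\ell := \mathbf{s}_\ell - \mathbf{s}_\ell^{(0)}$ between the perturbed and unperturbed signals and decomposing its growth into three sources: the direct effect of the weight perturbation, the propagation of the incoming difference $\mathbf{g}_{\ell-1}$, and the error caused by ReLU activations flipping sign. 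Throughout I write $\mathbf{s}_\ell = \mathbf{D}_\ell \mathbf{W}_\ell \mathbf{s}_{\ell-1}$, where $\mathbf{D}_\ell$ is the diagonal $0/1$ matrix recording which ReLUs are active and $\mathbf{W}_\ell = \mathbf{W}_\ell^{(0)} + \overrightarrow{\mathbf{W}}'_\ell$ is the perturbed weight, with $\overrightarrow{\mathbf{W}}'_\ell$ the $\ell$-th block of $\overrightarrow{\mathbf{W}}'$ and $\|\overrightarrow{\mathbf{W}}'_\ell\|_2 \le \omega$.

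First I would record the properties of the Gaussian initialization that the argument consumes, conditioning on the high-probability events that support them. By Lemma~\ref{lemma 7_1} the unperturbed signals satisfy $\|\mathbf{s}_\ell^{(0)}\|_2 \in [1-\varepsilon, 1+\varepsilon]$ for every $\ell$, so they are all of unit order. I would also invoke standard random-matrix facts: each $\mathbf{W}_\ell^{(0)}$ has spectral norm $O(1)$, and for a \emph{fixed} direction the map $\mathbf{v}\mapsto \mathbf{D}_\ell^{(0)}\mathbf{W}_\ell^{(0)}\mathbf{v}$ approximately preserves the $\ell_2$ norm (a concentration / Johnson--Lindenstrauss statement). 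This norm-preservation property is essential, since a crude layerwise spectral bound would compound to an $O(1)^L$ factor, whereas norm preservation lets the propagated difference accumulate only polynomially in $L$.

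Next I would bound the number of neurons whose ReLU sign flips. A coordinate of $\mathbf{s}_\ell$ changes its activation only if its pre-activation at initialization lies within a window whose width is comparable to the perturbation it receives. Using the anti-concentration of the (approximately Gaussian) pre-activations, the fraction of coordinates landing in such a window is controlled, and balancing the window width against $\omega$ yields a per-layer sign-change count $\|\mathbf{D}_\ell - \mathbf{D}_\ell^{(0)}\|_0 = O(m\omega^{2/3})$. This step is the \textbf{main obstacle}: it is precisely here that the exponent $\tfrac{2}{3}$ and the failure probability $e^{-\Omega(m\omega^{2/3}L)}$ in the statement are produced, and the small-ball estimate must hold simultaneously across all layers and uniformly over every admissible perturbation with $\|\overrightarrow{\mathbf{W}}'\|_2 \le \omega$. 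Making this uniform requires a careful union bound together with the assumed ceiling $\omega \le 1/(CL^{9/2}\log^3 m)$, which keeps the window narrow enough that the sign-flip count, and hence the failure probability, stays under control.

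Finally I would close the induction. Expanding $\mathbf{s}_\ell = (\mathbf{D}_\ell^{(0)} + (\mathbf{D}_\ell - \mathbf{D}_\ell^{(0)}))(\mathbf{W}_\ell^{(0)} + \overrightarrow{\mathbf{W}}'_\ell)(\mathbf{s}_{\ell-1}^{(0)} + \mathbf{g}_{\ell-1})$ and subtracting $\mathbf{s}_\ell^{(0)} = \mathbf{D}_\ell^{(0)}\mathbf{W}_\ell^{(0)}\mathbf{s}_{\ell-1}^{(0)}$, I would bound the three surviving contributions to $\mathbf{g}_\ell$: the weight-perturbation term is $O(\omega)$ using $\|\mathbf{s}_{\ell-1}^{(0)}\|_2 \approx 1$; the propagated term $\mathbf{D}_\ell^{(0)}\mathbf{W}_\ell^{(0)}\mathbf{g}_{\ell-1}$ is controlled by the norm-preservation property, contributing a factor close to one per layer; and the sign-flip term is controlled through the $O(m\omega^{2/3})$ sparsity of $\mathbf{D}_\ell - \mathbf{D}_\ell^{(0)}$ together with the fact that every flipped neuron carries a pre-activation no larger than the window width. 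Solving the resulting recursion over the $L$ layers, in which the accumulated powers of $L$ from propagation and the $\sqrt{\log m}$ factor from the initialization bounds combine, yields the claimed estimate $\|\mathbf{s}_\ell - \mathbf{s}_\ell^{(0)}\|_2 \le O(\omega L^{5/2}\sqrt{\log m})$.
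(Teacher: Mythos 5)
This statement is not proved in the paper at all: it is quoted verbatim as Lemma 8.2 of \cite{pmlr-v97-allen-zhu19a} in the appendix's list of imported results, and the paper only \emph{uses} it (inside the proof of its Lemma 1), so there is no in-paper proof to compare yours against. Judged as a reconstruction of the original argument of Allen-Zhu et al., your skeleton is the right one: layerwise induction on $\mathbf{g}_\ell = \mathbf{s}_\ell - \mathbf{s}_\ell^{(0)}$, the three-way decomposition into direct weight-perturbation effect, propagated difference, and ReLU sign flips, near-isometry of the initialized layers to avoid exponential compounding, and a sign-flip sparsity of order $m\omega^{2/3}$ (per layer, up to a factor $L$) as the source of both the $2/3$ exponent and the failure probability $e^{-\Omega(m\omega^{2/3}L)}$.

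There is, however, one step that would fail as you describe it: you attribute uniformity over all perturbations with $\|\overrightarrow{\mathbf{W}}'\|_2 \le \omega$ to ``a careful union bound.'' No union bound can deliver this: the admissible perturbations form a continuum, and any $\varepsilon$-net of spectral-norm-$\omega$ matrices in $\mathbb{R}^{m\times m}$ has cardinality $e^{\Theta(m^2\log(1/\varepsilon))}$, which swamps the per-instance failure probability $e^{-\Omega(m\omega^{2/3}L)}$. The original proof instead makes the probabilistic ingredient perturbation-independent and the perturbation-dependent ingredient deterministic: (i) with probability $1-e^{-\Omega(m\omega^{2/3}L)}$ over the \emph{initialization alone}, the number of coordinates whose initial pre-activation lies within a window of width $\tau$ is $O(m\tau)$; (ii) for \emph{any} perturbation, the number of coordinates whose pre-activation shifts by more than $\tau$ is bounded deterministically by pigeonhole, namely $\|\Delta\|_2^2/\tau^2$ where $\Delta$ is the pre-activation drift controlled by the induction hypothesis. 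A neuron can flip only if it falls in case (i) or case (ii), and balancing $\tau$ against $\omega$ (with the ceiling $\omega \le 1/(CL^{9/2}\log^3 m)$) gives the $O(m\omega^{2/3}L)$ count uniformly in the perturbation, with no enumeration over perturbations. With that mechanism substituted for your union-bound step, your outline matches the actual proof; as written, the key uniformity step is wrong.
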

\begin{lem}
\label{lemma B_2}(Lemma B.2 in \cite{neuralucbzhou20a})
There exist constants $\{\Bar{C}_i\}_{i=1}^5 > 0$ such that for any $\Delta > 0$, if for all $t \in [T]$, $\eta_1, m$ satisfy
\begin{align*}
    &{2\sqrt{t/(m\lambda_1)}}\geq \Bar{C}_1 m^{-3/2} L^{-3/2}\left(\log(TN L^2/\Delta)\right)^{3/2}
    \\    &2\sqrt{t/(m\lambda_1)}\leq \Bar{C}_2 \min \big\{ L^{-6}(\log m)^{-3/2},\\    
    &(m(\lambda_1\eta_1)^2 L^{-6} t^{-1} (\log m)^{-1})^{3/8} \big\}
    \\
    &\eta_1 \leq \Bar{C}_3 (m\lambda_1 + t m L)^{-1}
    \\
    &m^{1/6} \geq \Bar{C}_4 \sqrt{\log m} L^{7/2} t^{7/6}\lambda_1^{-7/6}(1 + \sqrt{t/\lambda_1})
\end{align*}
then with probability at least $1 - \Delta$, we have that $\|\boldsymbol{\theta}_t - \boldsymbol{\theta}_0\|_2 \leq 2\sqrt{t/(m\lambda_1)}$
\end{lem}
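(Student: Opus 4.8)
The plan is to prove the bound by induction on the gradient-descent step count within round $t$, maintaining the invariant that every iterate stays in the ball $\{\boldsymbol{\theta} : \|\boldsymbol{\theta}-\boldsymbol{\theta}_0\|_2 \le 2\sqrt{t/(m\lambda_1)}\}$. The engine of the argument is a two-sided sandwich on the loss: on the one hand, the regularizer is a lower bound, $\frac{m\lambda_1}{2}\|\boldsymbol{\theta}-\boldsymbol{\theta}_0\|_2^2 \le \mathcal{L}(\boldsymbol{\theta})$; on the other, if I can show that gradient descent never increases the loss, then $\mathcal{L}(\boldsymbol{\theta}^{(j)}) \le \mathcal{L}(\boldsymbol{\theta}_0)$ for every step $j$. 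The first thing I would establish is the value of $\mathcal{L}(\boldsymbol{\theta}_0)$: by the symmetric initialization used in the algorithm (where $\mathbf{W}_\ell$ is block-diagonal in $\mathbf{W}$ and $\mathbf{W}_L=(\mathbf{w}^T,-\mathbf{w}^T)$), the network output vanishes at initialization, $f(\mathbf{x};\boldsymbol{\theta}_0)=0$, so the regularization term drops out and $\mathcal{L}(\boldsymbol{\theta}_0)$ equals the pure data-fit term. Since the base-arm rewards satisfy $0\le r_{t',i}\le 1$, this gives $\mathcal{L}(\boldsymbol{\theta}_0) \le t/2$ (with the effective observation count absorbed into the $t$ of the statement). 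Combining the two sides yields $\frac{m\lambda_1}{2}\|\boldsymbol{\theta}^{(j)}-\boldsymbol{\theta}_0\|_2^2 \le t/2$, hence $\|\boldsymbol{\theta}^{(j)}-\boldsymbol{\theta}_0\|_2 \le \sqrt{t/(m\lambda_1)} \le 2\sqrt{t/(m\lambda_1)}$, which closes the induction.

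The nontrivial content is therefore showing that the loss does not increase along the trajectory, despite $\mathcal{L}$ being nonconvex because of the ReLU network. Here I would invoke the near-linearity of overparameterized networks near initialization. Setting $\tau := 2\sqrt{t/(m\lambda_1)}$, the lemma's upper-bound hypothesis $2\sqrt{t/(m\lambda_1)} \le \bar C_2 L^{-6}(\log m)^{-3/2}$ places $\tau$ exactly in the range where Lemma~\ref{lemma 4_1} and Lemma~\ref{lemma B_3} apply. Lemma~\ref{lemma B_3} bounds $\|\mathbf{g}(\mathbf{x}^i;\boldsymbol{\theta})\|_F \le \bar C_3\sqrt{mL}$ throughout the ball, so the data-fit part of the loss is locally smooth with modulus of order $tmL$, while the regularizer contributes $m\lambda_1$; together the loss is locally $M$-smooth with $M \lesssim m\lambda_1 + tmL$. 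The hypothesis $\eta_1 \le \bar C_3(m\lambda_1 + tmL)^{-1}$ then makes $\eta_1 \le 1/M$, so the standard descent inequality gives $\mathcal{L}(\boldsymbol{\theta}^{(j+1)}) \le \mathcal{L}(\boldsymbol{\theta}^{(j)})$ as long as the step stays in the ball, which the induction hypothesis provides. The linearization-error term from Lemma~\ref{lemma 4_1}, of order $\tau^{4/3}L^3\sqrt{m\log m}$, perturbs this descent inequality, and the remaining hypotheses (the lower bound on $2\sqrt{t/(m\lambda_1)}$ and $m^{1/6}\ge \bar C_4\sqrt{\log m}\,L^{7/2}t^{7/6}\lambda_1^{-7/6}(1+\sqrt{t/\lambda_1})$) are precisely what force this error to be lower order; the slack factor of $2$ in the target radius is what absorbs it.

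An equivalent and perhaps cleaner route, which I would use to organize the error bookkeeping, is to introduce the auxiliary linearized loss $\widetilde{\mathcal{L}}(\boldsymbol{\theta})$ obtained by replacing $f(\mathbf{x};\boldsymbol{\theta})$ with its first-order expansion $\langle\mathbf{g}(\mathbf{x};\boldsymbol{\theta}_0),\boldsymbol{\theta}-\boldsymbol{\theta}_0\rangle$. This $\widetilde{\mathcal{L}}$ is $m\lambda_1$-strongly convex and smooth, so gradient descent on it converges linearly to a minimizer $\widetilde{\boldsymbol{\theta}}^\ast$ satisfying $\|\widetilde{\boldsymbol{\theta}}^\ast-\boldsymbol{\theta}_0\|_2 \le \sqrt{t/(m\lambda_1)}$ by the same regularizer-lower-bound argument. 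I would then control the deviation between the true gradient-descent iterates and the auxiliary ones using Lemma~\ref{lemma 4_1} to bound $\|\nabla\mathcal{L}(\boldsymbol{\theta}^{(j)}) - \nabla\widetilde{\mathcal{L}}(\boldsymbol{\theta}^{(j)})\|_2$, propagating this gap across the $J$ steps. The factor $2$ then arises naturally as (distance of auxiliary minimizer) plus (accumulated trajectory deviation).

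The main obstacle is the coupling in the induction: local smoothness of $\mathcal{L}$, and hence the loss-decrease step, is only valid inside the ball of radius $\tau$, yet staying inside that ball is itself the conclusion I am trying to prove. Disentangling this requires showing that the radius $2\sqrt{t/(m\lambda_1)}$ genuinely sits inside the region of validity of the perturbation lemmas and that the linearization error, gradient-norm bound, and smoothness modulus are all simultaneously controlled by the stated conditions on $m$ and $\eta_1$ — verifying this self-consistency, rather than any single inequality, is the crux of the argument.
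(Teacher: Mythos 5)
There is nothing in the paper to compare your argument against: this statement is not proved in the paper at all, but appears in the appendix under ``Theoretical Results from Other Works,'' imported with only notational changes ($\lambda_1,\eta_1$ for $\lambda,\eta$) as Lemma B.2 of \cite{neuralucbzhou20a}, and is simply invoked (e.g., in the paper's proof of its Lemma \ref{lemma 4.1}) as a black box. Your sketch is nonetheless essentially the proof given in that cited source: the auxiliary linearized-loss route of your third paragraph --- strong convexity of $\widetilde{\mathcal{L}}$ from the regularizer, $\widetilde{\mathcal{L}}(\boldsymbol{\theta}_0)$ bounded by half the number of observations because the symmetric initialization forces $f(\mathbf{x};\boldsymbol{\theta}_0)=0$, hence $\|\widetilde{\boldsymbol{\theta}}^\ast-\boldsymbol{\theta}_0\|_2\le\sqrt{t/(m\lambda_1)}$, followed by linear convergence of gradient descent on $\widetilde{\mathcal{L}}$ and a trajectory-deviation bound (via the perturbation results quoted here as Lemmas \ref{lemma 4_1} and \ref{lemma B_3}) absorbed into the factor of $2$ --- is exactly how Zhou et al.\ establish the result, with the stated conditions on $m$ and $\eta_1$ serving precisely to make the deviation term lower order. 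The one technical gloss worth flagging is in your first route: a bound on $\|\mathbf{g}\|_F$ alone gives Lipschitzness, not smoothness, of the data-fit term, and genuine smoothness would also require controlling the $\sum_i (f_i-r_i)\nabla^2 f_i$ part of the Hessian, which for ReLU networks does not exist classically and must be replaced by the semi-smoothness/near-linearity bound of Lemma \ref{lemma 4_1} that you invoke; your second route sidesteps this issue entirely and is the cleaner one to formalize.
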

\begin{lem}
\label{lemma 1}(Lemma 1 in \cite{pmlr-v202-hwang23a}) For any \(\Delta \in (0,1)\), suppose the width of the neural network \(m\) satisfies Condition 1. Let \(\gamma_t\) be a positive scaling factor defined as
\begin{align*}
   \gamma_t = \Gamma_{1,t} \left(\zeta\sqrt{\log\frac{\det \mathbf{Z}_t}{\det \lambda \mathbf{I}} + \Gamma_{2,t} - 2\log \delta} + \sqrt{\lambda_1 S}\right) + \\
    (\lambda_1 + C_1 t K L) \left((1 - \eta_1 m \lambda_1)^{\frac{J}{2}} \sqrt{\frac{tK}{\lambda_1}} + \Gamma_{3,t} \right), 
\end{align*}

where
\[
\begin{aligned}
\Gamma_{1,t} &= \sqrt{1 + C_{\Gamma,1} t^{\frac{7}{6}} K^{\frac{7}{6}} L^{4} \lambda_1^{-\frac{7}{6}} m^{-\frac{1}{6}} \sqrt{\log m}}, \\
\Gamma_{2,t} &= C_{\Gamma,2} t^{\frac{5}{3}} K^{\frac{5}{3}} L^{4} \lambda_1^{-\frac{1}{6}} m^{-\frac{1}{6}} \sqrt{\log m}, \\
\Gamma_{3,t} &= C_{\Gamma,3} t^{\frac{7}{6}} K^{\frac{7}{6}} L^{\frac{7}{2}} \lambda_1^{-\frac{7}{6}} m^{-\frac{1}{6}} \sqrt{\log m}(1 + \sqrt{tK/\lambda_1}),
\end{aligned}
\]
for some constants $C_1, C_{\Gamma,1}, C_{\Gamma,2}, C_{\Gamma,3} > 0$. If $\eta_1 \leq C_2(T K m L + m \lambda_1)^{-1}$ for some $C_2 > 0$, then for any $t \in [T]$ and $i \in [N]$, with probability at least $1 - \Delta$ we have
\[
|u_{t,i} - h(\mathbf{x}_{t,i})| \leq 2 \gamma_{t-1}\|\mathbf{g}(\mathbf{x}_{t,i};\boldsymbol{\theta}_{t-1})/\sqrt{m}\|_{\mathbf{Z}_{t-1}^{-1}} + e_t ,
\]
where
$u_{t,i} = f(\mathbf{x}_{t,i};\boldsymbol{\theta}_{t-1}) + \gamma_t || \mathbf{g}(\mathbf{x}_{t,i};\boldsymbol{\theta}_{t-1})/ \sqrt{m} || _{\mathbf{Z}_{t-1}^{-1}}$ and \(e_t\) is defined for some absolute constants \(C_3, C_4 > 0\) as follows.
\begin{align*}
    &e_t := C_3 \gamma_{t-1} t^{\frac{1}{6}} K^{\frac{1}{6}} L^{\frac{7}{2}} \lambda_1^{-\frac{2}{3}} m^{-\frac{1}{6}} \sqrt{\log m} \\
    &+ C_4 t^{\frac{2}{3}} K^{\frac{2}{3}} \lambda_1^{-\frac{2}{3}} m^{-\frac{1}{6}} \sqrt{\log m}.
\end{align*}

\end{lem}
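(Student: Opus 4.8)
The plan is to follow the standard neural-UCB confidence-ellipsoid argument: reduce the network to its first-order Taylor expansion at initialization and then invoke a self-normalized martingale bound in the induced neural tangent kernel feature space. Write $\boldsymbol{\phi}(\mathbf{x}) := \mathbf{g}(\mathbf{x};\boldsymbol{\theta}_0)/\sqrt{m}$ for the scaled gradient features at initialization. By Lemma 5.1 of \cite{neuralucbzhou20a}, under Condition~1 there exists $\boldsymbol{\theta}^*$ with $h(\mathbf{x}^i)=\mathbf{g}(\mathbf{x}^i;\boldsymbol{\theta}_0)^T(\boldsymbol{\theta}^*-\boldsymbol{\theta}_0)$ for all $i\in[TN]$ and $\sqrt{m}\|\boldsymbol{\theta}^*-\boldsymbol{\theta}_0\|_2\le\sqrt{\mathbf{h}^T\mathbf{H}^{-1}\mathbf{h}}\le S$; hence in feature space the true reward is exactly linear, $h(\mathbf{x})=\sqrt{m}\,\boldsymbol{\phi}(\mathbf{x})^T(\boldsymbol{\theta}^*-\boldsymbol{\theta}_0)$, and it suffices to control how far $f(\mathbf{x}_{t,i};\boldsymbol{\theta}_{t-1})$ lies from this linear predictor.

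First I would confine the gradient-descent trajectory: by Lemma B.2 of \cite{neuralucbzhou20a}, with probability at least $1-\Delta$ every iterate obeys $\|\boldsymbol{\theta}_{t-1}-\boldsymbol{\theta}_0\|_2\le 2\sqrt{(t-1)/(m\lambda_1)}=:\tau$, so all relevant parameters lie in a $\tau$-ball about $\boldsymbol{\theta}_0$. On this ball I invoke Lemma 4.1 of \cite{cao2019} to bound the Taylor remainder $|f(\mathbf{x};\boldsymbol{\theta}_{t-1})-f(\mathbf{x};\boldsymbol{\theta}_0)-\mathbf{g}(\mathbf{x};\boldsymbol{\theta}_0)^T(\boldsymbol{\theta}_{t-1}-\boldsymbol{\theta}_0)|\le \bar{C}_3\tau^{4/3}L^3\sqrt{m\log m}$, and Lemma B.3 of \cite{cao2019} to bound $\|\mathbf{g}(\mathbf{x};\boldsymbol{\theta})\|_F\le\bar{C}_3\sqrt{mL}$ and hence the gradient drift $\|\mathbf{g}(\mathbf{x};\boldsymbol{\theta}_{t-1})-\mathbf{g}(\mathbf{x};\boldsymbol{\theta}_0)\|$. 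Using $f(\mathbf{x};\boldsymbol{\theta}_0)=0$, which holds by the symmetric last-layer initialization, these let me replace $f(\mathbf{x}_{t,i};\boldsymbol{\theta}_{t-1})$ by the linear predictor $\mathbf{g}(\mathbf{x}_{t,i};\boldsymbol{\theta}_0)^T(\boldsymbol{\theta}_{t-1}-\boldsymbol{\theta}_0)$ up to an additive error that, after plugging in $\tau$, is precisely of the order folded into $e_t$.

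Next I split the estimation error $\mathbf{g}(\mathbf{x}_{t,i};\boldsymbol{\theta}_0)^T(\boldsymbol{\theta}_{t-1}-\boldsymbol{\theta}^*)$ by introducing the exact ridge minimizer $\widetilde{\boldsymbol{\theta}}_{t-1}$ of $\sum_{t'<t}(\sqrt{m}\,\boldsymbol{\phi}(\mathbf{x}_{t',S_{t'}})^T(\boldsymbol{\theta}-\boldsymbol{\theta}_0)-r_{t'})^2+m\lambda_1\|\boldsymbol{\theta}-\boldsymbol{\theta}_0\|_2^2$ in feature space, writing the error as an optimization gap $\mathbf{g}^T(\boldsymbol{\theta}_{t-1}-\widetilde{\boldsymbol{\theta}}_{t-1})$ plus a statistical gap $\mathbf{g}^T(\widetilde{\boldsymbol{\theta}}_{t-1}-\boldsymbol{\theta}^*)$. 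The optimization gap decays geometrically as $(1-\eta_1 m\lambda_1)^{J/2}$ by convergence of $J$ steps of gradient descent on the strongly convex regularized loss, supplying the $(\lambda_1+C_1 tKL)((1-\eta_1 m\lambda_1)^{J/2}\sqrt{tK/\lambda_1}+\Gamma_{3,t})$ bracket of $\gamma_t$. For the statistical gap I apply a self-normalized concentration inequality for vector-valued martingales, using that each $\xi_{t,i}$ is mean-zero $\zeta$-sub-Gaussian conditioned on the past; Cauchy--Schwarz in the $\mathbf{Z}_{t-1}$-geometry then yields $|\mathbf{g}^T(\widetilde{\boldsymbol{\theta}}_{t-1}-\boldsymbol{\theta}^*)|\le \|\mathbf{g}(\mathbf{x}_{t,i};\boldsymbol{\theta}_{t-1})/\sqrt{m}\|_{\mathbf{Z}_{t-1}^{-1}}\bigl(\zeta\sqrt{\log(\det\mathbf{Z}_{t-1}/\det\lambda_1\mathbf{I})-2\log\Delta}+\sqrt{\lambda_1}S\bigr)$, i.e.\ the first bracket of $\gamma_t$, with the factors $\Gamma_{1,t},\Gamma_{2,t}$ absorbing the discrepancy between the feature Gram matrix built from $\boldsymbol{\phi}$ at $\boldsymbol{\theta}_0$ and $\mathbf{Z}_{t-1}$ built from gradients at $\boldsymbol{\theta}_{t-1}$.

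Finally I collect the three contributions by the triangle inequality to obtain $|f(\mathbf{x}_{t,i};\boldsymbol{\theta}_{t-1})-h(\mathbf{x}_{t,i})|\le \gamma_{t-1}\|\mathbf{g}(\mathbf{x}_{t,i};\boldsymbol{\theta}_{t-1})/\sqrt{m}\|_{\mathbf{Z}_{t-1}^{-1}}+e_t$, and then add the exploration bonus term of $u_{t,i}$, itself bounded by $\gamma_{t-1}\|\mathbf{g}(\mathbf{x}_{t,i};\boldsymbol{\theta}_{t-1})/\sqrt{m}\|_{\mathbf{Z}_{t-1}^{-1}}$, to reach the claimed $|u_{t,i}-h(\mathbf{x}_{t,i})|\le 2\gamma_{t-1}\|\mathbf{g}(\mathbf{x}_{t,i};\boldsymbol{\theta}_{t-1})/\sqrt{m}\|_{\mathbf{Z}_{t-1}^{-1}}+e_t$. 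I expect the main obstacle to be the bookkeeping that makes every neural-network-versus-linearization discrepancy---the Taylor remainder, the gradient drift $\|\mathbf{g}(\mathbf{x};\boldsymbol{\theta}_{t-1})-\mathbf{g}(\mathbf{x};\boldsymbol{\theta}_0)\|$, and the perturbation of $\mathbf{Z}_{t-1}$ relative to the $\boldsymbol{\theta}_0$ feature Gram matrix---simultaneously dominated by the designed correction factors $\Gamma_{1,t},\Gamma_{2,t},\Gamma_{3,t}$ and the residual $e_t$; this is exactly where Condition~1's polynomial lower bound on the width $m$ is consumed and where the several high-probability events from the cited Lemmas of \cite{cao2019} and \cite{neuralucbzhou20a} together with the martingale bound must be intersected by a union bound.
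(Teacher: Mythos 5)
This statement is not proved in the paper at all: it is Lemma 1 of \cite{pmlr-v202-hwang23a}, imported verbatim into the appendix under ``Theoretical Results from Other Works,'' so there is no in-paper proof to compare against. Your sketch is a faithful reconstruction of the argument used in that cited source (and in the NeuralUCB analysis it builds on): NTK linearization of $h$ via Lemma 5.1 of \cite{neuralucbzhou20a}, confinement of the gradient-descent iterates, Taylor-remainder and gradient-drift control via \cite{cao2019}, the split into a geometrically decaying optimization gap and a self-normalized-martingale statistical gap matching the two brackets of $\gamma_t$, and a final triangle inequality — so it is essentially the same approach as the original proof, correctly outlined.
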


\begin{lem}
\label{Lemma 2} (Lemma 2 in \cite{pmlr-v202-hwang23a})

For any \(\Delta \in (0,1)\), suppose the width of the neural network \(m\) satisfies Condition 1. If \(\eta_1   \leq C_1(T K m L + m\lambda_1)^{-1}\), and \(\lambda_1 \geq C_2 L K\), for positive constants \(C_1, C_2\) with \(C_2 \geq \sqrt{\max_{t,i} \left\|\mathbf{g}(\mathbf{x}_{t,i};\boldsymbol{\theta}_{t-1})/\sqrt{m}\right\|_2^2/L}\), then with probability at least \(1 - \Delta\), for some \(C_3 > 0\),
\begin{align*}
   &\sum_{t=1}^{T} \sum_{i \in S_t} \left\|\mathbf{g}(\mathbf{x}_{t,i};\boldsymbol{\theta}_{t-1})/\sqrt{m}\right\|^2_{\mathbf{Z}_{t-1}^{-1}} 
   \\
   &\leq 2\widetilde{d}\log(1+T N/\lambda_1) + 2 + C_3 T^{\frac{5}{3}} K^{\frac{5}{3}} L^{4} \lambda_1^{-\frac{1}{6}} m^{-\frac{1}{6}} \sqrt{\log m}. 
\end{align*}
\end{lem}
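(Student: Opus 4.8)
The plan is to prove the bound in two stages. First I convert the sum of squared Mahalanobis norms into a telescoping log-determinant ratio by an elliptical-potential argument adapted to the rank-$K$ per-round update. Second, I relate that log-determinant, which is assembled from gradients evaluated at the trained weights $\boldsymbol{\theta}_{t-1}$, to the effective dimension $\widetilde{d}$ of the fixed NTK matrix $\mathbf{H}$. The factor $2$ in front of the main term comes from stage one, while the explicit $m^{-1/6}$ correction $C_3T^{5/3}K^{5/3}L^{4}\lambda_1^{-1/6}m^{-1/6}\sqrt{\log m}$ is generated in stage two and vanishes as $m\to\infty$; the additive constant $2$ absorbs the $O(1)$ slack in the NTK approximation.

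For stage one, recall $\mathbf{Z}_t=\mathbf{Z}_{t-1}+\sum_{i\in S_t}\mathbf{g}(\mathbf{x}_{t,i};\boldsymbol{\theta}_{t-1})\mathbf{g}(\mathbf{x}_{t,i};\boldsymbol{\theta}_{t-1})^T/m$ with $\mathbf{Z}_0=\lambda_1\mathbf{I}$, so within each round the matrix advances by a rank-$K$ increment. Setting $\mathbf{M}_t:=\mathbf{Z}_{t-1}^{-1/2}\big(\sum_{i\in S_t}\mathbf{g}(\mathbf{x}_{t,i};\boldsymbol{\theta}_{t-1})\mathbf{g}(\mathbf{x}_{t,i};\boldsymbol{\theta}_{t-1})^T/m\big)\mathbf{Z}_{t-1}^{-1/2}$, the matrix-determinant identity gives $\log(\det\mathbf{Z}_t/\det\mathbf{Z}_{t-1})=\log\det(\mathbf{I}+\mathbf{M}_t)$, while $\operatorname{tr}(\mathbf{M}_t)=\sum_{i\in S_t}\|\mathbf{g}(\mathbf{x}_{t,i};\boldsymbol{\theta}_{t-1})/\sqrt{m}\|_{\mathbf{Z}_{t-1}^{-1}}^2$. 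The hypothesis $\lambda_1\geq C_2LK$, combined with the gradient-norm bound $\|\mathbf{g}(\mathbf{x}_{t,i};\boldsymbol{\theta}_{t-1})\|_2\leq\bar{C}\sqrt{mL}$ (Lemma~\ref{lemma B_3}) and $C_2\geq\sqrt{\max_{t,i}\|\mathbf{g}(\mathbf{x}_{t,i};\boldsymbol{\theta}_{t-1})/\sqrt{m}\|_2^2/L}$, is chosen precisely so that $\mathbf{M}_t\preceq\mathbf{I}$; the extra factor $K$ relative to the single-arm bandit accounts for the $K$ rank-one terms batched into one round. With all eigenvalues of $\mathbf{M}_t$ in $[0,1]$ and $\log(1+x)\geq x/2$ on that interval, $\log\det(\mathbf{I}+\mathbf{M}_t)\geq\tfrac{1}{2}\operatorname{tr}(\mathbf{M}_t)$, so $\sum_{i\in S_t}\|\mathbf{g}(\mathbf{x}_{t,i};\boldsymbol{\theta}_{t-1})/\sqrt{m}\|_{\mathbf{Z}_{t-1}^{-1}}^2\leq 2\log(\det\mathbf{Z}_t/\det\mathbf{Z}_{t-1})$; telescoping over $t\in[T]$ yields $\sum_{t,i}\|\cdot\|^2\leq 2\log(\det\mathbf{Z}_T/\det(\lambda_1\mathbf{I}))$.

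For stage two I bridge from trained-weight gradients to initialization gradients and then to the NTK. The weight-displacement bound $\|\boldsymbol{\theta}_{t-1}-\boldsymbol{\theta}_0\|_2\leq 2\sqrt{(t-1)/(m\lambda_1)}$ (Lemma~\ref{lemma B_2}) keeps every iterate inside the ball where the Jacobian is nearly constant, so each $\mathbf{g}(\mathbf{x}_{t,i};\boldsymbol{\theta}_{t-1})$ differs from $\mathbf{g}(\mathbf{x}_{t,i};\boldsymbol{\theta}_0)$ by an amount shrinking in $m$ (Lemma~\ref{lemma B_3} and the feature-stability estimates of Cao and Gu). Propagating these perturbations through the log-determinant over all $TK$ updates replaces $\mathbf{Z}_T$ by the initialization Gram matrix $\mathbf{Z}_T^0=\lambda_1\mathbf{I}+\sum_{t}\sum_{i\in S_t}\mathbf{g}(\mathbf{x}_{t,i};\boldsymbol{\theta}_0)\mathbf{g}(\mathbf{x}_{t,i};\boldsymbol{\theta}_0)^T/m$ at the cost of exactly the $T^{5/3}K^{5/3}L^{4}\lambda_1^{-1/6}m^{-1/6}\sqrt{\log m}$ term. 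Sylvester's identity rewrites $\log(\det\mathbf{Z}_T^0/\det(\lambda_1\mathbf{I}))=\log\det(\mathbf{I}+\mathbf{G}_S/\lambda_1)$, where $\mathbf{G}_S$ is the $TK\times TK$ empirical NTK Gram matrix on the played contexts. Since $\mathbf{G}_S$ is a principal submatrix of the full $TN\times TN$ Gram matrix, which is close to $\mathbf{H}$ for large $m$, monotonicity of $\log\det$ and the definition of $\widetilde{d}$ give $\log\det(\mathbf{I}+\mathbf{G}_S/\lambda_1)\leq\log\det(\mathbf{I}+\mathbf{H}/\lambda_1)+o(1)=\widetilde{d}\log(1+TN/\lambda_1)+o(1)$. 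Combining the two stages delivers the claimed inequality.

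The hard part will be the stage-two perturbation bookkeeping: I must verify uniformly over $t\in[T]$ that the trained weights stay in the region where the Jacobian-stability lemmas apply, and then control the accumulation of $TK$ individually small gradient perturbations inside a single $\log\det$ so that the net error is only $m^{-1/6}$ and the exponents on $T,K,L,\lambda_1$ come out exactly as stated. Tracking the combinatorial factor $K$ is the delicate point, since it enters both through the $TK$ summed features in stage two and through the requirement $\lambda_1\geq C_2LK$ underpinning stage one; the two stages must therefore be calibrated against the same $\lambda_1$ scaling. By contrast, once $\mathbf{M}_t\preceq\mathbf{I}$ is established, the elliptical-potential telescoping of stage one is essentially routine.
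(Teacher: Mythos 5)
The paper does not actually prove this statement: it is imported verbatim as Lemma 2 of \cite{pmlr-v202-hwang23a} and listed under ``Theoretical Results from Other Works,'' so there is no in-paper proof to compare against. Your two-stage outline---a rank-$K$ elliptical-potential/log-determinant telescoping, followed by a perturbation argument that swaps trained-weight gradients for initialization gradients and then invokes the definition of $\widetilde{d}$ via the NTK Gram matrix---is precisely the route taken in that cited source, so relative to the de facto proof your approach is the same. The one step to tighten is your claim that the hypotheses force $\mathbf{M}_t \preceq \mathbf{I}$: as stated they only give $\operatorname{tr}(\mathbf{M}_t) \leq C_2$, so the spectral bound requires calibrating the absolute constant ($C_2 \leq 1$, or enlarging $\lambda_1$ by a constant factor), a piece of bookkeeping that the original proof must also handle and that does not affect the stated conclusion.
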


\begin{cor}
\label{Corollary_1}(Corollary 1 in \cite{pmlr-v202-hwang23a}). With probability at least $1 - \Delta$
\begin{equation*}
    u_{t,i} + e_t \geq h(x_{t,i}) . 
\end{equation*}
where $u_{t,i} = f(\mathbf{x}_{t,i};\boldsymbol{\theta}_{t-1}) + \gamma_t || \mathbf{g}(\mathbf{x}_{t,i};\boldsymbol{\theta}_{t-1})/ \sqrt{m} || _{\mathbf{Z}_{t-1}^{-1}}$ and

$e_t := C_3\gamma_{t-1} t^{\frac{1}{6}} K^{\frac{1}{6}} L^{\frac{7}{2}} m^{-\frac{1}{6}}\lambda_1^{-\frac{2}{3}} \sqrt{\log m}
+ C_4 t^{\frac{2}{3}} K^{\frac{2}{3}} m^{-\frac{1}{6}}\lambda_1^{-\frac{2}{3}} \sqrt{\log m}.$
\end{cor}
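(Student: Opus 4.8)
The plan is to obtain the stated upper confidence bound as the one-sided (lower-tail) consequence of the concentration result already packaged in Lemma~\ref{lemma 1}. The only substantive ingredient is the \emph{prediction-error bound}: on the $1-\Delta$ event of Lemma~\ref{lemma 1}, for every $t\in[T]$ and $i\in[N]$,
\[
h(\mathbf{x}_{t,i}) - f(\mathbf{x}_{t,i};\boldsymbol{\theta}_{t-1}) \le \gamma_{t-1}\,\|\mathbf{g}(\mathbf{x}_{t,i};\boldsymbol{\theta}_{t-1})/\sqrt{m}\|_{\mathbf{Z}_{t-1}^{-1}} + e_t .
\]
This is exactly the lower tail of the estimate in Lemma~\ref{lemma 1}; the symmetric factor of two appearing there arises because the full confidence term $\gamma_t\|\mathbf{g}/\sqrt m\|_{\mathbf{Z}_{t-1}^{-1}}$ that the UCB $u_{t,i}$ adds to the raw prediction $f$ is combined with this radius, whereas the Corollary only uses the lower tail. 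I would therefore not invoke the two-sided absolute-value form directly (it gives the weaker $h\le u_{t,i}+2\gamma_{t-1}\|\mathbf{g}/\sqrt m\|_{\mathbf{Z}_{t-1}^{-1}}+e_t$) but the one-sided version underlying it.

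Next I would record that the scaling factor $\gamma_t$ of Equation~(\ref{eq_8}) is non-decreasing in $t$: the log-determinant $\log(\det\mathbf{Z}_t/\det\lambda_1\mathbf{I})$ grows because $\mathbf{Z}_t = \mathbf{Z}_{t-1}+\sum_{i\in S_t}\mathbf{g}\mathbf{g}^T/m \succeq \mathbf{Z}_{t-1}$, and each remaining factor ($\Gamma_{1,t},\Gamma_{2,t},\Gamma_{3,t}$, the $(\lambda_1+C_1tKL)$ term, and $\sqrt{tK/\lambda_1}$) is a positive power of $t$, so $\gamma_{t-1}\le\gamma_t$. Writing $u_{t,i}=f(\mathbf{x}_{t,i};\boldsymbol{\theta}_{t-1})+\gamma_t\|\mathbf{g}(\mathbf{x}_{t,i};\boldsymbol{\theta}_{t-1})/\sqrt m\|_{\mathbf{Z}_{t-1}^{-1}}$ and using non-negativity of the Mahalanobis norm together with this monotonicity gives
\begin{align*}
u_{t,i}+e_t &= f(\mathbf{x}_{t,i};\boldsymbol{\theta}_{t-1}) + \gamma_t\|\mathbf{g}(\mathbf{x}_{t,i};\boldsymbol{\theta}_{t-1})/\sqrt m\|_{\mathbf{Z}_{t-1}^{-1}} + e_t \\
&\ge f(\mathbf{x}_{t,i};\boldsymbol{\theta}_{t-1}) + \gamma_{t-1}\|\mathbf{g}(\mathbf{x}_{t,i};\boldsymbol{\theta}_{t-1})/\sqrt m\|_{\mathbf{Z}_{t-1}^{-1}} + e_t \\
&\ge h(\mathbf{x}_{t,i}),
\end{align*}
where the last step is the prediction-error bound above. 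Since all of this holds on the same event as Lemma~\ref{lemma 1}, the probability is at least $1-\Delta$, which is the Corollary.

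For completeness I would indicate where the prediction-error bound itself originates, as it is the only non-trivial input: it comes from the NeuralUCB-style decomposition of $f(\mathbf{x}_{t,i};\boldsymbol{\theta}_{t-1})-h(\mathbf{x}_{t,i})$ into a linearized ridge-regression term controlled by the self-normalized confidence ellipsoid, producing the $\gamma_{t-1}\|\mathbf{g}/\sqrt m\|_{\mathbf{Z}_{t-1}^{-1}}$ factor, plus network-linearization and gradient-descent optimization residuals collected into $e_t$. These are controlled via Lemma~\ref{lemma 5.1} (near-linear representability of $h$ on the NTK), Lemmas~\ref{lemma 4_1} and~\ref{lemma B_3} (function and gradient linearization near initialization), and Lemma~\ref{lemma B_2} (the iterates remain within $2\sqrt{t/(m\lambda_1)}$ of initialization under the stated $\eta_1\le C_2(TKmL+m\lambda_1)^{-1}$ and Condition~1). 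Because Lemma~\ref{lemma 1} already assembles all of these, the Corollary needs no new overparameterization hypotheses.

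The main obstacle here is bookkeeping rather than conceptual: correctly reconciling the index shift between the confidence radius (stated with $\gamma_{t-1}$) and the UCB (stated with $\gamma_t$), which the monotonicity of $\gamma_t$ resolves, and being careful to use the one-sided lower tail rather than the two-sided form so that no spurious factor of two leaks into the final bound. Everything else is a one-line rearrangement of Lemma~\ref{lemma 1}.
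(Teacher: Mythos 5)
The paper contains no proof of this statement to compare against: the corollary sits in the appendix section ``Theoretical Results from Other Works'' and is imported verbatim from \cite{pmlr-v202-hwang23a}, where it is Corollary~1. Your reconstruction is correct and is essentially the derivation in that cited source: you rightly work from the one-sided prediction-error bound $h(\mathbf{x}_{t,i})-f(\mathbf{x}_{t,i};\boldsymbol{\theta}_{t-1})\le \gamma_{t-1}\,\|\mathbf{g}(\mathbf{x}_{t,i};\boldsymbol{\theta}_{t-1})/\sqrt{m}\|_{\mathbf{Z}_{t-1}^{-1}}+e_t$ that underlies Lemma~\ref{lemma 1} (Lemma~1 of \cite{pmlr-v202-hwang23a}) rather than from its two-sided statement, which would indeed leave a spurious $2\gamma_{t-1}\|\cdot\|_{\mathbf{Z}_{t-1}^{-1}}$ term, and the monotonicity $\gamma_{t-1}\le\gamma_t$ correctly disposes of the index mismatch between the confidence radius and the definition of $u_{t,i}$.
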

\section{Further Experimental Details and Results} \label{sec:appendix_exp}
In this section, we first give some details about the MovieLens dataset and then present the explanation of the restaurant recommendation setup for the Yelp dataset. We also explain the offline version of our algorithm NeUClust whose pseudocode is given in Algorithm 2, which is the version which was used to run the experiments on the MovieLens and Yelp datasets. Next, we will explain the experimental details and hyperparameter tuning done for the algorithms. The experiments were done with a time horizon $T = 1000$ and $K=5$. Finally, we give some more experimental results for MovieLens and Yelp and clustering plots for the context space. 

\subsection{Details about MovieLens Dataset}
The MovieLens 25M dataset is composed of 25 million ratings from approximately 162,000 users for 62,000 movies. Each movie has genre data associated with it, where a movie can belong to one or many of the 20 genres listed such as action, comedy, horror etc. Each rating is uniquely identified by a user id and movie id pair and every rating given to a movie falls between 0.5 to 5 with 0.5 increments. As in \cite{elahi2023contextual}, we only consider ratings given to movies after 2015 and we only keep the users who have rated 200 movies or more which corresponds to approximately 5 million ratings, 57,000 movies and 10,000 users. 

\subsection{Restaurant Recommendation Setup }
The Yelp Open dataset comprises of nearly 7 million reviews, 2 million users and 150,000 businesses. Each business has categories data, which describes/specifies some characteristics of the business such as restaurant, mexican, burgers etc. In each round, there is an incoming user, and we need to select $K$ restaurants to recommend to the user from a total of approximately 50,000 restaurants. We use term frequency-inverse document frequency (tf-idf) to construct the contexts for the restaurants and cluster the restaurants using this category information. We filter some categories which do not appear often for a final context dimension of $d = 171$. We also construct the user preference vector using the ratings the user gave for the restaurants having these categories. We construct the mean of a base arm, base arm rewards and super arm rewards the same way as explained in Movie Recommendation Setup now using the restaurant context and user preference vector of the incoming user at round $t$.

\begin{algorithm}
\caption{NeUClust (Offline)}
\begin{algorithmic}[1]
	\STATE\textbf{Input: }{Number of rounds $T$, regularization parameters $\lambda_1$ and $\lambda_2$, step sizes $\eta_1$ and $\eta_2$, number of gradient descent steps $J$, network widths $m$ and $n$, network depths $L$ and $L_m$, number of clusters $M$, maximum number of iterations of clustering $i_c$, size of super arm $K$, norm parameter $S$.}
	\STATE Randomly initialize $\boldsymbol{\theta}_0$ and $\boldsymbol{\Theta}_0$ as described previously and $\mathbf{Z}_0 = \lambda_1 \mathbf{I}$ 
    \STATE Run $k$-means clustering on the observed contexts $\{\mathbf{x}_{i}\}_{i \in [N] }$ with $k=M$ and maximum number of iterations $i_c$

	\FOR{$t = 1, \ldots, T$}
        \STATE Observe $\{\mathbf{x}_{t,i}\}_{i \in [N] }$
    
    \FOR{$c = 1, \ldots, M$}
        \FOR{$j \in c $}    
    
        \STATE $v_{t,c_j} = f(\mathbf{x}_{t,c_j};\boldsymbol{\theta}_{t-1}) + \gamma_t || \mathbf{g}(\mathbf{x}_{t,c_j};\boldsymbol{\theta}_{t-1})/ \sqrt{m} || _{\mathbf{Z}_{t-1}^{-1}}$ 
        \ENDFOR
        
        \STATE Within cluster $c$, find top $K$ cluster elements with the highest $v$ value, $\mathbf{b}_c \in \mathbb{R}^K$ vector includes the indices of these elements 
        \STATE Initialize $\mathbf{V}_t = (0,0,...,0) \in \mathbb{R}^{M}$
        \FOR{$a \in \mathbf{b}_c$}
        \STATE $V_{t,c} \pluseq v_{t,c_a} + F(\mathbf{f}(\mathbf{x}_{t,\mathbf{b}_c};\boldsymbol{\theta}_{t-1}))$
        \ENDFOR
        
        \ENDFOR
        \STATE $c' = \argmax_c(\mathbf{V}_{t})$
        \STATE $S_t = \mathbf{b}_{c'} $

	\STATE Play super arm $S_t$ and observe base arm rewards $\{r_{t,i}\}_{i \in S_t}$ and super arm reward $R_t$

        
	\STATE Update $\mathbf{Z}_t = \mathbf{Z}_{t-1} + \sum_{i \in S_t}\mathbf{g}(\mathbf{x}_{t,i};\boldsymbol{\theta}_{t-1})\mathbf{g}(\mathbf{x}_{t,i};\boldsymbol{\theta}_{t-1})^T/m$
        \STATE Update weights $\boldsymbol{\theta}_t$ by minimizing the loss $\mathcal{L}(\boldsymbol{\theta})$ using gradient descent with step size $\eta_1$ for $J$ iterations.
        \STATE Update weights $\boldsymbol{\Theta}_t$ by minimizing the loss $\mathcal{L}(\boldsymbol{\Theta})$ using gradient descent with step size $\eta_2$ for $J$ iterations.
        \STATE Update $\gamma_t$ as explained in Eq. \ref{eq_8} 
	\ENDFOR

\end{algorithmic}
\label{Algortihm 2}
\end{algorithm}

\subsection{Baselines, Hyperparameters and Results}

The experiments are run with a GTX 1080 TI GPU model with 11 GB memory on a Linux operating system. For CC-MAB algorithm we use the implementation of \cite{pmlr-v108-nika20a} available in \url{https://github.com/Bilkent-CYBORG/ACC-UCB}. For $K$-LinUCB we make use of an exact oracle to make super arm selections and tune $\alpha$ using grid search between $0.01-1$. For NeuralMAB, we tune the learning rate using grid search between $0.001-0.1$ and use 50 epochs. For CN-UCB and NeUClust we do grid search for the learning rate $\eta_1$ over $0.001-0.1$ and use $\eta_1 = 0.001$ for both. We also tune $\eta_2$ over $0.001-0.1$ for NeUClust and use $\eta_2 = 0.001$ . We also do grid search for $\gamma$ over $0.1-1$ and use $\gamma=1$. We use a regularization parameter $\lambda_1 = 1$ for both NeUClust and CN-UCB and $\lambda_2=1$ for NeUClust. To estimate the base arm reward function, we use a network with a single hidden layer for a total of $p=420$ parameters for MovieLens dataset for NeUClust and CN-UCB and $p=17200$ parameters for Yelp dataset for NeUClust and CN-UCB. We train both NeUClust and CN-UCB for $J=40$ iterations for both datasets using stochastic gradient descent. For the super arm network for NeUClust, we use a one hidden layer network with $90$ parameters for MovieLens and $180$ parameters for Yelp. The reported results in Figures 2 and 3 for NeUClust are for 22 clusters for MovieLens and 20 clusters for Yelp. We run $k-$means clustering for $i_c = 300$.

Now we will present the super arm reward results for MovieLens and Yelp for the discussed algorithms. 

\begin{figure}[h]
\centering
\includegraphics[width=\linewidth]{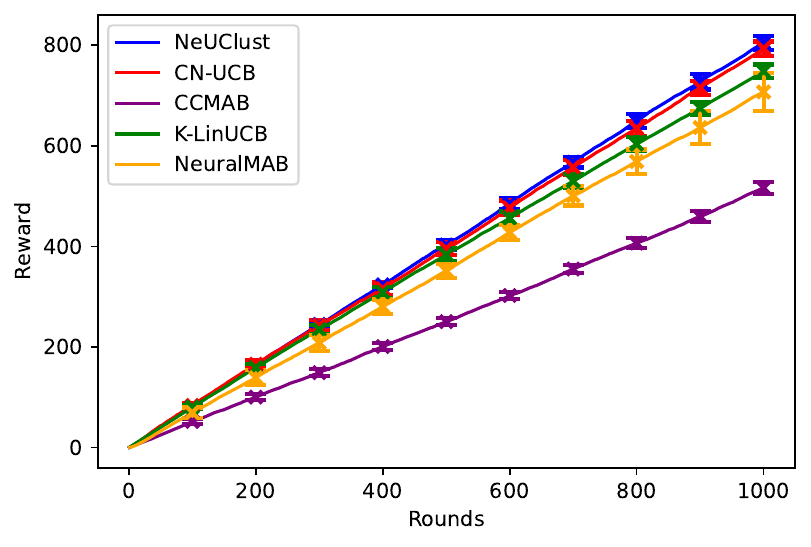}
\caption{Super arm reward vs rounds $(t)$ of NeUClust compared with other state-of-the art combinatorial and neural bandit algorithms for MovieLens}

\end{figure}

\begin{figure}[h]
\centering
\includegraphics[width=\linewidth]{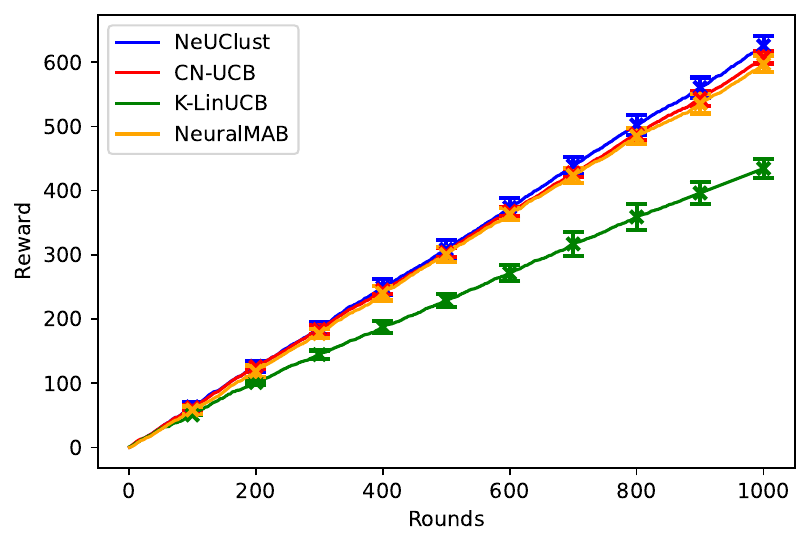}
\caption{Super arm reward vs rounds $(t)$ of NeUClust compared with other state-of-the art combinatorial and neural bandit algorithms for Yelp}

\end{figure}

We can see through Figures 4 and 5 that NeUClust also outperforms the other baseline algorithms in terms of super arm reward and this difference is more noticeable for the Yelp dataset. Similarly to the super arm regret plots, we plot the mean values of 5 independent runs for Yelp and 10 independent runs for MovieLens for all the algorithms and the error bars indicate $\pm$ std.

\begin{figure}[h]
\centering
\includegraphics[width=\linewidth]{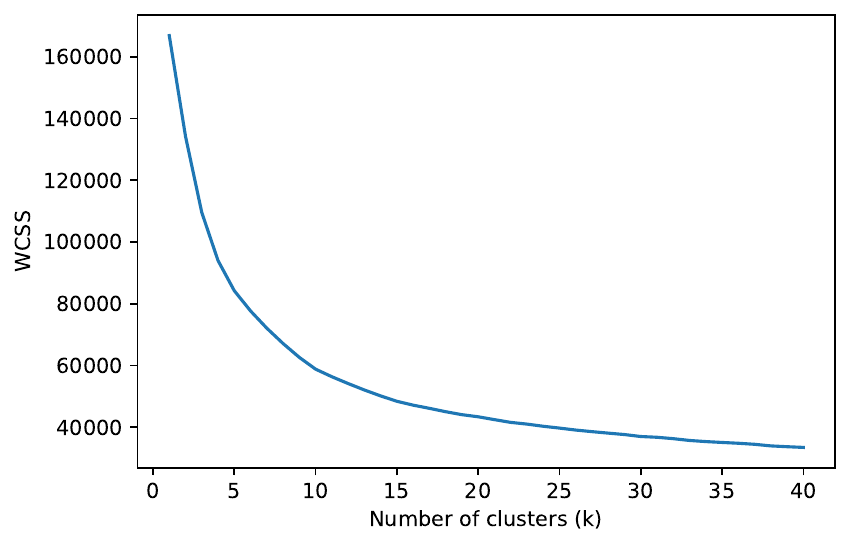}
\caption{Within Cluster Sum of Squares (WCSS) vs number of clusters plot for MovieLens dataset}

\end{figure}

\begin{figure}[h]
\centering
\includegraphics[width=\linewidth]{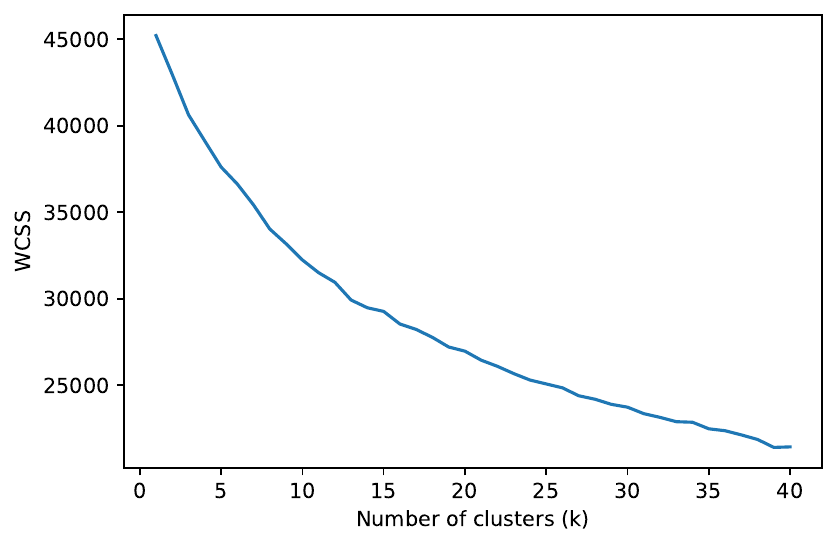}
\caption{Within Cluster Sum of Squares (WCSS) vs number of clusters plot for Yelp dataset}

\end{figure}

Now we focus on the clustering structure of the datasets, in particular we look at the Within Cluster Sum of Squares (WCSS) variation with the number of clusters to produce elbow plots which is a method to understand the optimal range for the number of clusters in the data. For the MovieLens dataset, as mentioned before we cluster over the genre preference vector for the users, which is a vector showing the average rating each user gave for the 20 movie genres. For the Yelp dataset, we instead cluster over the restaurant category data, to identify similar restaurants over 171 categories. It can be seen that the WCSS for MovieLens is much higher compared to Yelp, which is a bit surprising, given that the cluster dimension for Yelp is much larger than for MovieLens (171 vs 20). However, since we do some filtering for the category data and since the rating values are on average much larger than tf-idf values for the categories and also factoring in that there might be some outlier users for movie recommendation which do not specifically align with any of the clusters, these may be some of the main reasons for the main difference in the WCSS values for the two plots. Through Figures 6 and 7 we realize the optimal number of clusters for MovieLens lies somewhere in the range of 10-20 whereas for Yelp it seems to be somewhere in the 15-30 range. 

We will now present some results regarding the variation of the super arm regret and super arm reward for the Yelp and MovieLens datasets as we vary the number of clusters we use for NeUClust. We present the mean values of 5 independent runs for all $k$ values plotted. It should be noted that we made use of these plots to determine the value of $k$ we use to compare with the other algorithms in Figures 2,3,4,5 and to get a general intuition whether our observations from the elbow plots in Figures 6 and 7 are consistent with the results we get in this part.

\begin{figure}[t]
\centering
\includegraphics[width=\linewidth]{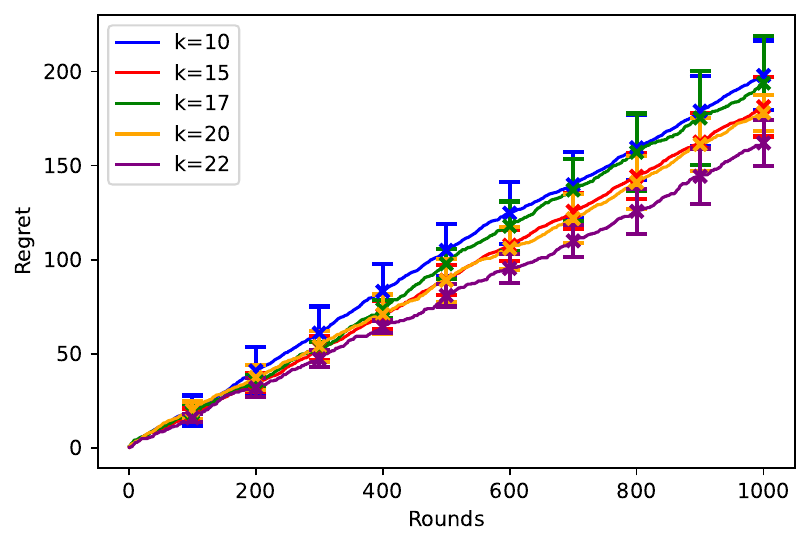}
\caption{Super arm regret vs rounds $(t)$ of NeUClust for different $k$ values for MovieLens}

\end{figure}

\begin{figure}[t]
\centering
\includegraphics[width=\linewidth]{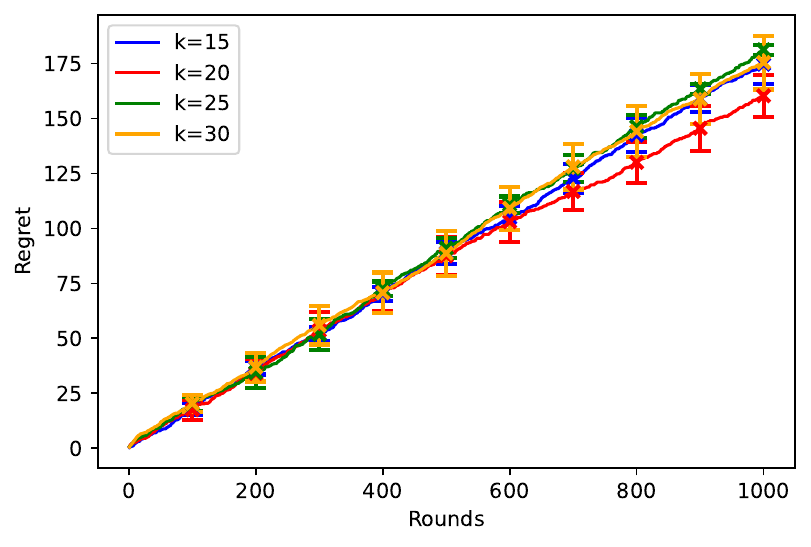}
\caption{Super arm regret vs rounds $(t)$ of NeUClust for different $k$ values for Yelp}

\end{figure}

These plots show that the optimal value of $k$ for MovieLens is around 22 clusters whereas for Yelp it is around 20 clusters. Even though the differences may seem quite small in some of the plots, they still display some trend and indicate how low number of clusters performs poorly overall which verifies our expectation from the elbow plots in Figures 6 and 7. Also it can be said that the regret plots give room for more understanding of the trends since the scale of the regret axis is lower compared to the reward axis.

\end{document}